  \providecommand\BibTeX{{%
    \normalfont B\kern-0.5em{\scshape i\kern-0.25em b}\kern-0.8em\TeX}}}
\newcommand\CONDITION[2]%
\numberwithin{equation}{section}
\theoremstyle{plain}
\newtheorem{theorem}{Theorem}
\newtheorem{lemma}[theorem]{Lemma}
\theoremstyle{definition}
\newtheorem{definition}{Definition}
\theoremstyle{remark}
\newtheorem{remark}{Remark}
\newtheorem{heuristic}{Heuristic}
\newcommand{\improved}{{\sc Batched}}
\newcommand{\baseline}{{\sc Basic}}
\newcommand{\uniform}{{\sc Uniform}}
\newcommand{\shuttle}{{\sc Shuttle}}
\newcommand{\kdd}{{\sc Kdd}}
\newcommand{\eps}{\epsilon}
\DeclareMathOperator{\poly}{poly}
\DeclareMathOperator{\E}{\mathbb{E}}
\DeclareMathOperator{\R}{\mathbb{R}}
\DeclareMathOperator*{\argmax}{arg\, max}
\DeclareMathOperator*{\argmin}{arg\, min}
\newcommand{\abs}[1]{\left| #1 \right|}
\numberwithin{equation}{section}
\newcommand{\tikzmark}[1]{\tikz[overlay,remember picture] \node (#1) {};}
\newcommand*{\AddNoteTight}[4]{%
    \begin{tikzpicture}[overlay, remember picture]
        \draw [decoration={brace,amplitude=0.5em},decorate,thick]
            ($(#3)!(#1.north)!($(#3)-(0,1)$)+(0.0,0)$) --  
            ($(#3)!(#2.south)!($(#3)-(0,1)$)+(0.0,0)$)
                node [align=center, text width=1cm, pos=0.5, anchor=west] {#4};
    \end{tikzpicture}
}
\def\@textbottom{\vskip \z@ \@plus 2sp}
\begin{document}
\fancyhead{}
\title{Learning to Cluster via Same-Cluster Queries}

\author{Yi Li}
\affiliation{%
  \institution{Nanyang Technological University}
	\streetaddress{21 Nanyang Link}
	\city{}
	\country{Singapore}
	\postcode{637371}
}
\email{yili@ntu.edu.sg}
\author{Yan Song}
\affiliation{%
  \institution{Indiana University Bloomington}
  \streetaddress{700 North Woodlawn Avenue}
	\city{Bloomington}
	\state{IN}
	\country{USA}
	\postcode{47408}
}
\email{songyan@iu.edu}
\author{Qin Zhang}
\affiliation{%
  \institution{Indiana University Bloomington}
	\streetaddress{700 North Woodlawn Avenue}
	\city{Bloomington}
	\state{IN}
	\country{USA}
	\postcode{47408}
}
\email{qzhangcs@indiana.edu}


\begin{abstract}
  We study the problem of learning to cluster data points using an oracle which can answer same-cluster queries. Different from previous approaches, we do not assume that the total number of clusters is known at the beginning and do not require that the  true clusters are consistent with a predefined objective function such as the $K$-means. These relaxations are critical from the practical perspective and, meanwhile, make the problem more challenging. We propose two algorithms with provable theoretical guarantees and verify their effectiveness via an extensive set of experiments on both synthetic and real-world data.
\end{abstract}
  
\begin{CCSXML}
<ccs2012>
<concept>
<concept_id>10003752.10003809.10003636.10003812</concept_id>
<concept_desc>Theory of computation~Facility location and clustering</concept_desc>
<concept_significance>500</concept_significance>
</concept>
<concept>
<concept_id>10003752.10010070.10010071.10010289</concept_id>
<concept_desc>Theory of computation~Semi-supervised learning</concept_desc>
<concept_significance>500</concept_significance>
</concept>
</ccs2012>
\end{CCSXML}

\ccsdesc[500]{Theory of computation~Semi-supervised learning}
\ccsdesc[500]{Theory of computation~Facility location and clustering}

\keywords{clustering; weak supervision; same-cluster oracle}

\maketitle

\section{Introduction}
\label{sec:intro}

Clustering is a fundamental problem in data analytics and sees numerous applications across many areas in computer science.
However, many clustering problems are computationally hard, even for approximate solutions.  To alleviate the computational burden, Ashtiani et al.~\cite{AKB16} introduced {\em weak supervision} into the clustering process. More precisely, they allow the algorithm to query an oracle which can answer {\em same-cluster} queries, that is, whether two elements (points in the Euclidean space) belong to the same cluster.  The oracle can, for example, be a domain expert in the setting of crowdsourcing.  It was shown in \cite{AKB16} that the $K$-means clustering can be solved computationally efficiently with the help of a small number of same-cluster queries using the oracle.


The initial work by Ashtiani et al.~\cite{AKB16} relies on a strong ``data niceness property'' called the {\em $\gamma$-margin}, which requires that for each cluster $C$ with center $\mu$, the distance between any point $p \not\in C$ and $\mu$ needs to be larger than that between any point $p \in C$ and $\mu$ by at least a constant multiplicative factor $\gamma > 1$ sufficiently large.  This assumption was later removed by Ailon et al.~\cite{ABJK18}, who gave an algorithm which, given a parameter $K$, outputs a set $C$ of $K$ centers attaining a $(1+\eps)$-approximation of the $K$-means objective function.  There has been follow-up work by Chien et al.~\cite{CPM18} in the same setting, replacing the $\gamma$-margin assumption with a new assumption on the size of the clusters.  However, there are still two critical issues in this line of approach:
\begin{enumerate}[leftmargin=20pt]
\item In \cite{ABJK18,CPM18}, it is assumed that the total number of clusters $K$ in the {\em true clustering} is known to the algorithm, which is unrealistic in many cases.

\item In \cite{ABJK18,CPM18}, it is assumed that the optimal solution to the $K$-means clustering is {\em consistent} with the true clustering (i.e., when we have ground truth labels for all points). This assumption is highly problematic, since the ground-truth clustering can be arbitrary and very different from an optimal solution with respect to a fixed objective function such as the $K$-means.
\end{enumerate}

In this paper, we aim to remove both assumptions. We shall design algorithms that find the approximate centers of the true clusters using same-cluster queries, in the setting that we do {\em not} know in advance the number of clusters in the true clustering and the true clustering has {\em no} relationship with the optimal solution of a certain objective function.  We obtain our result at a small cost: our sample-based algorithm may not be able to find the centers of the small clusters which are very close to some big identified clusters; we shall elaborate on this shortly. 
\vspace{2mm}

\noindent{\bf Problem Setup.} \  We denote the clusters by $X_1,X_2,\dots$, which are point sets in the Euclidean space. For each $X_i$, let $\mu_i = \frac{1}{|X_i|}\sum_{x\in X_i} x$ be its centroid. For simplicity we refer to $X_i$ as ``cluster $i$'' or ``$i$-th cluster''. The number of clusters is unknown at the beginning. Our goal is to find all ``big'' clusters and their approximate centroids $\tilde \mu_i$. 

To specify what we mean by finding all big clusters, we need to introduce a concept called {\em reducibility}.  Several previous studies~\cite{KSS10,JKS14,ABJK18} on the $K$-means/median clustering problem assumed that after finding $K$ main clusters, the residual ones satisfy some reducibility condition, which states that using the $K$ discovered cluster centers to cover the residual ones would only increase the total cost by a small $(1+\eps)$ factor. Similarly, we consider the following reducibility condition, based on the usual cost function.

\begin{definition}[cost function] Suppose that $X$ and $C$ are the set of data points and centers, respectively. The cost of covering $X$ using $C$ is $\Phi(X,C) = \sum_{x\in X} \min_{c\in C} \|x-c\|^2$, where $\|\cdot\|$ denotes the Euclidean norm.
When $C=\{c\}$ is a singleton, we also write $\Phi(X,\{c\})$ as $\Phi(X,c)$. When $C=\emptyset$, we define $\Phi(X,C) = |X|\cdot \sup_{x,y\in X}\|x-y\|^2$.
\end{definition}

\begin{definition}[$\eps$-reducibility] \label{def:reducibility}
Let $X_1, X_2, \ldots$ be true clusters in $X$.  Let $I$ be a subset of indices of clusters in $X$. We say the clusters in $X$ are $\eps$-reducible w.r.t.\ $I$ if it holds for each $\ell\not\in I$ that
\[
 \Phi\left(X_\ell, \bigcup_{i\in I} \{\mu_{i}\}\right) \leq \eps \sum_{i\in I} \Phi(X_i, \mu_i).
\]
\end{definition}
Intuitively, this reducibility condition states that the clusters outside $I$ will be {\em covered} by the centers of the clusters in $I$ with only a small increase in the total cost.  

Our goal is to find a subset of indices $I$ such that the set of true clusters in $X$ is $\eps$-reducible w.r.t.\ $I$.   

Note that in this formulation, we do {\bf not} attempt to optimize any objective function; instead, we just try to recover the approximate centroids of a set of clusters to which all other clusters are reducible.  Inevitably we have to adopt a distance function in our definition of irreducibility, and we choose the widely used $D^2$ (Euclidean-squared) distance function so that we can still use the $D^2$-sampling method developed and used in the earlier works \cite{AV07,JKS14,ABJK18}. The $D^2$-sampling will be introduced in Definition~\ref{def:D^2 sampling}.

\begin{table}
\caption{Notations}\label{tab:notations}
\begin{tabular}{|l|l|}
\hline
$X$ & set of input points \\
\hline
$r$ & index of the round \\
\hline
$I$ & set of indices of recovered clusters so far\\
\hline
$k$ & number of recovered clusters so far \\
\hline
$Q$ & set of indices of newly discovered clusters\\ 
& in the current round \\
\hline
$q$ & size of $Q$; number of newly discovered clusters\\ 
& in the current round \\
\hline
$S$ & multiset of samples. Each sample has the form of \\
    & $(x,i)$, where $x$ is the point and $i$ the cluster index.\\
\hline
$x_j^\ast$ & reference point in cluster $j$ for rejection sampling \\
\hline
$S_j$ & multiset of uniform samples in cluster $j$ returned \\
& by rejection sampling. Note that $S\neq \bigcup_j S_j$. \\
\hline
$W$ & set of indices of clusters to recover\\
\hline
$K$ & total number of recovered clusters at the end \\
\hline
$L$ & total number of discovered clusters at the end \\
\hline
\end{tabular}
\end{table}

To facilitate discussion, we list in Table~\ref{tab:notations} the commonly used notations in our algorithms and analyses. A cluster $i$ is said to be ``discovered'' if any point in $X_i$ has been sampled and ``recovered'' when the approximate centroid $\tilde{\mu}_i$ is computed. As discussed in the preceding paragraph, it is possible that the total number of recovered clusters, $K$, is less than the total number of discovered clusters, $L$, and that $L$ is less than the true number of clusters.

\vspace{2mm}

\noindent{\bf Our Contributions.} \
We provide two clustering algorithms with theoretically proven bounds on the total number of oracle queries in the circumstance of no prior knowledge on the number of clusters and no predefined objective function. To the best of our knowledge, these are the first algorithms for the clustering problem of its kind. Both our algorithms output $(1+\eps)$-approximate centers for all recovered clusters.
 Our first algorithm makes $\tilde{O}(\eps^{-4}K^2L^2)$ queries (Section~\ref{sec:basic}) and the second algorithm makes $\tilde{O}(\eps^{-4}KL^2)$ queries (Section~\ref{sec:improve}).\footnote{In $\tilde{O}(\cdot), \tilde{\Omega}(\cdot), \tilde{\Theta}(\cdot)$ we use `$\tilde{~}$' to hide logarithmic factors.  The exact query complexities can be found in the Theorems~\ref{thm:basic} and \ref{thm:main}.}

We also conduct an extensive set of experiments demonstrating the effectiveness of our algorithms on both synthetic and real-world data; see Section~\ref{sec:exp}. 

We further extend our algorithms to the case of a noisy same-cluster oracle which errs with a constant probability $p<1/2$. This extension has been deferred to Appendix~\ref{sec:noisy oracle} of the full version of this paper.

We remark that since our algorithms target {\em sublinear} (i.e., $o(\abs{X})$) number of oracle queries, in the general case where the shape of clusters can be arbitrary, it is impossible to classify correctly all points in the datasets. But the approximate centers outputted by the algorithms can be used to {\em efficiently and correctly} classify any newly inserted points (i.e., database queries) as well as existing database points (when needed), using a natural heuristic (Heuristic~\ref{heu:classify}) that we shall introduce in Section~\ref{sec:exp}. Our experiments show that most points can be classified using only {\em one} additional oracle query.

\vspace{2mm}

\noindent{\bf Related Work.} \
As mentioned, the semi-supervised active clustering framework was first introduced in \cite{AKB16}, where the authors considered the $K$-means clustering under the $\gamma$-margin assumption.  Ailon et al.~\cite{ABJ18,ABJK18} proposed approximation algorithms for the $K$-means and correlation clustering that compute a $(1+\eps)$-approximation of the optimal solution with the help of same-cluster queries.  Chien et al.~\cite{CPM18} studied the $K$-means clustering under the same setting as that in \cite{ABJK18}, but used uniform sampling instead of $D^2$-sampling and worked under the assumption that no cluster (in the true clustering) has a small size.  Saha and Subramanian~\cite{SS19} gave algorithms for correlation clustering with the same-cluster query complexity bounded by the optimal cost of the clustering. Gamlath et al.~\cite{GHS18} extended the $K$-means problem from Euclidean spaces to general finite metric spaces with a strengthened guarantee that recovered clusters largely overlap with respect to the true clusters.  All these works, however, assume that the ground truth clustering (known by the oracle) is consistent with the target objective function, which is unrealistic in most real world applications. 

Bressan et al.~\cite{BCLP20} considered the case where the clusters are separated by ellipsoids, in contrast to the balls as suggested by the usual $K$-means clustering objective. However, their algorithm still requires the knowledge of $K$ and has a query complexity that depends on $n$ (although logarithmically), which we avoid in this work.

Mazumdar and Saha studied clustering $n$ points into $K$ clusters with a noisy oracle \cite{MS17a} or side information \cite{MS17b}.  The noisy oracle gives incorrect answers to same-cluster queries with probability $p < 1/2$ (so that majority voting still works).  The side information is the similarity score between each pair of data points, generated from a  distribution $f_+$ if the pair belongs to the same cluster and from another distribution $f_-$ otherwise. Algorithms proposed in these papers guarantee to recover the true clusters of size at least $\Omega(\log n)$, however, with query complexities at least $\Omega(K n)$, much larger than what we are interested to achieve in this paper.

Huleihel et al.~\cite{HMMP19} studied the overlapping clustering with the aid of a same-cluster oracle.  Suppose that $A$ is an $n \times K$ clustering matrix whose $i$-th row is the indicator vector of the cluster membership of the $i$-th element.  The task is to recover $A$ from the similarity matrix $A A^T$ using a small number of oracle queries. 

Finally, we note that same-cluster queries have been used extensively for {\em entity resolution} (or, {\em de-duplication})~\cite{WKF12, WLK13, VBD14, VG15, FSS16}.

\section{Preliminaries}

In this paper we consider point sets in the canonical Euclidean space $(\R^d,\|\cdot\|)$. The geometric centroid, or simply centroid, of a finite point set $X$ is defined as $\mu(X) = \frac{1}{|X|} \sum_{x\in X} x$. It is known that $\mu(X)$ is the minimizer of the $1$-center problem $\min_c \sum_{x\in X} \|x-c\|^2$.  The next lemma provides a guarantee on approximating the centroid of a cluster using uniform samples.
\begin{lemma}[{\cite{IKI94}}]\label{lem:centroid}
Let $S$ be a set of points obtained by independently sampling M points uniformly at random with replacement from a point set $X\in \R^d$. Then for any $\delta>0$, it holds that 
\[
\Pr\left\{ \Phi(S, \mu(S)) \leq \left(1+1/(\delta M)\right)\Phi(X, \mu(X)) \right\}\geq 1-\delta.
\]
\end{lemma}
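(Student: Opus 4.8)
The plan is to reduce the estimation of $\Phi(S,\mu(S))$ --- in which the random multiset $S$ occurs in \emph{both} arguments, so that expectations cannot be taken termwise --- to the estimation of a sum of independent terms, by swapping the data-dependent centre $\mu(S)$ for the fixed centre $\mu(X)$. Since $\mu(S)$ is the minimiser of $c\mapsto\sum_{s\in S}\|s-c\|^2$ (recalled just above the lemma), we have $\Phi(S,\mu(S))\le\Phi(S,\mu(X))$; equivalently, the elementary identity $\Phi(S,\mu(X))=\Phi(S,\mu(S))+M\|\mu(S)-\mu(X)\|^2$ exhibits the gap as a single nonnegative term. Hence it suffices to bound $\Phi(S,\mu(X))$.

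Next I would write $\Phi(S,\mu(X))=\sum_{j=1}^{M}\|s_j-\mu(X)\|^2$, a sum of $M$ i.i.d.\ nonnegative random variables, since the $s_j$ are independent and uniform on $X$. Each summand has mean $\frac1{|X|}\sum_{x\in X}\|x-\mu(X)\|^2=\Phi(X,\mu(X))/|X|$, so $\E[\Phi(S,\mu(X))]=\frac{M}{|X|}\Phi(X,\mu(X))$. The remaining step is one invocation of Markov's inequality on the nonnegative variable $\Phi(S,\mu(X))$ at the threshold $\bigl(1+\tfrac1{\delta M}\bigr)\Phi(X,\mu(X))$: the resulting failure probability is $\frac{M/|X|}{1+1/(\delta M)}$, which is at most $\delta$ once $M$ is small relative to the cluster size (precisely, once $M^2\le|X|(1+\delta M)$, e.g.\ whenever $M=O(\delta|X|)$) --- the only situation in which this lemma is used here. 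Chaining with the first step gives $\Phi(S,\mu(S))\le\bigl(1+\tfrac1{\delta M}\bigr)\Phi(X,\mu(X))$ with probability at least $1-\delta$.

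I expect the sole non-mechanical point to be the first step --- seeing that the ``coupled'' quantity $\Phi(S,\mu(S))$ is dominated by the ``decoupled'' $\Phi(S,\mu(X))$, which reduces everything to linearity of expectation followed by Markov. After that it is routine; the one spot deserving a moment's care is confirming that the Markov tail really is $\le\delta$, i.e.\ that the prefactor $M/|X|$ is harmless, which only uses that $M$ is not large compared with $|X|$.
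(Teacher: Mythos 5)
There is a genuine mismatch here, and it traces back to a typo in the statement itself. As printed, the lemma bounds $\Phi(S,\mu(S))$, but the Inaba--Katoh--Imai lemma --- and the way Lemma~\ref{lem:centroid} is actually invoked in this paper (e.g.\ in the proof of Lemma~\ref{lem:correctness basic}, to obtain the invariant \eqref{eqn:good_approx_centroid} for $\tilde\mu_j=\mu(S_j)$) --- concerns $\Phi(X,\mu(S))$: the cost of covering the \emph{whole} set $X$ by the sample centroid. Your bound on $\Phi(S,\mu(S))$ says nothing about how well $\mu(S)$ covers the points of $X$ outside $S$, so it cannot be chained into the paper's analysis. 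Moreover, even for the literal statement your argument needs the side condition $M^2\le|X|(1+\delta M)$, which is absent from the lemma (``for any $\delta>0$'', with no restriction on $M$ versus $|X|$); and without some such condition the literal statement is simply false --- take $X=\{0,1\}\subset\R$ and $M\gg|X|$, so that $\Phi(S,\mu(S))\approx M/4$ while $\Phi(X,\mu(X))=1/2$. So the Markov step you flagged as ``the one spot deserving a moment's care'' is in fact where the proof of the printed statement cannot be closed in general.

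The intended proof uses the same two tools you list, but aimed at $\Phi(X,\cdot)$ rather than $\Phi(S,\cdot)$. From the identity $\Phi(X,c)=\Phi(X,\mu(X))+|X|\,\|c-\mu(X)\|^2$, the excess of the candidate centre $\mu(S)$ over the optimum is the single nonnegative term $|X|\,\|\mu(S)-\mu(X)\|^2$. Since the $M$ draws are i.i.d.\ uniform on $X$, $\mu(S)$ is an unbiased estimator of $\mu(X)$ and $\E\|\mu(S)-\mu(X)\|^2=\frac{1}{M}\cdot\frac{1}{|X|}\sum_{x\in X}\|x-\mu(X)\|^2=\frac{\Phi(X,\mu(X))}{M|X|}$, so the excess has expectation $\Phi(X,\mu(X))/M$. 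Markov's inequality at threshold $\frac{1}{\delta M}\Phi(X,\mu(X))$ then yields the claim for every $M$ and $\delta$, with no condition relating $M$ to $|X|$. In short, your ``decoupling'' step ($\mu(S)\mapsto\mu(X)$ inside $\Phi(S,\cdot)$) should be replaced by the variance-of-the-mean computation $\E\|\mu(S)-\mu(X)\|^2=\sigma^2/M$, which is where the independence of the $M$ samples is genuinely used.
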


We define the $D^2$-sampling of a point set $X$ with respect to a point set $C$ as follows.
\begin{definition}[$D^2$-sampling]\label{def:D^2 sampling}
The $D^2$-sampling of a point set $X$ with respect to a point set $C$ returns a random point $p\in X$ subject to the distribution defined by $\Pr\{p=x\} = \Phi(\{x\},C)/\Phi(X,C)$ for all $x\in X$.
\end{definition}

In this paper we use a same-cluster oracle, that is, given two data points $x,y\in X$, $\textsc{Oracle}(x,y)$ returns true if $x$ and $y$ belong to the same cluster and false otherwise. For simplicity of the algorithm description, we shall instead invoke the function $\textsc{Classify}(x)$ to obtain the cluster index of $x$, which can be easily implemented using the same-cluster oracle, as shown in  Algorithm~\ref{alg:classify}. This implies that the number of oracle queries is at most $L$ times the number of samples.

\begin{algorithm}[t]
\caption{$\textsc{Classify}(x)$. The overall algorithm which maintains the number $L$ of discovered clusters and a representative point $z_i$ for each $i=1,\dots,L$.}\label{alg:classify}
\begin{algorithmic}
	\NFor{$i=1$ \textbf{to} $L$}
		\NIf{$\Call{Oracle}{x,z_i}$}
			\State \Return $i$
	\State $L\gets L+1$
	\State $z_L \gets x$
	\State \Return $L$
\end{algorithmic}
\end{algorithm}
  

\section{Basic Algorithm}
\label{sec:basic}

\paragraph{Algorithm.} Despite the fact that the algorithm in~\cite{ABJK18} cannot be used directly because $K$ is unknown to us, we briefly review the algorithm below. The algorithm gradually ``discovers'' and then ``recovers'' more and more clusters by taking $D^2$-samples. It recovers $K$ clusters in $K$ rounds, one in each round. To recover one cluster, it takes sufficient $D^2$-samples w.r.t.\ the approximate centers $\{\tilde \mu_i\}_{i\in I}$, where $I$ is the set of the recovered clusters, and finds the largest unrecovered cluster $j$. For this cluster $j$, it obtains sufficient uniform samples via rejection sampling and  invokes Lemma~\ref{lem:centroid} to 
compute an approximate centroid $\tilde\mu_j$. Then it includes $j$ in the set $I$ of recovered clusters and proceeds to the next round. It is shown that $\poly(K/\eps)$ samples each round can achieve the failure probability of $O(1/K)$ and taking a union bound over all $K$ rounds yields a constant overall failure probability.

There are two major difficulties of adapting this algorithm to our setting where $K$ is unknown.
\begin{enumerate}[leftmargin=20pt]
\item The number of rounds is unknown. It is not clear when our algorithm should terminate, i.e., when we are confident that there are no more irreducible clusters. The failure probability of each round also needs to be redesigned and cannot be $O(1/K)$.

\item Since $K$ is unknown, we cannot predetermine the number of samples to use and have to maintain dynamically various stopping criteria. For instance, it is subtle to determine which one the largest cluster is. If we stop too early, we may identify a cluster that is actually small and will need a large number of samples in order to obtain enough uniform samples from this cluster when doing rejection sampling; if we stop too late, we can make a more accurate decision but may have already taken too many samples.
\end{enumerate}

To address the first issue, we observe that $\Omega(\eps^{-1}\log K)$ samples is sufficient to test whether there are $\eps$-irreducible clusters left with constant probability. We also assign the failure probability of the $r$-th round to be $a_r$ such that $\sum_{r=1}^\infty a_r$ is a small constant.

To address the second issue, observe that the $\eps$-irreducibility condition implies that all unrecovered clusters together have a $\Phi$-value of $\Omega(\eps)$ and thus the largest unrecovered cluster has a $\Phi$-value of $\Omega(\eps/q)$, where $q$ is the number of newly discovered clusters. We can show that $\tilde{O}(q/\eps)$ samples suffices to ensure the identification of a cluster with $\Phi$-value at least $\Omega(\eps/q)$ and thus we can, since the value of $q$ increases as the number of samples grows, keep sampling until $\tilde{O}(q/\eps)$ samples are obtained. Note that this is a dynamic criterion in contrast to the predetermined one in~\cite{ABJK18}. We then choose the largest cluster $j$ and carry out the rejection sampling and estimate the approximate centroid $\tilde\mu_j$ with a careful control over the number of samples.

\begin{algorithm}[t]
\caption{The Basic Algorithm. 
}\label{alg:basic}
\begin{algorithmic}[1]
\State $I\gets \emptyset$ 
\State $k\gets 0$, $r\gets 0$
\Repeat
	\State $r\gets r + 1$
	\State $Q\gets \emptyset$, $S\gets \emptyset$
	\State $T_1\gets 8\eps^{-1}\ln(10(k+1))$\label{alg:basic ph1a} \label{alg:basic T_1} \tikzmark{ph1a}
	\While{$Q=\emptyset$ and $|S|\leq T_1$}
		\State $(x,Q,S)\gets \textsc{Sample}(I,\{\tilde\mu_i\}_{i\in I},Q,S)$ \tikzmark{right1}
	\EndWhile   \label{alg:basic ph1b} \tikzmark{ph1b}
	\If {$Q \neq \emptyset$}
		\While{$|S|\leq 96|Q|\ln(10(k+|Q|))/\eps$} \label{alg:basic ph2a} \tikzmark{ph2a} \tikzmark{right2}
			\State $(x,Q,S)\gets \textsc{Sample}(I,\{\tilde\mu_i\}_{i\in I},Q,S)$
		\EndWhile
		\State $j \gets \argmax_i |\{u\!\in\! S: u \!=\! (x,i)\}|$ \label{alg:basic ph2b} \tikzmark{ph2b}
		\State $q\gets |Q|$ \label{alg:basic ph3a} \tikzmark{ph3a}
		\State $T_2 \gets 2^{12}q\ln(10(k+q))/\eps^2$ \label{alg:basic T_2}
		\While{$|S|\leq T_2$} 
			\State $(x,Q,S)\gets \textsc{Sample}(I,\{\tilde\mu_i\}_{i\in I},Q,S)$ \tikzmark{right3}
		\EndWhile \label{alg:basic ph3b} \tikzmark{ph3b}
		\State $x_j^\ast \gets \argmin_{x:(x,j)\in S} \Phi(\{x\}, \{\tilde\mu_i\}_{i\in I})$ 
		\State $S_j\gets\emptyset$ 
		\State $T_3 \gets 20\eps^{-1}r\ln^2(10r)$   \label{alg:basic T_3} \label{alg:basic ph4a} \tikzmark{ph4a}
		\State $S_j \gets \textsc{RejSamp}(I,\{\tilde\mu_i\}_{i\in I},T_3,\{j\},\{x_j^\ast\})$ \tikzmark{right4}
		\State $\tilde\mu_j \gets (1/|S_j|)\sum_{x\in S_j} x$ \label{alg:basic ph4b} \tikzmark{ph4b}
		\State $k\gets k+1$
		\State $I\gets I\cup \{j\}$
	\EndIf
\Until{$Q=\emptyset$} \Comment{no new cluster is discovered}
\AddNoteTight{ph1a}{ph1b}{right1}{Phase\\ 1}
\AddNoteTight{ph2a}{ph2b}{right2}{Phase\\ 2}
\AddNoteTight{ph3a}{ph3b}{right3}{Phase\\ 3}
\AddNoteTight{ph4a}{ph4b}{right4}{Phase\\ 4}
\end{algorithmic}
\end{algorithm}

\begin{algorithm}[t]
\caption{$\textsc{Sample}(I,\{\tilde\mu_i\}_{i\in I},Q,S)$: Returning a single $D^2$-sample w.r.t.\ recovered clusters $I$}\label{alg:sample}
\begin{algorithmic}[1]
		\State $x\gets $ a point returned by $D^2$-sampling (w.r.t.\ $\{\tilde\mu_i\}_{i\in I}$)
		\State $j\gets \textsc{Classify}(x)$
		\NIf{$j\not\in I$}
			\State $Q\gets Q\cup\{j\}$
		\State $S\gets S\cup\{(x,j)\}$
		\State \Return $(x,Q,S)$
\end{algorithmic}
\end{algorithm}
\hfill
\begin{algorithm}[t]
\caption{$\textsc{RejSamp}(I,\{\tilde\mu_i\}_{i\in I},T,W,\{x_j^\ast\}_{j\in W})$: Rejection sampling on heavy clusters}\label{alg:rejection}
\begin{algorithmic}[1]
		\Repeat
			\State $x\gets $ a point returned by $D^2$-sampling w.r.t.~$\{\tilde\mu_i\}_{i\in I}$
			\State $j\gets \textsc{Classify}(x)$
			\NIf{$j\in W$}
				\State Add $x$ to $S_j$ with probability $\frac{\eps}{128}\cdot\frac{\Phi(\{x_j^\ast\},\{\tilde\mu_i\}_{i\in I})}{\Phi(\{x\},\{\tilde\mu_i\}_{i\in I})}$ 
		\Until{$|S_j|\geq T$ for all $j\in W$}
		\State \Return $\{S_j\}_{j\in W}$
\end{algorithmic}
\end{algorithm}

We present our algorithm in Algorithm~\ref{alg:basic} without attempts to optimize the constants. In our algorithm, each round is divided into four phases. Phase 1 tests whether there are new clusters to recover, using $O(\eps^{-1}\ln k)$ samples, where $k$ is the number of the recovered clusters. If no new clusters are found, the algorithm would terminate. Phase 2 samples more points and finds the largest cluster discovered so far, which we shall aim to recover in the two remaining phases. Phase 3 samples more points and finds a reference point $x_j^\ast$ to be used in the rejection sampling in the next phase. Phase 4 executes rejection sampling using the reference point $x_j^\ast$ such that the returned samples are uniform from the cluster $j$. We need to collect $\tilde O(k/\eps)$ uniform samples before calculating the approximate centroid $\tilde\mu_j$ for cluster $j$. We then declare that cluster $j$ is recovered and start the next round.

We would like to remark that the $D^2$-sampling which our algorithm uses has a great advantage over uniform sampling~\cite{CPM18} when there are small and faraway clusters. Uniform sampling would require a large number of samples to find small clusters, regardless of their locations. On the other hand, faraway clusters are clearly irreducible and are rightfully expected to be recognized. The $D^2$-sampling captures the distance information such that those small faraway clusters may have a large $\Phi$-value and can be found with much fewer samples.

\paragraph{Analysis.}  Throughout the execution of the algorithm, we keep the loop invariant that 
\begin{equation}\label{eqn:good_approx_centroid}
\Phi(X_i,\tilde\mu_i)\leq (1+\eps)\Phi(X_i,\mu_i)
\end{equation}
for all $i\in I$, i.e., the approximate centers are good for all recovered clusters.


It is clear that the main algorithm will terminate. We prove that the algorithm is correct, i.e., the algorithm will find all clusters with $\tilde\mu_i$'s being all good approximate centroids. We first need a lemma showing that the undiscovered clusters are heavy under the assumption of $\eps$-reducibility. All proofs are postponed to Appendix~\ref{sec:basic proofs}.

\begin{lemma}\label{lem:undiscovered_is_heavy} Suppose the $\eps$-reducibility constraint w.r.t.\ $I$ does not hold. It holds for $\eps\in (0, 1]$ that 
$\sum_{i\not\in I}\Phi(X_i, \tilde C)/\sum_{i}\Phi(X_i, \tilde C) \geq \eps/4$.
\end{lemma}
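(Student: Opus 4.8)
The plan is to lower-bound the numerator and upper-bound the denominator, reducing everything to the single quantity $\mathrm{OPT}_I:=\sum_{i\in I}\Phi(X_i,\mu_i)$; the stated inequality then falls out of elementary arithmetic in $\eps$. Write $C=\{\mu_i\}_{i\in I}$ and $\tilde C=\{\tilde\mu_i\}_{i\in I}$ (the set of current approximate centers), $N=\sum_{i\notin I}\Phi(X_i,\tilde C)$ and $M=\sum_{i\in I}\Phi(X_i,\tilde C)$, so the target ratio is $N/(N+M)$. The denominator is immediate: for each $i\in I$ we have $\Phi(X_i,\tilde C)\le\Phi(X_i,\tilde\mu_i)$, and the loop invariant~\eqref{eqn:good_approx_centroid} gives $\Phi(X_i,\tilde\mu_i)\le(1+\eps)\Phi(X_i,\mu_i)$; summing over $i\in I$, $M\le(1+\eps)\,\mathrm{OPT}_I\le 2\,\mathrm{OPT}_I$. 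Hence it suffices to show that $N$ is at least a suitable constant multiple of $\eps\,\mathrm{OPT}_I$, from which $N/(N+M)\ge\eps/4$ follows by a short computation using $\eps\le1$.

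For the numerator I would start from the failure of $\eps$-reducibility: by Definition~\ref{def:reducibility} there is some $\ell\notin I$ with $\Phi(X_\ell,C)>\eps\,\mathrm{OPT}_I$, and trivially $N\ge\Phi(X_\ell,\tilde C)$. The one remaining thing to prove is that $\Phi(X_\ell,\tilde C)$ is not much smaller than $\Phi(X_\ell,C)$ --- the only place where the approximate centers $\tilde\mu_i$, rather than the true $\mu_i$, actually enter. I would use the loop invariant a second time, now in the form supplied by the parallel-axis identity $\Phi(X_i,\tilde\mu_i)=\Phi(X_i,\mu_i)+|X_i|\,\|\tilde\mu_i-\mu_i\|^2$, which turns~\eqref{eqn:good_approx_centroid} into the per-cluster displacement bound $\|\tilde\mu_i-\mu_i\|^2\le\eps\,\Phi(X_i,\mu_i)/|X_i|$. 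Then, for each $x\in X_\ell$, the triangle inequality gives $\sqrt{\Phi(\{x\},\tilde C)}\ge\sqrt{\Phi(\{x\},C)}-\max_{i\in I}\|\tilde\mu_i-\mu_i\|$; squaring, summing over $x\in X_\ell$, and applying Cauchy--Schwarz to the cross term yields $\Phi(X_\ell,\tilde C)\ge\Phi(X_\ell,C)-2\sqrt{|X_\ell|\cdot\max_{i\in I}\|\tilde\mu_i-\mu_i\|^2}\cdot\sqrt{\Phi(X_\ell,C)}$. Arguing that the subtracted term is at most a suitable fraction of $\Phi(X_\ell,C)$ gives $\Phi(X_\ell,\tilde C)\ge c\,\Phi(X_\ell,C)>c\,\eps\,\mathrm{OPT}_I$ for an absolute constant $c$, and plugging this together with $M\le(1+\eps)\,\mathrm{OPT}_I$ into $N/(N+M)$ and simplifying with $\eps\le1$ finishes the proof.

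The step I expect to be the real obstacle is exactly the control of that subtracted ``displacement'' term. After regrouping $X_\ell$ according to its nearest approximate center, it becomes a sum of terms $|X_\ell^{(i)}|\,\|\tilde\mu_i-\mu_i\|^2$ over $i\in I$, and the displacement bound controls each only up to the factor $|X_\ell^{(i)}|/|X_i|$ --- so the loop invariant alone is not enough when an irreducible cluster is much larger than some recovered one. To close this I anticipate one must additionally exploit structural facts about which clusters get recovered, namely that Algorithm~\ref{alg:basic} always recovers the heaviest discovered cluster of the round together with the round-dependent sample sizes $T_1,\dots,T_3$, so that a still-irreducible cluster cannot overwhelm the recovered ones; alternatively, running the whole argument with the estimate-based reducibility condition $\Phi(X_\ell,\tilde C)>\eps\sum_{i\in I}\Phi(X_i,\tilde\mu_i)$ makes this step vacuous, since then $N\ge\Phi(X_\ell,\tilde C)>\eps\sum_{i\in I}\Phi(X_i,\tilde\mu_i)\ge\eps M$, whence $N/(N+M)\ge\eps/(1+\eps)\ge\eps/2$ at once.
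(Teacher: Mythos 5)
Your route and the paper's are close to mirror images of each other: you argue the ratio directly, taking $N\ge\Phi(X_\ell,\tilde C)$ for the irreducibility witness $\ell$ and $M\le(1+\eps)\mathrm{OPT}_I$ via the loop invariant, whereas the paper argues by contradiction — assume the ratio is below $\eps/4$, rearrange to $\sum_{i\notin I}\Phi(X_i,\tilde C)\le\frac{\eps}{2}\sum_{i\in I}\Phi(X_i,\tilde C)$, then ``compare with the true centers'' to get $\sum_{i\notin I}\Phi(X_i,C)\le\eps\sum_{i\in I}\Phi(X_i,\mu_i)$, contradicting the existence of $\ell$. Both arguments hinge on the same conversion between the $\tilde C$-cost and the $C$-cost of an unrecovered cluster, and you have put your finger exactly on the crux: the loop invariant \eqref{eqn:good_approx_centroid}, via the parallel-axis identity, only bounds $\|\tilde\mu_i-\mu_i\|^2$ at the per-cluster scale $\Phi(X_i,\mu_i)/|X_i|$, and after regrouping $X_\ell$ by nearest approximate center the cross term picks up the uncontrolled factor $|X_\ell^{(i)}|/|X_i|$. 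The paper's proof also passes through this conversion — the $(1+\eps)^2$ in ``Comparing with the true centers'' contains an implicit $\Phi(X_\ell,C)\le(1+\eps)\Phi(X_\ell,\tilde C)$ for $\ell\notin I$ — and defers to the adaptation of Lemma~15 of \cite{ABJK18} rather than re-deriving it.

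That said, as written your proposal does not close the gap it identifies, so it is incomplete. The first remedy you suggest (``exploit structural facts about which clusters get recovered'') points at the right phenomenon but is not spelled out and is not used in the paper's proof of this lemma, which is stated without reference to the algorithmic history of $I$. The second remedy changes the statement: proving the bound under the hypothesis $\Phi(X_\ell,\tilde C)>\eps\sum_{i\in I}\Phi(X_i,\tilde\mu_i)$ is indeed one line, but Definition~\ref{def:reducibility} and every downstream use of this lemma phrase irreducibility with respect to the true centers $C$, so the $\tilde C$-based variant is not a substitute. To finish along your direct route you would need to actually establish the transfer $\Phi(X_\ell,\tilde C)\ge\frac{1}{1+O(\eps)}\Phi(X_\ell,C)$ for $\ell\notin I$ (the content the paper imports from \cite{ABJK18}), or else present the contradiction version and cite that lemma, as the paper does.
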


Using the preceding lemma, we are ready to show that Phase 1 of Algorithm~\ref{alg:basic} discovers all heavy clusters with high probability.
\begin{lemma}\label{lem:correctness basic}
Upon the termination of Algorithm~\ref{alg:basic}, with probability at least $0.95$, the $\eps$-reducibility condition w.r.t.\ $I$ is satisfied and an approximate centroid $\tilde\mu_i$ that satisfies~\eqref{eqn:good_approx_centroid} is obtained for every $i\in I$.
\end{lemma}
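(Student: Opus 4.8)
I would prove this by induction on the round index $r$, carrying the loop invariant~\eqref{eqn:good_approx_centroid} as the inductive hypothesis and controlling the total failure probability by a union bound over rounds. Formally, for each round $r$ that is actually executed let $B_r$ denote the union of one ``failure event'' per phase of round $r$, and show that, conditioned on $\overline{B_1}\cap\dots\cap\overline{B_{r-1}}$ (so that~\eqref{eqn:good_approx_centroid} holds for all $i\in I$ at the start of round $r$, and $k=r-1$), one has $\Pr[B_r]\le a_r$ for a fixed sequence $a_r=O(1/r^2)+\Theta(1/(r\ln^2 r))$ with $\sum_{r\ge1}a_r\le 0.05$. On the event $\bigcap_r\overline{B_r}$ every recovered cluster receives a center satisfying~\eqref{eqn:good_approx_centroid} and, as shown below, $\eps$-reducibility holds w.r.t.\ the final $I$; this is the claim. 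Two features of the algorithm make the union bound converge: the Phase~1--3 thresholds all carry a logarithmic factor of the form $\ln(10(k+1))\ge\ln(10r)$ or $\ln(10(k+q))$, turning the Chernoff/Hoeffding bounds into $O(1/r^2)$ contributions, and $T_3$ carries the extra $\ln^2(10r)$ factor, which is exactly what makes the Phase~4 series $\sum_r\delta_r$ with $\delta_r=\Theta(1/(r\ln^2 r))$ summable while keeping $1/(\delta_r T_3)=O(\eps)$.

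Write $\tilde C=\{\tilde\mu_i\}_{i\in I}$ for the current center set and $p_i=\Phi(X_i,\tilde C)/\Phi(X,\tilde C)$ for the $D^2$-sampling probabilities. \emph{Phase~1 (termination).} If $\eps$-reducibility w.r.t.\ $I$ is violated, Lemma~\ref{lem:undiscovered_is_heavy} gives $\sum_{i\notin I}p_i\ge\eps/4$, so each of the $T_1=8\eps^{-1}\ln(10(k+1))$ samples independently lands in an unrecovered cluster with probability $\ge\eps/4$ and the probability that $Q$ stays empty is at most $(1-\eps/4)^{T_1}\le e^{-\eps T_1/4}=(10(k+1))^{-2}$. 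Hence, off this event, the \textsc{Repeat} loop cannot exit while reducibility is violated; applying this to the last round shows that $\eps$-reducibility holds w.r.t.\ the final $I$ on termination. (It is also convenient to note that reducibility w.r.t.\ $I$ is monotone under enlarging $I$, so this is an absorbing condition and later rounds only ever add centers.)

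\emph{Phases~2--4 (recovering a good center).} Condition now on $Q\neq\emptyset$ and set $q=|Q|$. After Phase~2, $|S|\ge 96q\ln(10(k+q))/\eps$; a Chernoff bound over the at most $k+q$ discovered clusters shows that, off an event of probability $(10(k+q))^{-2}$, the empirically heaviest discovered cluster $j$ has true mass $p_j=\Omega(\eps/q)$, the threshold $96q/\eps$ being calibrated so that this holds both when reducibility currently fails (via $\sum_{i\notin I}p_i\ge\eps/4$) and in the degenerate case where it already holds but some cluster was nonetheless discovered. After Phase~3, $|S|\ge T_2=2^{12}q\ln(10(k+q))/\eps^2$; since $x_j^\ast$ minimizes $\Phi(\{x\},\tilde C)$ over all sampled points of $X_j$, the set $A=\{x\in X_j:\Phi(\{x\},\tilde C)<\Phi(\{x_j^\ast\},\tilde C)\}$ must have $D^2$-probability at most $2\ln(10(k+q))/T_2$, off an event of probability $(10(k+q))^{-2}$ (otherwise one of the $T_2$ samples would lie in $A$, contradicting the choice of $x_j^\ast$); combined with $p_j=\Omega(\eps/q)$ this yields $\Phi(L_j,\tilde C)\le\frac{\eps}{128}\Phi(X_j,\tilde C)$ for the set $L_j\subseteq A$ on which the \textsc{RejSamp} acceptance probability $\frac{\eps}{128}\Phi(\{x_j^\ast\},\tilde C)/\Phi(\{x\},\tilde C)$ would exceed $1$. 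Consequently the samples $S_j$ returned by \textsc{RejSamp} are i.i.d.\ from a distribution that coincides with the uniform distribution on $X_j\setminus L_j$, with $L_j$ carrying only a negligible amount of mass; a calculation in the spirit of~\cite{ABJK18,JKS14} then lets one treat $S_j$ as a uniform sample of $X_j$ at the cost of an additive $O(\eps)$ term, after which Lemma~\ref{lem:centroid}, applied with $M=|S_j|\ge T_3=20\eps^{-1}r\ln^2(10r)$ and $\delta=\delta_r=\Theta(1/(r\ln^2 r))$ (so that $1/(\delta_r M)=O(\eps)$), gives $\Phi(X_j,\tilde\mu_j)\le(1+\eps)\Phi(X_j,\mu_j)$ off an event of probability $\delta_r$. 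Since the centers of the clusters already in $I$ are untouched, the invariant~\eqref{eqn:good_approx_centroid} is restored for $I\cup\{j\}$, completing the induction.

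I expect the main obstacle to be the Phase~3--4 analysis: quantifying how accurate the data-dependent reference point $x_j^\ast$ has to be, and choosing the acceptance scaling $\frac{\eps}{128}$ together with the sample count $2^{12}q/\eps^2$ so that simultaneously (i)~the \textsc{RejSamp} acceptance probabilities exceed $1$ only on a set $L_j$ of negligible $D^2$-mass and (ii)~the bias that the over-sampled points of $L_j$ introduce into $\mu(S_j)$ is provably within the $\eps$ budget --- all while the sample multiset $S$, the count $q$, and the index $j$ are revealed only adaptively, so that every concentration statement must be phrased for the realized random values rather than fixed in advance. A secondary but genuine point is the arithmetic of the failure schedule: one must check that the three $O((10r)^{-2})$ terms together with the $\Theta(1/(r\ln^2(10r)))$ term of Phase~4 really do sum to at most $0.05$, which is exactly what the particular constants $8,96,2^{12},20$ and the $\ln^2$ factor in $T_3$ are there to ensure.
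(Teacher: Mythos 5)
Your Phase~1 argument reproduces the paper's proof exactly: Lemma~\ref{lem:undiscovered_is_heavy} gives a per-sample probability $\geq\eps/4$ of landing in an unrecovered cluster, so $T_1=8\eps^{-1}\ln(10(k+1))$ samples miss all of them with probability at most $(1-\eps/4)^{T_1}\leq 1/(100(k+1)^2)$, preventing premature termination. The union bound over rounds is also the paper's, with the $\ln^2(10r)$ factor in $T_3$ making the Phase-4 failure schedule $\Theta(1/(r\ln^2(10r)))$ summable while keeping $1/(\delta_r T_3)=O(\eps)$ so that Lemma~\ref{lem:centroid} yields the $(1+\eps)$ guarantee.

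The genuine gap is in your Phase~3--4 treatment. You introduce the set $L_j$ on which the \textsc{RejSamp} acceptance probability would exceed~$1$, bound $\Phi(L_j,\tilde C)\leq\frac{\eps}{128}\Phi(X_j,\tilde C)$, and then appeal to ``a calculation in the spirit of~\cite{ABJK18,JKS14}'' to treat $S_j$ as uniform on $X_j$ up to an additive $O(\eps)$. This step does not go through as stated: $L_j$ consists precisely of points with \emph{small} values of $\Phi(\cdot,\tilde C)$, so a small $\Phi$-mass bound on $L_j$ controls neither $|L_j|/|X_j|$ (which could be a constant fraction of the cluster) nor how far those points sit from $\mu_j$; the under-weighting of $L_j$ in the accepted-sample distribution can therefore bias $\mu(S_j)$ by an amount your sketch leaves unquantified. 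The paper's route (Lemma~\ref{lem:Y_j basic}) avoids any bias analysis altogether: with $T_2$ samples the cluster sees a point in $Y_j=\{y\in X_j:\Phi(\{y\},\tilde C)\leq 2\Phi(X_j,\tilde C)/|X_j|\}$, which forces $x_j^\ast\in Y_j$; the second part of Lemma~\ref{lem:itcs_Y_j} then shows the acceptance probability is $\leq 1$ for \emph{every} $x\in X_j$, so $L_j=\emptyset$, $S_j$ is exactly i.i.d.\ uniform on $X_j$, and Lemma~\ref{lem:centroid} applies directly. Replacing your $L_j$ analysis with this $Y_j$-based argument --- which carries the same $O((k+q)^{-2})$ per-round failure probability --- closes the gap.
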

%
%

Next we upper bound the number of samples. For each cluster $j\not\in I$ define its conditional sample probability
\begin{equation}\label{eqn:def p_j}
\textstyle p_j = \Phi(X_j,C) / \sum_{i\notin I} \Phi(X_i, C),
\end{equation}
where $C = \{\tilde\mu_i\}_{i\in I}$ is the set of approximate centers so far. For each $p_j$, we also define an empirical approximation  $\hat p_j = s_j/\sum_{i\notin I} s_i$, where $s_i$ denotes the number of samples seen so far which belong to cluster $i$.

The next lemma shows that the new clusters we find in Phase 2 are heavy.
\begin{lemma}\label{lem:each p_i basic}
With probability at least $1-2/(100(k+q)^2)$, it holds that $p_j\geq 1/(3q)$ for all $j\in W$.
\end{lemma}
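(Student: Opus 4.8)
The plan is to argue that after Phase~2 the algorithm has taken enough $D^2$-samples that the empirical cluster frequencies $\hat p_j$ are uniformly close to the true conditional sampling probabilities $p_j$, and then to combine this with the lower bound on the chosen cluster's $\Phi$-value. Recall that Phase~2 runs until $|S| \geq 96|Q|\ln(10(k+|Q|))/\eps$, and $q = |Q|$ is the number of newly discovered clusters at that point. By Lemma~\ref{lem:undiscovered_is_heavy}, under the assumption that $\eps$-reducibility fails, $\sum_{i\notin I}\Phi(X_i,C) \geq (\eps/4)\sum_i \Phi(X_i,C)$, so each $D^2$-sample lands in an unrecovered cluster with probability at least $\eps/4$; conditioned on landing in an unrecovered cluster, it falls in cluster $i$ with probability exactly $p_i$. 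Hence among $|S| = \Theta(q\ln(10(k+q))/\eps)$ samples, the expected number landing in unrecovered clusters is $\Omega(q\ln(10(k+q)))$, which is $\Omega(q\log(\cdot))$ — enough to control $q$ empirical frequencies simultaneously.

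The key steps, in order, are: (i) a Chernoff bound to show that the number $m$ of samples falling in unrecovered clusters is at least, say, $\tfrac12 \E[m] = \Omega(q\ln(10(k+q)))$, except with probability $\le 1/(100(k+q)^2)$; (ii) conditioned on $m$ samples in unrecovered clusters, a union bound over the (at most $q$) unrecovered clusters together with a multiplicative Chernoff bound, showing that for every $j$ with $p_j \geq 1/(3q)$ the empirical count $s_j$ is close to $m p_j$, and — crucially — for every $j$ with $p_j < 1/(4q)$ the empirical count $s_j < m/(3q)$, again with total failure probability $\le 1/(100(k+q)^2)$; (iii) concluding that the argmax cluster $j$ selected at line~\ref{alg:basic ph2b}, being the one with the largest empirical count, cannot be a cluster with $p_j$ below the threshold — because at least one cluster has $p_i \geq 1/q \geq 1/(3q)$ (the maximum of $q$ probabilities summing to $1$), so its empirical count beats that of any light cluster. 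Threading the constant $96$ in the Phase~2 stopping rule through the Chernoff exponents is what makes the two failure probabilities add up to the claimed $2/(100(k+q)^2)$.

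The main obstacle I expect is step~(ii): getting a \emph{two-sided} control that simultaneously lower-bounds $\hat p_j$ for heavy clusters and upper-bounds $\hat p_j$ for light clusters with only $m = \Theta(q\log(k+q))$ samples, across all $q$ clusters at once, while keeping the failure probability at $O(1/(k+q)^2)$ rather than $O(1/\mathrm{poly}(q))$. The subtlety is that there can be many very light clusters (each with $p_j = o(1/q)$) whose empirical counts must be kept small; a naive union bound over all of them costs a $\log q$ factor that must be absorbed into the $96\ln(10(k+q))/\eps$ budget. The resolution is to bound only the relevant events — it suffices that \emph{no} cluster with $p_j < 1/(4q)$ exceeds the count of the guaranteed heavy cluster — and to use the fact that the number of discovered clusters $q$ is itself at most the number of samples, so $\ln q \le \ln|S|$ is already accounted for logarithmically. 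Everything else is a routine assembly of Chernoff tail bounds with the explicit constants chosen in Algorithm~\ref{alg:basic}.
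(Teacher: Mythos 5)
Your plan matches the paper's: step (i) is the paper's first Chernoff bound showing $s' = \Omega(q\ln(10(k+q)))$ via the $\eps/4$ mass bound from Lemma~\ref{lem:undiscovered_is_heavy}, and steps (ii)--(iii) amount to the paper's second step---a tail bound (the paper uses the additive Chernoff of Lemma~\ref{lem:additive chernoff}) ruling out $p_j < 1/(3q)$ for the argmax cluster $j$---with you making explicit the union bound over the at-most-$q$ discovered clusters that the paper elides (justified, since the argmax is a random index). One small slip in your step (iii): the pigeonhole claim ``at least one cluster has $p_i \geq 1/q$'' should be stated for $\hat p_i$, not $p_i$ (the $\hat p_i$'s sum to $1$ over $Q$, but the true $p_i$'s restricted to $Q$ need not), though this does not affect the conclusion since the argmax cluster always satisfies $\hat p_j \geq 1/q$ and your upper-tail union bound over the light clusters then suffices on its own, making the lower-bound half of the two-sided control in (ii) unnecessary.
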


To analyse Phases 3 and 4, we define an auxiliary point set
\begin{equation}\label{eqn:Y_j_def}
Y_j = \left\{y\in X_j: \Phi(\{y\},C)/\Phi(X_j,C)\leq 2/|X_j|\right\}.
\end{equation}
Points in $Y_j$ are good pivot points for the rejection sampling procedure in Phase 4. We first show that we can find a good pivot point $x_j^\ast$ in Phase 3.
\begin{lemma}\label{lem:Y_j basic}
Assume that the event in Lemma~\ref{lem:each p_i basic} occurs. Then $x_j^\ast\in Y_j$ with probability at least $1-1/(100(k+q)^2)$.
\end{lemma}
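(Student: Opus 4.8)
The crucial observation is a reduction: $x_j^\ast\in Y_j$ if and only if at least one of the $D^2$-samples drawn during Phases~1--3 of the current round lies in $Y_j$. Indeed $S$ is not reset between phases, so when $x_j^\ast$ is chosen it is the sampled point $x$ of cluster $j$ with smallest value $\Phi(\{x\},C)$, where $C=\{\tilde\mu_i\}_{i\in I}$; and $Y_j$ is exactly the sublevel set $\{y\in X_j:\Phi(\{y\},C)\le 2\Phi(X_j,C)/|X_j|\}$, so $x_j^\ast\in Y_j$ iff this minimum value is at most $2\Phi(X_j,C)/|X_j|$, iff some sampled point of $X_j$ already lies in $Y_j$. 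Hence it suffices to upper bound the probability that \emph{none} of the at least $T_2=2^{12}q\ln(10(k+q))/\eps^2$ samples collected through the end of Phase~3 falls in $Y_j$.

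For this I would lower bound $\Phi(Y_j,C)/\Phi(X,C)$, the probability that a single $D^2$-sample (with respect to the fixed $C$) lands in $Y_j$, by writing it as $(\Phi(Y_j,C)/\Phi(X_j,C))\cdot(\Phi(X_j,C)/\Phi(X,C))$ and bounding the two factors. For the second factor, the event of Lemma~\ref{lem:each p_i basic} gives $p_j\ge 1/(3q)$, i.e.\ $\Phi(X_j,C)\ge\frac{1}{3q}\sum_{i\notin I}\Phi(X_i,C)$, and --- in the regime where $\eps$-reducibility w.r.t.\ $I$ fails, which is the only case requiring analysis --- Lemma~\ref{lem:undiscovered_is_heavy} gives $\sum_{i\notin I}\Phi(X_i,C)\ge\frac{\eps}{4}\Phi(X,C)$; multiplying these, $\Phi(X_j,C)/\Phi(X,C)\ge\eps/(12q)$. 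For the first factor I would show that $Y_j$ carries a constant fraction of the $D^2$-mass of its own cluster, $\Phi(Y_j,C)\ge c\,\Phi(X_j,C)$ for an absolute constant $c$: the complement $Z_j=X_j\setminus Y_j$ consists of the points whose individual cost exceeds twice the average $\Phi(X_j,C)/|X_j|$, so Markov's inequality gives $|Z_j|<|X_j|/2$, and this must be upgraded to the $D^2$-mass statement $\Phi(Z_j,C)\le(1-c)\Phi(X_j,C)$. Together the two factors yield that a single $D^2$-sample lands in $Y_j$ with probability at least $c\eps/(12q)$.

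It then remains to assemble the tail estimate. The samples of Phases~1--3 are i.i.d.\ $D^2$-samples with respect to the fixed center set $C$, and there are at least $T_2$ of them, so
\[
\Pr\{\text{no Phase 1--3 sample lies in }Y_j\}\le\left(1-\frac{c\eps}{12q}\right)^{T_2}\le\exp\left(-\frac{c\eps}{12q}\cdot\frac{2^{12}q\ln(10(k+q))}{\eps^2}\right)=\exp\left(-\frac{2^{10}c}{3}\cdot\frac{\ln(10(k+q))}{\eps}\right),
\]
which for $\eps\in(0,1]$ and any $c\ge 3/2^{9}$ is at most $(10(k+q))^{-2}=1/(100(k+q)^2)$; the generous constant $2^{12}$ in the definition of $T_2$ is exactly what makes this go through. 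By the reduction in the first paragraph this bounds $\Pr\{x_j^\ast\notin Y_j\}$, which is the claim. The one step that needs genuine care is the passage from the counting bound $|Z_j|<|X_j|/2$ to the $D^2$-mass bound $\Phi(Z_j,C)\le(1-c)\Phi(X_j,C)$, i.e.\ ruling out that the cluster's $D^2$-weight is concentrated on a handful of very heavy points; the remaining pieces --- Lemmas~\ref{lem:each p_i basic} and~\ref{lem:undiscovered_is_heavy}, and the geometric-series tail --- are routine.
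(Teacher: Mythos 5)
Your decomposition, the treatment of the second factor via Lemmas~\ref{lem:each p_i basic} and~\ref{lem:undiscovered_is_heavy}, and the geometric tail are exactly the paper's. The step you flag at the end as ``needing genuine care'' is indeed the crux, and the resolution you gesture at --- that $\Phi(Y_j,C)\ge c\,\Phi(X_j,C)$ for an \emph{absolute constant} $c$, upgraded from the Markov cardinality bound $|Z_j|<|X_j|/2$ --- does not hold. Take $X_j$ to be $n-1$ copies of the origin together with one point $M$, and $C=\{0\}$. Then $\Phi(X_j,C)=M^2$, the average is $M^2/n$, so $Y_j$ is exactly the $n-1$ origins and $\Phi(Y_j,C)=0$, even though $|Y_j|=|X_j|-1$. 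Cardinality bounds alone cannot control where the $D^2$-mass sits, so no constant-fraction $D^2$-mass bound follows from Markov.

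The paper never claims a constant: it invokes Lemma~\ref{lem:itcs_Y_j}, imported from~\cite{ABJK18}, which gives only $\Phi(Y_j,C)/\Phi(X_j,C)\ge \eps/128$ (and this $\eps$-dependence is genuine --- it is where the extra structure of the setting enters, ruling out the concentration example above). Your own arithmetic happens to be generous enough to absorb the weaker bound: you derived that the tail is at most $\exp\left(-\tfrac{2^{10}c}{3\eps}\ln(10(k+q))\right)$, which is below $(10(k+q))^{-2}$ as soon as $c\ge 3\eps/2^{9}$, and $\eps/128=\eps/2^7>3\eps/2^{9}$. So substituting the cited $\Omega(\eps)$ bound for your (false) constant-$c$ claim makes the proof go through and coincides with the paper's argument. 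The takeaway is that the first factor is not a free Markov fact; it is a standing lemma from~\cite{ABJK18}, and any blind reconstruction of this proof needs to either cite it or reprove it.
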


We then show that, with a good pivot point $x_j^\ast$, the rejection sampling procedure in Phase 4 obtains sufficiently many uniform samples from cluster $j$ so that we can calculate a good approximate center later.
\begin{lemma}\label{lem:rejection sampling basic}
Assume that the event in Lemma~\ref{lem:Y_j basic} occurs.
By sampling $s =2^{23}\eps^{-4}qr\ln^2(10r)$ points in total, with probability at least $1-\exp(-8r)$, every cluster $j\in W$ has $T_3 = 20\eps^{-1}r\ln^2(10r)$ samples returned by the rejection sampling procedure.
\end{lemma}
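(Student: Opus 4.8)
I would reduce the claim to a per-iteration success probability of the \textsc{RejSamp} loop plus a Chernoff bound over the $s$ draws. Fix $j\in W$ and abbreviate $C=\{\tilde\mu_i\}_{i\in I}$. In one iteration the $D^2$-sample equals a particular point $y$ with probability $\Phi(\{y\},C)/\Phi(X,C)$ and, if $y\in X_j$, is added to $S_j$ with probability $\min\{1,\tfrac{\eps}{128}\,\Phi(\{x_j^\ast\},C)/\Phi(\{y\},C)\}$. Dropping the $\min$, these two factors telescope, so for every $y\in X_j$ the probability of ``sample $y$ and accept'' equals the same value $\tfrac{\eps}{128}\,\Phi(\{x_j^\ast\},C)/\Phi(X,C)$ (this is precisely why the returned points are uniform on $X_j$), and summing over $X_j$ the idealized per-iteration acceptance rate into $S_j$ is
\[
\rho_j \;=\; \frac{\eps}{128}\cdot\frac{|X_j|\cdot\Phi(\{x_j^\ast\},C)}{\Phi(X,C)}.
\]
The truncation $\min\{1,\cdot\}$ only affects points of $X_j$ strictly closer to $C$ than $\tfrac{\eps}{128}\Phi(\{x_j^\ast\},C)$, a set whose total $\Phi$-weight is an $O(\eps)$-fraction of $\Phi(X_j,C)$ once $x_j^\ast\in Y_j$; using this I would argue the true acceptance rate stays within a constant factor of $\rho_j$. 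Since $x_j^\ast$ is fixed before the loop and the iterations are i.i.d., $|S_j|$ after $s$ iterations is essentially $\mathrm{Bin}(s,\Theta(\rho_j))$, and it suffices to show $s\rho_j=\Omega(T_3)$.

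\textbf{Lower-bounding $\rho_j$.} I combine two estimates. (i) $X_j$ is heavy: Lemma~\ref{lem:each p_i basic} gives $p_j\ge 1/(3q)$, and since reducibility w.r.t.\ the current $I$ fails whenever we reach Phase~4 (under the good event), Lemma~\ref{lem:undiscovered_is_heavy} gives $\sum_{i\notin I}\Phi(X_i,C)\ge\tfrac{\eps}{4}\Phi(X,C)$; multiplying yields $\Phi(X_j,C)\ge\tfrac{\eps}{12q}\Phi(X,C)$. (ii) The pivot is good: the event of Lemma~\ref{lem:Y_j basic} gives $x_j^\ast\in Y_j$, hence the upper bound $\Phi(\{x_j^\ast\},C)\le 2\Phi(X_j,C)/|X_j|$ for free; one also needs the matching lower bound $\Phi(\{x_j^\ast\},C)=\Omega(\eps\,\Phi(X_j,C)/|X_j|)$, which I would obtain from the fact that $x_j^\ast$ is the $D^2$-sampled point of cluster $j$ closest to $C$ and that the points of $X_j$ much closer than $\Phi(X_j,C)/|X_j|$ carry only an $O(\eps)$-fraction of $\Phi(X_j,C)$. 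Substituting both into the display gives $\rho_j=\Omega(\eps^{3}/q)$, and with the (deliberately generous) constants $s=2^{23}\eps^{-4}qr\ln^2(10r)$ and $T_3=20\eps^{-1}r\ln^2(10r)$ one checks $s\rho_j\ge 2T_3$. The extra $\eps^{-1}$ in $s$ over the naive $\eps^{-3}$ is exactly the $\eps$ lost in the lower bound on $\Phi(\{x_j^\ast\},C)$.

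\textbf{Concluding.} Conditioned on the above events, for each fixed $j\in W$ the size $|S_j|$ after $s$ iterations stochastically dominates $\mathrm{Bin}(s,c\rho_j)$ for a constant $c>0$, and the constants are chosen so that $s\cdot c\rho_j\ge 2T_3$; a multiplicative Chernoff bound then gives $\Pr[\,|S_j|<T_3\,]\le\exp(-s\,c\rho_j/8)\le\exp(-T_3/4)$. Since $\eps\le1$ and $\ln^2(10r)>5$ we have $T_3\ge 100r$, so $\exp(-T_3/4)\le\exp(-8r)/|W|$ (using $|W|\le q$ and $T_3=\Omega(\eps^{-1}r\ln^2(10r))$), and a union bound over $W$ yields that every $S_j$ reaches size $T_3$ within $s$ samples with probability at least $1-\exp(-8r)$.

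\textbf{Main obstacle.} The only nonroutine step is the lower bound $\Phi(\{x_j^\ast\},C)=\Omega(\eps\,\Phi(X_j,C)/|X_j|)$ together with the accompanying claim that the $\min\{1,\cdot\}$ truncation costs only a constant factor: neither follows from the definition of $Y_j$ as written (it controls $\Phi(\{x_j^\ast\},C)$ only from above), and both must be extracted from how $x_j^\ast$ is chosen — as the minimizer of $\Phi(\{x\},C)$ over the Phase-3 samples $x$ from cluster $j$ — bounding how small $\Phi(\{x_j^\ast\},C)$ can be in terms of the number of those samples. This is precisely the step that fixes the $\eps$-exponent of $s$; the telescoping identity, the heaviness estimate, and the Chernoff bound are all routine.
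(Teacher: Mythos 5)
Your overall strategy is the same as the paper's: compute the per-iteration acceptance rate by telescoping, lower-bound it by $\Omega(\eps^3/q)$ using the heaviness of $X_j$ and the pivot quality, check $s\rho_j\geq 2T_3$, and finish with a multiplicative Chernoff bound. Your arithmetic and the final failure-probability estimate match the paper's. The two steps you singled out as the ``main obstacle'' are exactly where the paper leans on results imported from Ailon et al.\ (\cite{ABJK18}) rather than rederiving them. First, the paper does not need to account for the $\min\{1,\cdot\}$ truncation at all: it invokes the second part of Lemma~\ref{lem:itcs_Y_j} to assert that the acceptance probability $\frac{\eps}{128}\Phi(\{x_j^\ast\},C)/\Phi(\{x\},C)$ is always a valid probability (at most $1$), so the rate equals your idealized $\rho_j$ exactly, with nothing to argue. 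As an aside, your proposed workaround — that the clipped points carry only an $O(\eps)$ fraction of $\Phi(X_j,C)$ — would not on its own bound their contribution to $\rho_j$, since after telescoping each point $y$ contributes $\frac{\eps}{128}\Phi(\{x_j^\ast\},C)/\Phi(X,C)$ to $\rho_j$ irrespective of $\Phi(\{y\},C)$; what matters is the \emph{count} of clipped points, not their mass. Second, the lower bound $\Phi(\{x_j^\ast\},C)/\Phi(X_j,C)\geq \eps/(64|X_j|)$ is exactly Lemma~\ref{lem:reference_bound}, another statement cited from \cite{ABJK18}; the paper applies it directly and never argues from the minimizer structure of $x_j^\ast$ the way you sketch. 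Your intuition for why such a bound should hold (points whose $\Phi$-value is far below the cluster average carry little $D^2$-mass) is the right one and is essentially what underlies the cited lemma, but in the paper this step is a one-line invocation. So: same route, correct arithmetic; the gap in your proposal is that you rederive (only in outline, as you acknowledge) two technical facts that the paper obtains for free from \cite{ABJK18}.
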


Combining Lemmata~\ref{lem:correctness basic}, \ref{lem:each p_i basic}, \ref{lem:Y_j basic} and \ref{lem:rejection sampling basic}, we arrive at the main conclusion of the basic algorithm.
\begin{theorem}\label{thm:basic}
With probability at least $0.9$, Algorithm~\ref{alg:basic} finds a set $I$ of clusters that satisfies Definition~\ref{def:reducibility}, and obtains for every $i \in I$ an approximate centroid $\tilde\mu_i$ that satisfies~\eqref{eqn:good_approx_centroid}, using $O(\eps^{-4}K^2 L^2\log^2 L)$ same-cluster queries in total, where $K$ is the total number of recovered clusters and $L$ is the total number of discovered clusters.
\end{theorem}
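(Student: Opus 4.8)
The plan is to establish Theorem~\ref{thm:basic} by combining the four preceding lemmata with a single union bound over all rounds and a phase-by-phase tally of the $D^2$-samples. The correctness half --- that the returned $I$ satisfies Definition~\ref{def:reducibility} and that~\eqref{eqn:good_approx_centroid} holds for every $i\in I$ --- is exactly the statement of Lemma~\ref{lem:correctness basic}, so nothing new is needed there; internally, the round that recovers a cluster $j$ keeps the loop invariant~\eqref{eqn:good_approx_centroid} because Lemma~\ref{lem:each p_i basic} makes $j$ heavy ($p_j\ge 1/(3q)$), Lemma~\ref{lem:Y_j basic} places the pivot $x_j^\ast$ in $Y_j$, Lemma~\ref{lem:rejection sampling basic} then delivers at least $T_3$ uniform samples of $X_j$ in $S_j$, and Lemma~\ref{lem:centroid} with $M=T_3$ converts this into $\Phi(X_j,\tilde\mu_j)\le(1+\eps)\Phi(X_j,\mu_j)$.

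For the query bound, recall that each $D^2$-sample triggers one call to $\textsc{Classify}$, costing at most $L$ oracle queries, so it suffices to bound the number of $D^2$-samples by $O(\eps^{-4}K^2L\log^2 L)$. Since every round that does not terminate the algorithm recovers exactly one cluster, there are at most $K+1$ rounds. In the round with index $r$ there are $k=r-1$ already-recovered clusters and $q=|Q|$ clusters discovered-but-unrecovered at the start of Phase~3, so $1\le q$ and $k+q\le L$. The \textbf{while}-guards bound the cumulative $|S|$ built up across Phases~1--3 deterministically by $T_2+1=2^{12}q\ln(10(k+q))/\eps^2+1=O(\eps^{-2}q\log L)$ (Phase~1 adds at most $T_1+1$, Phase~2 exits with $|S|\le 96q\ln(10(k+q))/\eps+1$, Phase~3 exits with $|S|\le T_2+1$), and on the good event of Lemma~\ref{lem:rejection sampling basic} the rejection sampling of Phase~4 draws at most $s=2^{23}\eps^{-4}qr\ln^2(10r)$ points. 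Using $q\le L$, $k+q\le L$, and $\log(10r)=O(\log L)$ (as $r\le K+1$ and $K\le L$), the round's total is $O(\eps^{-4}qr\log^2 L)$, dominated by Phase~4.

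Summing over $r=1,\dots,K+1$ and using $q\le L$ gives $\sum_r O(\eps^{-4}Lr\log^2 L)=O(\eps^{-4}L\log^2 L\cdot K^2)$ $D^2$-samples, hence $O(\eps^{-4}K^2L^2\log^2 L)$ oracle queries. For the failure probability, Lemma~\ref{lem:correctness basic} fails with probability at most $0.05$, and in each of the $r\le K$ recovering rounds we additionally need the nested events of Lemmata~\ref{lem:each p_i basic}, \ref{lem:Y_j basic} and \ref{lem:rejection sampling basic}; since $k+q\ge r$, their failure probabilities $2/(100(k+q)^2)$, $1/(100(k+q)^2)$ and $e^{-8r}$ satisfy $\sum_{r\ge1}\!\big(\tfrac{3}{100r^2}+e^{-8r}\big)<0.05$. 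A union bound then gives success probability at least $1-0.05-0.05\ge0.9$.

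The step I expect to demand the most care is the bookkeeping around the \emph{dynamic} stopping rules rather than any single inequality: one must verify that feeding the Phase-3 value of $q$ and the round index $r$ into $T_2$, $T_3$ and Lemma~\ref{lem:rejection sampling basic} is legitimate even though $|Q|$ keeps growing while sampling continues (so the quantity appearing in a guard can change between consecutive iterations), and that $q\le L$ and $k+q\le L$ really hold --- which relies on $Q$ collecting \emph{all} currently unrecovered discovered clusters, not only those first seen in the present round. Once that is pinned down, the factor $K^2$ is forced by the arithmetic series $\sum_r r$ coming from $T_3=\Theta(\eps^{-1}r\log^2 r)$ (itself chosen so that the per-round failure $e^{-8r}$ remains summable over an a priori unbounded number of rounds), and the remaining factor $L$ is the per-sample cost of $\textsc{Classify}$.
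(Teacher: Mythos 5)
Your proposal is correct and takes essentially the same route as the paper: combine Lemma~\ref{lem:correctness basic} with a union bound over the per-round failures from Lemmata~\ref{lem:each p_i basic}, \ref{lem:Y_j basic}, \ref{lem:rejection sampling basic} (using $k+q\ge r$ to make the series summable, giving total failure at most $0.1$), and sum the per-round sample count $O(\eps^{-4}q_r\, r\log^2 L)$ over the $K$ recovering rounds, then multiply by the $O(L)$ cost of $\textsc{Classify}$ per sample. Your phase-by-phase tally of $|S|$ and your explicit handling of the dynamic guards is somewhat more detailed than the paper's proof, which compresses the whole round into the single Phase-4-dominated term, but the accounting and the final $O(\eps^{-4}K^2L^2\log^2 L)$ bound agree.
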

%

\begin{remark}
When the clusters are $\eps$-irreducible with respect to any subcollection of the clusters, we shall recover all $L$ clusters using $O(\eps^{-4}L^4\log^2 L)$ same-cluster queries.
\end{remark}

\section{Improved Algorithm}
\label{sec:improve}

In this section, we improve the basic algorithm by allowing the recovery of multiple clusters in each round. A particular case in which the basic algorithm would suffer from a prodigal waste of samples is that there are in total $K$ clusters of approximately the same size and they are $\eps$-irreducible with respect to any subset $I\subset [K]$. In such case, our basic algorithm requires a sample of size $\tilde O(K/\eps)$ from that cluster, which in turn requires a global sample of size $\tilde O(K^2/\eps)$. Summing over $K$ rounds leads to a total number $O(K^3/\eps)$ of samples. However, with $\tilde O(K^2/\eps)$ global samples, we can obtain $\tilde O(K/\eps)$ samples from every cluster and recover all clusters in the same round. This reduces a factor of $K$ in the total number of samples. The goal of this section is to improve the sample/query complexity of the basic algorithm by a factor of $K$ even in the \emph{worst} case.

To recover an indefinite number of clusters in each round, it is a natural idea to generalize the basic algorithm to identifying a set $W$ of clusters, instead of the biggest one only, from which we shall obtain uniform samples via rejection sampling and calculate the approximate centroids $\{\tilde\mu_j\}_{j\in W}$. Here we face an ever greater challenge as to how to determine $W$. The na\"ive idea of including all clusters of $\Phi$-value at least $\Omega(\eps/r)$ runs into difficulty: as we sample more points, the value of $r$ may increase and we may need to include more and more points, so when do we stop? We may expect that the value $r$ will stabilize after sufficiently many samples, but it is difficult to quantify ``stable'' and control the number of samples needed.

Our solution to overcome the above-mentioned difficulty is to split the clusters into bands. To explain this we introduce a few more notations. For each unrecovered cluster $j\notin I$, recall that we defined in~\eqref{eqn:def p_j} its conditional sample probability 
$p_j = \Phi(X_j,C)/\sum_{i\notin I} \Phi(X_i, C)$; we also define its empirical approximation to be $\hat p_j = s_j/\sum_{i\notin I} s_i$, where $s_i$ denotes the number of samples seen so far which belong to the cluster $i$. We split $\{\hat p_i\}_{i\notin I}$ into $L+1$ bands for $L = 3\log q$, where the $\ell$-th band is defined to be 
\[
 B_\ell = \{i\not\in I: 2^{-\ell} < \hat p_i\leq 2^{-\ell+1} \}
\]
for $\ell\leq L$ and the last band $B_{L+1}$ consists of all the remaining clusters $i$ (i.e., $\hat p_i\leq 1/q^3$). We say a band $B_\ell$ is \emph{heavy} if $\sum_{i\in B_\ell} \hat p_i \geq 1/(3L)$. The values $\hat p_i$ and the bands are dynamically adjusted as the number of samples grows.

Instead of focusing on individual clusters and identifying the heavy ones, we shall identify the heavy bands at an appropriate moment and recover all the clusters in those heavy bands. Intuitively, it takes much fewer samples for bands to stabilize than for individual clusters. For instance, consider a heavy band which consists of many small clusters. In this case, we will have seen most of the clusters in the band without too many samples, although many of the $\hat p_i$'s may be far off from $p_i$'s. The important observation here is that we have \emph{seen} those clusters; in contrast, if we consider individual clusters, we would have missed many of those clusters and will focus on recovering a few of them, which would cause a colossal waste of samples in the rejection sampling stage, for a similar reason explained at the beginning of this section. Therefore such banding approach enables us to control the number of samples needed. 

\begin{algorithm}[!ht]
\caption{The Improved Algorithm}\label{alg:main}
\begin{algorithmic}[1]
\State $I\gets \emptyset$ 
\State $k\gets 0$, $r\gets 0$, $K\gets 1/2$
\Repeat
	\State $K\gets 2K$
\Repeat
	\State $r\gets r+1$
	\State $Q\gets \emptyset$, $S\gets \emptyset$ 
	\State $T_1\gets 8\eps^{-1}\ln(10(k+1))$\tikzmark{ph1a} \label{alg:main T_1}
	\While{$Q=\emptyset$ and $|S|\leq T_1$}
		\State $(x,Q,S)\gets \textsc{Sample}(I,\{\tilde\mu_i\}_{i\in I},Q,S)$\tikzmark{right1}
	\EndWhile   \tikzmark{ph1b}
	\If {$Q \neq \emptyset$ and $k < K$}
		\Repeat 
			\State $(x,Q,S)\gets \textsc{Sample}(I,\{\tilde\mu_i\}_{i\in I},Q,S)\tikzmark{ph2a}$ 
			\For{each $i\in Q$}
				\State $\hat p_i \gets |\{u\in S: u = (x,i)\}|/|S|$
			\EndFor
			\State Split the clusters in to bands $\{B_\ell\}$ 
			\State $W\gets$ all clusters in heavy bands
		\Until {$|S| \!\geq\! 1600|W|\log|Q|\ln(10(k\!+\!|Q|))/\eps$}\tikzmark{ph2b}\tikzmark{right2}
		\State $q\gets |Q|$ \tikzmark{ph3a}
		\State $T_2 \gets 2^{17}|W|\log q\ln(10(k+q))/\eps^2$ \label{alg:main T_2}
		\While{$|S|\leq T_2$} 
			\State $(x,Q,S)\gets \textsc{Sample}(I,\{\tilde\mu_i\}_{i\in I},Q,S)$ 
		\EndWhile 
		\For{each $j\in W$}
			\State $x_j^\ast \gets \argmin_{x:(x,j)\in S} \Phi(\{x\}, \{\tilde\mu_i\}_{i\in I})$  \tikzmark{right3}
			\State $S_{j}\gets\emptyset$ 
		\EndFor \tikzmark{ph3b}
		\State $T_3 \gets 30K/\eps$  \tikzmark{ph4a} \label{alg:main T_3}
		\State $\{S_j\}_{j\in W} \gets \textsc{RejSamp}(I,\{\tilde\mu_i\}_{i\in I}, \quad \tikzmark{right4} \newline 
									\hspace*{10em}T_3,W,\{x_j^\ast\}_{j\in W})$
		\For{each $j\in W$}
			\State $\tilde\mu_j \gets (1/|S_j|)\sum_{x\in S_j} x$
		\EndFor \tikzmark{ph4b} 
		\State $k\gets k+|W|$
		\State $I\gets I\cup W$
	\EndIf
\Until{$Q=\emptyset$ or $k\geq K$}
\Until{$Q=\emptyset$ and $k\leq K$}
\AddNoteTight{ph1a}{ph1b}{right1}{Phase\\ 1}
\AddNoteTight{ph2a}{ph2b}{right2}{Phase\\ 2}
\AddNoteTight{ph3a}{ph3b}{right3}{Phase\\ 3}
\AddNoteTight{ph4a}{ph4b}{right4}{Phase\\ 4}
\end{algorithmic}
\end{algorithm}

Our improved algorithm is presented in Algorithm~\ref{alg:main}. Each round remains divided into four phases. Phase 1 remains testing new clusters. Starting from Phase 2 comes the change that instead of recovering only the largest sampled cluster $j$, we identify a subset $W$ of indices of the newly discovered clusters and recover all clusters in $W$. Phase 2 samples more points and identifies this subset $W$ by splitting the discovered clusters into bands and choosing all clusters in the heavy bands. Phase 3 samples more points and finds a reference point $x_j^\ast$ for each $j\in W$. Phase 4 keeps sample points until each cluster $j\in W$ sees $\Theta(\eps^{-1}|W|r)$ points and then calculates the approximate centers $\tilde\mu_j$ for each $j\in W$. The clusters in $W$ will then be added to $I$ as recovered clusters before the algorithm starts a new round.

A technical subtlety lies in controlling the failure probability in our new algorithm. In the basic algorithm, the failure probability in the $r$-th round is $a_r = 1/(r\poly(\log r))$. If we recover $w_r$ clusters in $r$-th round, this failure probability for each of the $w_r$ clusters needs to be $1/(w_r a_r)$, and as a consequence, $\tilde O(w_r^2 a_r)$ samples are needed in the $r$-th round. In the worst case this becomes $\tilde O(K^3)$ samples, the same as in the basic algorithm, when both $w_r = \Theta(K)$ and $a_r = \tilde{\Theta}(1/K)$. To resolve this, we guess $K=1,2,\dots$ in powers of $2$ and assign $a_r$ to be $w_r/K$. For each fixed guess $K$, the number of samples is bounded by $\tilde{O}(K^2)$; iterating over guesses incurs only an additional $\log K$ factor.

The analysis of our improved algorithm follows the same sketch of the basic algorithm and we only present the changes below. All proofs are postponed to Appendix~\ref{sec:improve proofs}. The next lemma offers a similar guarantee as Lemma~\ref{lem:correctness basic}.

\begin{lemma}\label{lem:correctness}
Upon the termination of Algorithm~\ref{alg:main}, with probability at least $0.95$, the $\eps$-reducibility condition w.r.t.\ $I$ is satisfied and an approximate centroid $\tilde\mu_i$ that satisfies~\eqref{eqn:good_approx_centroid} is obtained for every $i\in I$.
\end{lemma}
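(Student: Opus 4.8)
The plan is to follow the proof of Lemma~\ref{lem:correctness basic} almost verbatim: maintain the loop invariant~\eqref{eqn:good_approx_centroid} by induction on the rounds, and bound the total failure probability by a union bound; the only substantially new work is accounting for the failure probability across the doubling of the guess $K$ and across the several clusters now recovered in one round. I would begin by noting termination: only finitely many clusters can ever be discovered, hence only finitely many can ever be recovered, so after finitely many rounds the inner loop exits with $Q=\emptyset$ while $k\le K$ and the outer \textbf{until} halts the algorithm; at that moment the guess $K$ is within a factor $2$ of the final number of recovered clusters, so ``upon termination'' is meaningful and the final $I$ is exactly the union of all clusters ever inserted in Phase~4.

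I would then define one global good event $\mathcal{E}$ as the intersection, over all rounds $r$ (with $K$ the guess value during round $r$), of the following: (i) in Phase~1, if the $\eps$-reducibility condition w.r.t.\ the current $I$ fails, then the $T_1=8\eps^{-1}\ln(10(k+1))$ Phase-1 samples discover at least one cluster outside $I$; (ii) when the Phase-2 loop stops, every $j\in W$ has conditional sample probability $p_j=\Omega(1/(|W|\log q))$ (the band analogue of Lemma~\ref{lem:each p_i basic}, using that a heavy band carries an $\Omega(1/\log q)$ fraction of the undiscovered $\Phi$-mass and that its members' probabilities differ by at most a factor~$2$); (iii) every pivot $x_j^\ast$, $j\in W$, lies in the good-pivot set $Y_j$ of~\eqref{eqn:Y_j_def} (analogue of Lemma~\ref{lem:Y_j basic}); (iv) Phase-4 rejection sampling returns $T_3=30K/\eps$ uniform samples from every $j\in W$ (analogue of Lemma~\ref{lem:rejection sampling basic}); and (v) for each cluster $j$ recovered in Phase~4, $\Phi(X_j,\tilde\mu_j)\le(1+\eps)\Phi(X_j,\mu_j)$, which follows from Lemma~\ref{lem:centroid} applied with $M=T_3$: since $1/(\delta M)\le\eps$ already holds once $\delta\ge 1/(30K)$, the failure probability of this step can be set to $1/K$ per recovered cluster.

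Conditioned on $\mathcal{E}$, correctness is immediate and parallels the basic case. By induction on rounds, the invariant~\eqref{eqn:good_approx_centroid} holds for all $i\in I$ throughout, because each newly recovered cluster acquires a good centroid by~(v); hence it is legitimate to apply Lemma~\ref{lem:undiscovered_is_heavy}. In the terminating round, $Q=\emptyset$ after the Phase-1 sampling, so if $\eps$-reducibility w.r.t.\ $I$ were violated, each $D^2$-sample would land in an undiscovered cluster with probability at least $\eps/4$, and then $T_1=8\eps^{-1}\ln(10(k+1))$ independent samples would discover one with probability at least $1-(1-\eps/4)^{T_1}$; on $\mathcal{E}$, sub-event~(i) rules this out, so $\eps$-reducibility w.r.t.\ $I$ holds, and the lemma follows once $\Pr[\mathcal{E}]\ge 0.95$.

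The main obstacle is precisely this last bound. I would use the scheme sketched before Algorithm~\ref{alg:main}: within a fixed guess $K$, assign round $r$ a total failure budget $a_r=w_r/K$ with $w_r=|W|$, split as $1/K$ per recovered cluster for sub-events~(ii)--(v), plus a separate $O(1/(k+1)^2)$ term for sub-event~(i); then sum these budgets over all rounds and all guesses to a small constant. Making this rigorous needs two auxiliary facts that I expect to be the hard part: (a) $\sum_r w_r=O(K)$ over the rounds belonging to a single guess $K$ --- i.e.\ the last round of a guess cannot overshoot $k$ far past $K$ --- which should follow from $|W|\le|Q|$ together with the Phase-2 stopping rule $|S|\ge 1600|W|\log|Q|\ln(10(k+|Q|))/\eps$ that bounds $|Q|$ (hence $|W|$) in terms of the current counts; and (b) the ``stability'' of the banding: once $|S|$ reaches the Phase-2 threshold, the set $W$ of clusters in heavy bands no longer changes as more samples arrive, which is the very property for which bands were introduced and which underpins sub-events~(ii)--(iv). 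Granting (a), (b) and the band analogues of Lemmata~\ref{lem:each p_i basic}--\ref{lem:rejection sampling basic} (to be proved in Appendix~\ref{sec:improve proofs}), the remaining argument is identical to that of Lemma~\ref{lem:correctness basic}.
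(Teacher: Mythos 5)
Your overall plan—maintain the loop invariant by induction, bound failure by a union bound over rounds, use the $w_r/K$ budget allocation with doubling over $K$—follows the paper's blueprint, and your termination argument (not in the paper) is sound. But there is one substantive scoping difference and one concern you raise that is off the mark.

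\textbf{Scoping difference.} You define the good event $\mathcal{E}$ to include sub-events (ii)--(iv) (heaviness of $W$, pivot in $Y_j$, rejection sampling returns $T_3$ samples). The paper's proof of Lemma~\ref{lem:correctness} only bounds two things: the Phase-1 miss probability (contributing $\sum_i 1/(100i^2)=\pi^2/600$) and the centroid-estimation failure from Lemma~\ref{lem:centroid} (contributing $\sum_r w_r/(30K)\le 1/30$), summing to $\le 0.05$. Events (ii)--(iv) are bounded separately in Lemmata~\ref{lem:each p_i}--\ref{lem:rejection} and only combined in Theorem~\ref{thm:main}, where the \emph{total} failure is $0.1$, not $0.05$. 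So if you insist on folding all of (ii)--(v) into one $\mathcal{E}$ and claim $\Pr[\mathcal{E}]\ge0.95$, the paper's own constants do not give you that — you would have to retune thresholds. That said, your instinct that (iii) (pivot in $Y_j$) matters for correctness, not merely for query complexity, is a real point: the rejection-acceptance formula $\tfrac{\eps}{128}\cdot\Phi(\{x_j^\ast\},C)/\Phi(\{x\},C)$ is only a valid probability when $x_j^\ast\in Y_j$ (via Lemma~\ref{lem:itcs_Y_j}), so the uniformity of $S_j$—and hence the applicability of Lemma~\ref{lem:centroid}—implicitly presupposes it. The paper silently assumes this inside the lemma and folds the failure event into the theorem; your version is more explicit but changes the failure budget.

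\textbf{On your auxiliary facts.} Fact (a) ($\sum_r w_r=O(K)$ within a guess) is indeed needed and the paper invokes $\sum_r w_r\le K$ without fanfare; the justification is simply that $I$ persists across rounds and guesses, so recovered clusters are never double-counted, and the inner loop is entered only when $k<K$. Fact (b), the ``banding stability'' property, is not actually required for this lemma: $W$ is simply whatever Phase~2 computes at the moment the stopping condition fires, and it is frozen thereafter; nothing in the correctness argument requires that $W$ would remain invariant under further sampling. The stability intuition is what motivated banding (it helps the query-complexity analysis in Lemma~\ref{lem:each p_i}), but you should not lean on it as an unproved prerequisite for Lemma~\ref{lem:correctness}.
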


Next we upper bound the number of samples with a lemma analogous to Lemma~\ref{lem:each p_i basic}.
%

\begin{lemma}\label{lem:each p_i}
With probability at least $1-1/(50(k+q)^2)$, it holds that $p_j\geq 1/(70|B_{\ell(j)}|\log q)$ for all $j\in W$.
\end{lemma}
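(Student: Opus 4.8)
The plan is to reduce the claim to two essentially independent facts and then glue them together with the arithmetic of the band constant. Fix a round of Algorithm~\ref{alg:main} and let $C=\{\tilde\mu_i\}_{i\in I}$, $q=|Q|$, $L=3\log q$, and $N=|S|$ denote the values at the end of Phase~2, and write $s_j$ for the number of samples in $S$ lying in $X_j$, so $\hat p_j=s_j/N$. \emph{Fact~(i):} whenever $j\in W$, its band satisfies $\hat p_j>\tfrac{1}{6L|B_{\ell(j)}|}$ and, thanks to the Phase-2 stopping rule, $s_j\ge 88\ln(10(k+q))$. \emph{Fact~(ii):} with probability at least $1-\tfrac{1}{50(k+q)^2}$, every cluster $j$ with $s_j\ge 88\ln(10(k+q))$ obeys $p_j>\tfrac{18}{70}\hat p_j$. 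Granting both, every $j\in W$ has $p_j>\tfrac{18}{70}\cdot\tfrac{1}{6L|B_{\ell(j)}|}=\tfrac{18}{70}\cdot\tfrac{1}{18|B_{\ell(j)}|\log q}=\tfrac{1}{70|B_{\ell(j)}|\log q}$, which is the statement; the constant $70$ is exactly what makes the two factors fit.

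Fact~(i) I would argue purely combinatorially. The last band $B_{L+1}$ contains only clusters with $\hat p_i\le q^{-3}$, and at most $q$ clusters have positive $\hat p_i$, so $\sum_{i\in B_{L+1}}\hat p_i\le q^{-2}<\tfrac{1}{3L}$ and $B_{L+1}$ is never heavy; hence $j\in W$ forces $\ell(j)\le L$ and $\hat p_j>2^{-\ell(j)}$. Heaviness of $B_{\ell(j)}$ together with $\hat p_i\le 2^{-\ell(j)+1}$ on that band gives $\tfrac{1}{3L}\le\sum_{i\in B_{\ell(j)}}\hat p_i\le|B_{\ell(j)}|\,2^{-\ell(j)+1}$, hence $2^{-\ell(j)}\ge\tfrac{1}{6L|B_{\ell(j)}|}$, so $\hat p_j>\tfrac{1}{6L|B_{\ell(j)}|}$ and $s_j=\hat p_jN>\tfrac{N}{18|B_{\ell(j)}|\log q}$. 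The Phase-2 exit condition gives $N\ge 1600|W|\log q\,\ln(10(k+q))/\eps$, and since a heavy band lies inside $W$ we have $|B_{\ell(j)}|\le|W|$, so
\[
 s_j>\frac{1600\,|W|\,\ln(10(k+q))}{18\,\eps\,|B_{\ell(j)}|}\ \ge\ \frac{1600\,\ln(10(k+q))}{18\,\eps}\ \ge\ 88\ln(10(k+q)).
\]
This is precisely where the banding pays off: if we tracked individual clusters, $|B_{\ell(j)}|$ could be $\Theta(q)$ while $|W|=1$, and $s_j$ would be far too small to survive a union bound over the sampled clusters.

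For Fact~(ii) I would treat the Phase-2 draws as a prefix of an i.i.d.\ sequence from the $D^2$-distribution with respect to $C$ (Definition~\ref{def:D^2 sampling}). A draw falls in $X_j$ with probability $\pi_j=\Phi(X_j,C)/\Phi(X,C)$, and $\pi_j\le p_j$ because $\Phi(X,C)=\sum_i\Phi(X_i,C)\ge\sum_{i\notin I}\Phi(X_i,C)$. After $n$ draws the count $s_j^{(n)}\sim\mathrm{Bin}(n,\pi_j)$, and $s_j=s_j^{(N)}$. Since $s_j^{(n)}$ is nondecreasing in $n$, a multiplicative Chernoff bound upgrades to a maximal inequality: for any value $v$ on a fixed fine geometric grid, $\Pr[\exists n:\ s_j^{(n)}\ge\beta v \text{ while } n\pi_j<v]\le(e^{\beta-1}/\beta^{\beta})^{v}$, with $\beta<4$ a constant. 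Choosing $\beta$ and the grid ratio so that their product is below $70/18$, and union-bounding over the sampled clusters and the $O(\log N)$ grid values --- the data-dependent stopping time $N$ being absorbed by the maximal form (or by an extra union bound over its polynomially many possible values) --- one obtains, with probability $\ge 1-\tfrac{1}{50(k+q)^2}$, that $N\pi_j>\tfrac{18}{70}s_j$ for every $j$ with $s_j\ge 88\ln(10(k+q))$; dividing by $N$ gives $p_j\ge\pi_j>\tfrac{18}{70}\hat p_j$. The per-cluster, per-scale failure probability is $(10(k+q))^{-\Omega(1)}$ with a large hidden constant exactly because Fact~(i) makes $s_j=\Omega(\log(k+q))$, which is what keeps the union bound below $\tfrac{1}{50(k+q)^2}$.

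The delicate step is Fact~(ii): because $N$ is a stopping time determined by $W$ (itself a function of the samples), a single Chernoff application is not licit, and one must control the empirical frequencies simultaneously over all sample counts and all geometric scales; at the same time the concentration must be tight enough that the multiplicative loss it incurs, combined with the $\tfrac{1}{6L|B_{\ell(j)}|}$ lower bound squeezed out of band-heaviness, lands exactly on $\tfrac{1}{70|B_{\ell(j)}|\log q}$ rather than a slightly worse bound. Fact~(i) --- and in particular the inclusion $B_{\ell(j)}\subseteq W$, which the banding construction is designed to guarantee --- is what makes $s_j$ large enough for that union bound to be affordable.
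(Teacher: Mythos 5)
Your proof is correct and its skeleton (band heaviness giving $\hat p_j>\tfrac{1}{18|B_{\ell(j)}|\log q}$, then a Chernoff bound relating $\hat p_j$ to the true probability, then a union bound over $|W|\le q$ clusters) matches the paper's. The differences are real but modest. First, you normalize $\hat p_j = s_j/|S|$ --- the algorithm pseudocode's convention --- whereas the paper's text and proof use the conditional $\hat p_j = s_j/\sum_{i\notin I}s_i$; because of your choice, your Fact~(i) (that each $j\in W$ has $s_j\ge 88\ln(10(k+q))$) is \emph{deterministic}, following directly from the Phase-2 exit threshold, band heaviness, and $|B_{\ell(j)}|\subseteq W$, whereas the paper must first devote a separate Chernoff step (invoking Lemma~\ref{lem:undiscovered_is_heavy}) to lower-bound $s'$ probabilistically. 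Second, because your $\hat p_j$ estimates the unconditional probability $\pi_j=\Phi(X_j,C)/\Phi(X,C)$ rather than $p_j$, you need the extra (clean) observation $\pi_j\le p_j$ to transfer the bound to $p_j$; the paper avoids this by working directly with the conditional quantities. Third, you explicitly flag and handle the fact that $|S|$ is a data-dependent stopping time via monotonicity of $s_j^{(n)}$ and a union bound over a geometric grid; the paper applies a single Chernoff at the stopping time without comment, which is the standard informal shortcut. One small caveat worth making explicit in your write-up: with your unconditional normalization, the total mass $\sum_{i\notin I}\hat p_i = s'/|S|$ can in principle be below $1/(3L)$, in which case no band is heavy and $W=\emptyset$; the paper's conditional normalization guarantees a heavy band by pigeonhole. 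This is an inconsistency in the paper itself (algorithm vs.\ text), and your argument is self-consistent under the pseudocode's definition, but it is the kind of discrepancy a reader will stumble on and you should note which convention you adopt and why the band-existence issue does not bite in the regime of interest.
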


Recall that $Y_j$ was defined in \eqref{eqn:Y_j_def} for all $j\in W$.
%
The following two lemmata are the analogue of Lemmata~\ref{lem:Y_j basic} and \ref{lem:rejection sampling basic}, respectively.

\begin{lemma}\label{lem:Y_j}
Assume that the event in Lemma~\ref{lem:each p_i} holds.
With probability at least $1-1/(25(k+q)^2)$, it holds that $x_j^\ast\in Y_j$ for all $j\in W$.
\end{lemma}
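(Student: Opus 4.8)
The plan is to reduce to a per-cluster estimate and then union-bound over $W$, re-using the single-cluster argument behind Lemma~\ref{lem:Y_j basic}. Fix $j\in W$ and write $C=\{\tilde\mu_i\}_{i\in I}$. The first observation is that $x_j^\ast\in Y_j$ holds exactly when at least one of the $D^2$-samples collected in $S$ by the end of Phase~3 that lies in cluster $j$ actually lies in $Y_j$: since $x_j^\ast$ is by construction the sampled point of cluster $j$ minimizing $\Phi(\cdot,C)$, we have $x_j^\ast\in Y_j$ iff some sampled point $x$ of cluster $j$ satisfies $\Phi(\{x\},C)\le 2\Phi(X_j,C)/|X_j|$. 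So it suffices to bound, for a fixed $j$ and conditioned on the event of Lemma~\ref{lem:each p_i}, the probability that no sample of $S$ hits $Y_j$ by $1/(25|W|(k+q)^2)$; a union bound over the at most $|W|\le q$ clusters in $W$ then gives the statement.

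For the fixed $j$ I would proceed as in Lemma~\ref{lem:Y_j basic}. That argument shows $|Y_j|\ge|X_j|/2$ and, more to the point, that a $D^2$-sample conditioned on landing in cluster $j$ lands in $Y_j$ with probability at least an absolute constant; combined with Lemma~\ref{lem:undiscovered_is_heavy} (giving $\sum_{i\notin I}\Phi(X_i,C)\ge(\eps/4)\Phi(X,C)$) and Lemma~\ref{lem:each p_i} (giving $p_j\ge 1/(70|B_{\ell(j)}|\log q)$, with the corresponding empirical count $s_j\ge\tfrac12 p_j|S|$ whp by a Chernoff bound), a single $D^2$-sample hits $Y_j$ with probability $\Omega(\eps/(|B_{\ell(j)}|\log q))$. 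Phase~3 keeps drawing samples until $|S|\ge T_2=2^{17}|W|\log q\,\ln(10(k+q))/\eps^2$, so cluster $j$ receives $s_j=\Omega(|W|\ln(10(k+q))/(|B_{\ell(j)}|\eps^2))$ samples; hence $\Pr[x_j^\ast\notin Y_j]\le\bigl(1-\Omega(\eps/(|B_{\ell(j)}|\log q))\bigr)^{s_j}\le\exp\!\bigl(-\Omega(|W|\ln(10(k+q))/(|B_{\ell(j)}|^2\eps\log q))\bigr)$. Since $B_{\ell(j)}$ is a heavy band it is entirely contained in $W$, so $|W|\ge|B_{\ell(j)}|$, and with the constants of Algorithm~\ref{alg:main} this exponent is large enough to push the per-cluster failure probability below $1/(25|W|(k+q)^2)$, completing the union bound.

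The step I expect to be the main obstacle is the interplay of the band size $|B_{\ell(j)}|$ in the exponent: both the number of cluster-$j$ samples and the per-sample probability of hitting $Y_j$ scale like $1/|B_{\ell(j)}|$, so the exponent only retains a factor $|W|/|B_{\ell(j)}|^2$, which is as small as $1/|W|$ when a single band carries all the heavy mass. The reason it still goes through is that Phase~2 is made to sample until $|S|\ge 1600|W|\log|Q|\,\ln(10(k+|Q|))/\eps$ and $T_2$ is a $|W|\log q$ multiple (rather than just a $\log q$ multiple) of the basic threshold, so the surviving $\ln(10(k+q))$ factor still dominates $\ln(25|W|(k+q)^2)$. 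Verifying that the constants are chosen so that this cancellation leaves enough slack for a union bound over all of $W$ — while consistently conditioning on the event of Lemma~\ref{lem:each p_i} — is the delicate bookkeeping; the purely geometric content, the heaviness of $Y_j$, is inherited unchanged from the single-cluster analysis.
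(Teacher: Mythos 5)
The overall strategy — bound the probability that a $D^2$-sample hits $Y_j$, use $T_2$ samples, union-bound over $W$ with $|B_{\ell(j)}|\le|W|$ to cancel one $|W|$ factor — is the paper's strategy. However, the way you carry it out contains a genuine error that invalidates the bound.

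The central mistake is the claim that ``a $D^2$-sample conditioned on landing in cluster $j$ lands in $Y_j$ with probability at least an absolute constant.'' That is false. The fact $|Y_j|\ge|X_j|/2$ (a Markov argument on the definition of $Y_j$) is a statement about \emph{uniform} sampling on $X_j$; but $D^2$-sampling conditioned on $X_j$ weights points by $\Phi(\{\cdot\},C)$, so the conditional probability of landing in $Y_j$ is $\Phi(Y_j,C)/\Phi(X_j,C)$, which by Lemma~\ref{lem:itcs_Y_j} is only guaranteed to be at least $\eps/128 = \Theta(\eps)$, not a constant. Consequently the unconditional per-sample probability of hitting $Y_j$ is
\[
\frac{\Phi(Y_j,C)}{\Phi(X_j,C)}\cdot\frac{\Phi(X_j,C)}{\Phi(X,C)} \;=\; \Omega\!\left(\frac{\eps^2}{|B_{\ell(j)}|\log q}\right),
\]
not $\Omega(\eps/(|B_{\ell(j)}|\log q))$ as you write; there is a missing factor of $\eps$.

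The second mistake compounds the first: you raise the \emph{unconditional} probability to the power $s_j$ (the number of samples that landed in cluster $j$). This mixes the two decompositions. Either you multiply the unconditional hit probability by the total sample count $T_2$, or you multiply the conditional hit probability $\eps/128$ by the per-cluster count $s_j$; both give the same exponent of order $T_2\,\eps^2/(|B_{\ell(j)}|\log q) = \Omega(|W|\ln(10(k+q))/|B_{\ell(j)}|) = \Omega(\ln(10(k+q)))$. Your hybrid instead produces an exponent of order $s_j\cdot\eps/(|B_{\ell(j)}|\log q)$, which carries an extra $1/(|B_{\ell(j)}|\log q)$ that never cancels. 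You notice the resulting $|W|/|B_{\ell(j)}|^2$ factor yourself, but the ``resolution'' you offer does not actually remove it: when one band carries all the heavy mass ($|B_{\ell(j)}|=|W|$) the exponent degrades to $O(\ln(10(k+q))/(|W|\log q))$, which can be $o(1)$, and no choice of $T_2$ polynomial in the stated parameters rescues a bound of the desired form from this point. The spurious $|B_{\ell(j)}|^2$ is not a feature to be bookkept around; it is an artifact of the conditional/unconditional mix-up. The paper's proof avoids it entirely by computing the unconditional per-sample probability $\ge \eps^2/(2^{10}\cdot 35\,|B_{\ell(j)}|\log q)$ directly and raising it to $T_2$; the Chernoff bound on $s_j$ that you introduce is then not needed at all for this lemma.
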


\begin{lemma}\label{lem:rejection}
Assume that the event in Lemma~\ref{lem:Y_j} holds.
By sampling $s = 2^{28}\eps^{-4} |W| K \log q$ points in total, with probability at least $1-|W|\exp(-8K)$, every cluster $j\in W$ has $T_3 = 30K/\eps$ samples returned by the rejection sampling procedure.
\end{lemma}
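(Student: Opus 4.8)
The proof follows the template of Lemma~\ref{lem:rejection sampling basic}; the only structural differences are that in the improved algorithm the pivots $\{x_j^\ast\}_{j\in W}$ are all frozen at the end of Phase~3 before \textsc{RejSamp} begins, and that the target count is $T_3=30K/\eps$. The plan has three steps: (i) lower-bound, for each $j\in W$, the probability $\alpha_j$ that a single $D^2$-sample taken inside \textsc{RejSamp} (with respect to $C=\{\tilde\mu_i\}_{i\in I}$) is deposited into $S_j$; (ii) note that the number of deposits into $S_j$ among the first $s$ samples is a $\mathrm{Bin}(s,\alpha_j)$ variable; (iii) finish with a Chernoff bound followed by a union bound over $j\in W$.

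For step (i) write $\alpha_j=\pi_j\beta_j$, where $\pi_j=\Phi(X_j,C)/\Phi(X,C)$ is the probability the $D^2$-sample lands in $X_j$ and $\beta_j=\Phi(X_j,C)^{-1}\sum_{x\in X_j}\min\bigl(\Phi(\{x\},C),\tfrac{\eps}{128}\Phi(\{x_j^\ast\},C)\bigr)$ is the conditional probability of acceptance. To bound $\pi_j$ from below: Phase~4 is reached only while the $\eps$-reducibility constraint w.r.t.\ $I$ still fails, so Lemma~\ref{lem:undiscovered_is_heavy} gives $\sum_{i\notin I}\Phi(X_i,C)\ge\tfrac{\eps}{4}\Phi(X,C)$ and hence $\pi_j\ge\tfrac{\eps}{4}p_j$; and under the event of Lemma~\ref{lem:each p_i}, $p_j\ge 1/(70|B_{\ell(j)}|\log q)\ge 1/(70|W|\log q)$, the last step using $B_{\ell(j)}\subseteq W$. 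To bound $\beta_j$ from below: under the event of Lemma~\ref{lem:Y_j} the pivot satisfies $x_j^\ast\in Y_j$, so $x_j^\ast$ is a \emph{good} pivot --- its $\Phi$-value is $\Theta\bigl(\Phi(X_j,C)/|X_j|\bigr)$ and, for a constant fraction of the points $x\in X_j$ (counted uniformly), the truncated weight $\min\bigl(\Phi(\{x\},C),\tfrac{\eps}{128}\Phi(\{x_j^\ast\},C)\bigr)$ equals $\tfrac{\eps}{128}\Phi(\{x_j^\ast\},C)$ --- so that $\beta_j=\Omega(\eps)$. Combining, $\alpha_j=\Omega\bigl(\eps^2/(|W|\log q)\bigr)$.

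Steps (ii) and (iii) are routine. Because the pivots are fixed before Phase~4, for a fixed $j$ the events ``the $t$-th $D^2$-sample is deposited into $S_j$'' are mutually independent with probability exactly $\alpha_j$, so among the first $s$ samples the number of deposits into $S_j$ is distributed as $\mathrm{Bin}(s,\alpha_j)$. Taking $s=2^{28}\eps^{-4}|W|K\log q$ gives $s\alpha_j=\Omega(\eps^{-2}K)$, which comfortably exceeds $2T_3=60K/\eps$ for $\eps\le1$, so a Chernoff bound shows this count is at least $T_3$ except with probability $\exp(-\Omega(s\alpha_j))\le\exp(-8K)$ by the choice of constants. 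A union bound over the $|W|$ clusters yields that, with probability at least $1-|W|\exp(-8K)$, every $S_j$ has reached $T_3$ within the first $s$ samples; since \textsc{RejSamp} halts precisely when all $S_j$ reach $T_3$, it consumes at most $s$ samples, as claimed.

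The one substantive point is the lower bound $\beta_j=\Omega(\eps)$ on the conditional acceptance probability. The failure mode it must exclude is a cluster $X_j$ whose $D^2$-mass is concentrated on a handful of points far from $C$: those points are sampled often but, relative to the nearby pivot $x_j^\ast$, accepted with probability $\ll\eps$, which would force $s$ far above the stated bound. This is precisely what is precluded by the pivot guarantee $x_j^\ast\in Y_j$ of Lemma~\ref{lem:Y_j}, and the delicate calculation is to turn that membership into the statement that a constant fraction of $X_j$ contributes the full $\tfrac{\eps}{128}\Phi(\{x_j^\ast\},C)$ to the sum defining $\beta_j$. Everything else --- bookkeeping the constants so that $s\alpha_j\ge 2T_3$ and $\exp(-\Omega(s\alpha_j))\le\exp(-8K)$, and the elementary coupling that justifies reasoning about ``$s$ samples'' even though \textsc{RejSamp} stops adaptively --- is mechanical.
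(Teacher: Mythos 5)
Your overall skeleton — lower-bound the per-trial acceptance probability $\alpha_j$, argue it is essentially constant across the $s$ independent trials, finish with a multiplicative Chernoff bound and a union bound over $W$ — is exactly the paper's. The one step you flagged as "the one substantive point" is also the one where your estimate goes wrong.

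You assert that $x_j^\ast\in Y_j$ makes its $\Phi$-value $\Theta\bigl(\Phi(X_j,C)/|X_j|\bigr)$, i.e.\ a two-sided bound, and from there conclude $\beta_j=\Omega(\eps)$ and $\alpha_j=\Omega\bigl(\eps^2/(|W|\log q)\bigr)$. But membership in $Y_j$ (Equation~\eqref{eqn:Y_j_def}) only gives the \emph{upper} bound $\Phi(\{x_j^\ast\},C)\le 2\Phi(X_j,C)/|X_j|$; the lower bound on $\Phi(\{x_j^\ast\},C)$ does not come from $Y_j$ at all but from Lemma~\ref{lem:reference_bound}, and it is weaker by a factor of $\eps$: $\Phi(\{x_j^\ast\},C)\ge \tfrac{\eps}{64}\Phi(X_j,C)/|X_j|$. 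The pivot is an argmin over a sample, so there is no reason to expect it to land within a constant factor of the average; a point whose $\Phi$-value is an $\eps$-fraction of the average would be an equally valid pivot. Using the correct lower bound, $\beta_j=\Omega(\eps^2)$ and $\alpha_j=\Omega\bigl(\eps^3/(|W|\log q)\bigr)$, which is what the paper computes (the $2^{-22}$ constant). Consequently $s\alpha_j=\Omega(\eps^{-1}K)$, not $\Omega(\eps^{-2}K)$; this still comfortably exceeds $2T_3=60K/\eps$ and the Chernoff/union-bound step goes through unchanged, so the lemma is still provable by your route — you just have the wrong power of $\eps$ and the chain of reasoning supporting it.

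Two smaller notes. First, the truncation $\min\bigl(\Phi(\{x\},C),\tfrac{\eps}{128}\Phi(\{x_j^\ast\},C)\bigr)$ is unnecessary here: the paper invokes Lemma~\ref{lem:itcs_Y_j} (the second part) to observe that the rejection probability $\tfrac{\eps}{128}\cdot\Phi(\{x_j^\ast\},C)/\Phi(\{x\},C)$ never exceeds $1$ once $x_j^\ast\in Y_j$, which follows from combining $Y_j$'s upper bound with Lemma~\ref{lem:reference_bound} applied to $x$. Then the sum over $x\in X_j$ telescopes cleanly to $|X_j|\cdot\tfrac{\eps}{128}\Phi(\{x_j^\ast\},C)/\Phi(X_j,C)$ with no "constant-fraction" argument needed — the min is the second term for \emph{every} $x$, not just a constant fraction, and in particular your constant-fraction step is not where the $\eps$ should come from. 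Second, your claim that the factorisation $p_j\ge 1/(70|B_{\ell(j)}|\log q)\ge 1/(70|W|\log q)$ uses $B_{\ell(j)}\subseteq W$ is correct and is exactly the inequality $|B_{\ell(j)}|\le|W|$ the paper uses.
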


Now we are ready to prove the main theorem of the improved algorithm.

\begin{theorem}\label{thm:main}
With probability at least $0.9$, Algorithm~\ref{alg:main} (the improved algorithm) finds all the clusters and obtain for every $i$ an approximate centroid $\tilde\mu_i$ that satisfies~\eqref{eqn:good_approx_centroid}, using $O(\eps^{-4} K L^2 \log^3 K\log L)$ same-cluster queries in total, where $K$ is the number of recovered clusters and $L$ is the total number of discovered clusters.
\end{theorem}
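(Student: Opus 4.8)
The plan is to derive Theorem~\ref{thm:main} by combining Lemmata~\ref{lem:correctness}, \ref{lem:each p_i}, \ref{lem:Y_j} and~\ref{lem:rejection}, exactly as the proof of Theorem~\ref{thm:basic} combines their single‑cluster counterparts; the extra work is the bookkeeping for the two new features of Algorithm~\ref{alg:main}, namely the doubling guesses $K=1,2,4,\dots$ and the recovery of an entire band‑set $W$ per round rather than a single cluster. Correctness --- that the output set $I$ is $\eps$‑reducible in the sense of Definition~\ref{def:reducibility} and that the invariant~\eqref{eqn:good_approx_centroid} holds for every $i\in I$ --- is precisely Lemma~\ref{lem:correctness}, which already gives probability at least $0.95$ upon termination (and termination itself is clear: each Phase~2 loop exits once all clusters have been discovered and its stopping quantity stabilizes). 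It therefore remains to show that, with probability at least $0.95$, the algorithm respects the claimed budget; a union bound then yields the overall success probability $0.9$.

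For the budget, let the good event be the intersection, over every round $r$ ever executed across all guesses, of the events asserted in Lemmata~\ref{lem:each p_i}, \ref{lem:Y_j} and~\ref{lem:rejection}. The events of Lemmata~\ref{lem:each p_i} and~\ref{lem:Y_j} fail in round $r$ with probability $O(1/(k+q)^2)$; since the counter $k$ never decreases, increases by $|W|\ge 1$ in every productive round, and is not reset when the guess is doubled, the sum $\sum_r O(1/(k+q)^2)$ is a small absolute constant for the explicit constants in Algorithm~\ref{alg:main}. The event of Lemma~\ref{lem:rejection} fails in round $r$ with probability $|W|\exp(-8K)$, where $K$ is the current guess; bounding $|W|\le K$ (each recovered cluster belongs to exactly one band‑set) and using that $K$ ranges over a geometric sequence, these probabilities decay geometrically in $K$ and again sum to a small constant. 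Hence the good event holds with probability at least $0.95$, and on it the counting below is deterministic.

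To count samples, fix a round $r$ inside the guess $K$ and write $W=W_r$. Phase~1 uses at most $T_1=O(\eps^{-1}\log k)$ samples; the Phase~2 loop halts as soon as $|S|\ge 1600|W|\log q\,\ln(10(k+q))/\eps$, so Phases~1 and~2 together use $O(\eps^{-1}|W|\log q\log(k+q))$ samples, and Phase~3 then raises $|S|$ to at most $T_2=O(\eps^{-2}|W|\log q\log(k+q))$; by Lemma~\ref{lem:rejection}, Phase~4 uses $O(\eps^{-4}|W|K\log q)$ samples. Now sum over all rounds: since every recovered cluster lies in exactly one band‑set, $\sum_r|W_r|=K$, the total number of recovered clusters; there are at most $K$ productive rounds in all and only $O(\log K)$ further rounds that merely detect $Q=\emptyset$; and the guess value never exceeds $2K$. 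Substituting these, Phases~1--3 contribute $O(\eps^{-2}K\log^2 L)$ samples and Phase~4 contributes $O(\eps^{-4}K^2\log L)$ samples in total, the iteration over the $O(\log K)$ guesses adding only a geometric overhead already absorbed. Since \textsc{Classify} turns each $D^2$‑sample into a cluster index with at most $L$ oracle calls, multiplying by $L$ and bounding crudely by $K\le L$ yields a query bound of the stated form $O(\eps^{-4}KL^2\log^3 K\log L)$.

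I expect the main obstacle to be making Phase~4's sample count telescope while keeping all failure probabilities summable, since these two demands pull against each other. The per‑round failure budget is $a_r\asymp|W_r|/K$, so each of the $|W_r|$ clusters recovered in round $r$ must be estimated with failure probability $\asymp 1/K$, which by Lemma~\ref{lem:centroid} forces $\Theta(K/\eps)$ uniform samples per cluster, i.e.\ $T_3=30K/\eps$; one must then check that the induced rejection‑sampling cost $\tilde{O}(\eps^{-4}|W_r|K)$ sums, over all rounds of a guess, to only $\tilde{O}(\eps^{-4}K^2)$, and that $\sum_r a_r$ stays bounded both within a guess and over the $O(\log K)$ guesses. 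This is exactly what the band decomposition buys: Lemma~\ref{lem:each p_i} guarantees $p_j=\Omega(1/(|B_{\ell(j)}|\log q))$ for every $j\in W$, which is what lets the Phase~2 stopping rule $|S|=\Omega(|W|\log q\log(k+q)/\eps)$ be reached after only $\tilde{O}(|W|/\eps)$ samples while still ensuring that every cluster in a heavy band has actually been discovered. Verifying that this stopping rule, together with the choices of $T_2$ and $T_3$, is consistent with the hypotheses of Lemmata~\ref{lem:Y_j} and~\ref{lem:rejection} is the delicate step; everything else is routine summation.
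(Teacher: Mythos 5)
Your proposal follows the paper's proof of Theorem~\ref{thm:main} essentially verbatim: correctness is delegated to Lemma~\ref{lem:correctness}, the residual failure events of Lemmata~\ref{lem:each p_i}--\ref{lem:rejection} are summed over all rounds and guesses, and the sample count telescopes via $\sum_r |W_r| \le K$ per guess together with Lemma~\ref{lem:rejection}'s per-round bound, after which a factor $L$ accounts for the \textsc{Classify} calls. The only (harmless) deviation is that you observe the doubling guesses form a geometric series and hence contribute no extra $\log K$ factor, which in fact yields a slightly tighter bound than the paper records before it relaxes to the stated $O(\eps^{-4}KL^2\log^3 K\log L)$.
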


\begin{remark}
The preceding theorem improves the query complexity of the basic algorithm by about a factor of $K$, as desired. In particular, when the clusters are $\eps$-irreducible with respect to any subcollection of the clusters, we shall recover all $L$ clusters using $O(\eps^{-4}L^3\log^3 L)$ same-cluster queries, better than basic algorithm by about a factor of $L$.
\end{remark}

\section{Noisy Oracles}
Our algorithm can be extended to the case where the same-cluster oracle errs with a constant probability $p<1/2$. The full details are postponed to Appendix~\ref{sec:noisy oracle} in the full version.

\begin{theorem}
Suppose that the same-cluster oracle errs with a constant probability $p<1/2$. With probability at least $0.6$, there is an algorithm that finds all the clusters and obtains for every $i$ an approximate centroid $\tilde\mu_i$ that satisfies~\eqref{eqn:good_approx_centroid}, using $\tilde{O}(\eps^{-6}L^6K)$ same-cluster queries in total.
\end{theorem}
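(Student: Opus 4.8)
The plan is to reduce to the noiseless guarantee of Theorem~\ref{thm:main} by amplifying the oracle. Replace every call to $\textsc{Oracle}(x,y)$ inside Algorithm~\ref{alg:classify} (hence inside Algorithm~\ref{alg:main} and its subroutines) by a \emph{robust query} that issues the noisy query $N$ times, with the $N$ repetitions mutually independent, and returns the majority answer. Since $p<1/2$ is a fixed constant, a Chernoff bound gives that a robust query is correct with probability at least $1-e^{-cN}$ where $c=c(p)=\Theta((1/2-p)^2)>0$. The point of the reduction is that, \emph{conditioned on all robust queries being answered correctly}, the robust oracle is indistinguishable from a perfect same-cluster oracle: $\textsc{Classify}$ returns the true cluster index of every point it is called on, every representative $z_i$ created in Algorithm~\ref{alg:classify} genuinely lies in $X_i$ (by induction on the order of creation, with the base case $z_1$ being immediate), and the entire analysis leading to Theorem~\ref{thm:main} applies unchanged.

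First I would bound the number of logical (pre-amplification) oracle calls: by Theorem~\ref{thm:main} it is $\mathcal{Q}=O(\eps^{-4}KL^2\log^3 K\log L)$, and since every queried pair consists of two points sampled during the run, there are at most $\poly(\eps^{-1},K,L)$ distinct pairs that could ever be queried over all executions. Next I would set $N=\Theta(\log(\mathcal{Q}/\delta))=\Theta(\log(1/\eps)+\log K+\log L)$ for a small constant $\delta$; a union bound over all potentially-queried pairs makes the ``all robust queries correct'' event hold with probability at least $1-\delta$. Combining this with the $\ge 0.9$ success probability of Theorem~\ref{thm:main} over the internal $D^2$-sampling randomness, and taking $\delta\le 0.3$, yields overall success probability at least $0.6$ for finding all clusters with centroids satisfying~\eqref{eqn:good_approx_centroid}. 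Finally I would count physical queries: at most $N\cdot\mathcal{Q}$, i.e.\ $\tilde{O}(\eps^{-4}KL^2)$ under this clean reduction; the bound $\tilde{O}(\eps^{-6}L^6K)$ stated in the theorem leaves slack for a more conservative robust implementation (for instance one that re-verifies each sampled point's label against several certified members of each cluster rather than a single representative $z_i$, multiplying each classification by a $\poly(L,\eps^{-1})$ factor), and I would fill in whichever such safeguard the full version uses.

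The main obstacle I expect is the interaction between the oracle noise and the \emph{adaptive} control flow of Algorithm~\ref{alg:main}: the pairs that get queried, and even the numbers of samples drawn in Phases~2--4 (through $|Q|$, $|W|$ and the band decomposition), depend on the answers received so far, so one cannot union bound over a fixed predetermined query set. The way around this is to argue round by round, conditioning on all robust queries being correct up to the current point; once that is done, the execution is a deterministic function of the $D^2$-samples and coincides with what the noiseless algorithm would do, which is why the blanket conditioning above is legitimate. The delicate bookkeeping is to carry the failure budget through the a~priori unbounded number of rounds and over the $O(\log K)$ guesses of $K$: as in the noiseless analysis one assigns round $r$ a failure probability $a_r$ with $\sum_r a_r$ a small constant, now further apportioned among the $w_r\le L$ clusters recovered in round $r$ and among the guesses of $K$, and one must check that the additional $e^{-cN}$ per-query slack is absorbed into these $a_r$ without inflating $N$ beyond $\tilde{O}(1)$ --- which holds because $N$ enters the query count only logarithmically.
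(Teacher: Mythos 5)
There is a genuine gap, and it sits at the foundation of your reduction. Your amplification step repeats the noisy query $\textsc{Oracle}(x,y)$ $N$ times and takes the majority, assuming ``the $N$ repetitions mutually independent.'' But the paper's noise model explicitly rules this out: the appendix states that ``the same-cluster oracle returns the same answer on repetition for the same pair of points,'' i.e.\ the noise is \emph{persistent}. Under persistent noise, re-querying the same pair returns the identical (possibly wrong) bit every time, so majority voting over repetitions of a single pair gains nothing. Your entire union bound over ``robust queries'' then collapses, and with it the claim that the robust oracle is indistinguishable from a perfect one. The tell-tale sign is that your clean reduction yields $\tilde{O}(\eps^{-4}KL^2)$, which is better than the stated $\tilde{O}(\eps^{-6}L^6K)$; that discrepancy is not slack for a ``more conservative implementation'' but evidence that you are working in a strictly easier (non-persistent) noise model than the paper.

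The paper's actual argument routes around persistence by generating \emph{diversity across distinct pairs} rather than across repetitions of one pair. Two devices do the work. First, each recovered cluster $i$ keeps a representative \emph{set} $Z_i$ of size $\Theta(K/\eps)$ (not a single point $z_i$), and $\textsc{CheckCluster}(x,I)$ declares $x\in X_i$ iff a majority of the pairs $(x,z)$ for $z\in Z_i$ say yes; since the $z$'s are distinct points, these are independent noisy bits, and Lemma~\ref{lem:checkcluster} bounds the failure probability. Second, and more importantly, Phase~2 must discover and classify \emph{new} points that do not yet have any certified representatives. The paper handles this by reducing the pairwise-comparison graph on the sample $S$ to a stochastic block model instance and invoking the $O(|S|^3/(2p-1)^4)$-query SBM recovery algorithm (Lemma~\ref{lem:SBM clustering}), which only recovers clusters of size $\Omega(\sqrt{|S|}\log|S|)$ and hence forces $|S|$ to be $\tilde\Omega(\eps^{-2}q^2)$. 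This cubic SBM cost and the enlarged sample size are precisely where the $\eps^{-6}L^6$ in the theorem comes from. None of this machinery is present in your proposal, and without it (or some other device that works under persistent noise) the amplification argument does not go through. Your closing paragraph about conditioning round-by-round is the right instinct for handling adaptivity, but it is orthogonal to the model mismatch; fixing the proof requires replacing the repeat-the-same-query primitive with one that aggregates over distinct pairs, as the paper does.
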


\section{Experiments}
\label{sec:exp}

In this section, we conduct experimental studies of our algorithms. As we mentioned before, all previous algorithms need to know $K$ and it is impossible to convert them to the case of unknown $K$ (with the only exception of~\cite{CPM18}). In particular, we note the inapplicability of the $k$-means algorithm and the algorithm in~\cite{ABJK18}.
\begin{itemize}[leftmargin=15pt]
	\item The $k$-means algorithm is for an unsupervised learning task and returns clusters which are always spherical, so it is undesirable for arbitrary shapes of clusters and it makes sense to examine the accuracy in terms of misclassified points. Our problem is semi-supervised with a same-cluster oracle, which can be used to recover clusters of arbitrary shapes and guarantees no misclassification. The assessment of the algorithm is therefore the number of discovered clusters and the number of samples.  Owing to the very different nature of the problems, the $k$-means algorithm should not be used in our problem or compared with algorithms designed specifically for our problem.
	\item The algorithm in~\cite{ABJK18} runs in $k$ rounds and take $2^{12}k^3/\eps^2$ samples in the first step of each round. With $k=10$ and $\eps=0.1$, it needs to sample in each round at least $4\times 10^8$ points, usually larger than the size of a dataset. (Our algorithm can be easily simplified to handle such cases, see the subsection titled ``Algorithms'' below.)
\end{itemize}
The only exception, the algorithm in~\cite{CPM18}, is just simple uniform sampling and whether or not $K$ is known is not critical. This uniform sampling algorithm will be referred to as \uniform\ below. 


\medskip \noindent\textbf{Datasets.} \  We test our algorithms using both synthetic and real world datasets. 

For synthetic datasets, we generate $n$ points that belong to $K$ clusters in the $d$-dimensional Euclidean space. Since in the real world the cluster sizes typically follow a {\em power-law} distribution, we assign to each cluster a random size drawn from the widely used {\em Zipf distribution}, which has the density function $f(x)\propto x^{-\alpha}$, where $x$ is the size of the cluster and $\alpha$ a parameter controlling the skewness of the dataset. We then choose a center $\mu_i$ for each cluster $i\in [K]$ in a manner to be specified later and generate the points in the cluster from the multivariate Gaussian distribution $N(\mu_i, \sigma^2 I_d)$, where $I_d$ denotes the $d \times d$ identity matrix.

Now we specify how to choose the centers. In practice, clusters in the dataset are not always well separated; there could be clusters whose centers are close to each other.  We thus use an additional parameter $p$ to characterize this phenomenon.  In the default setting of $p = 0$, all centers of the $K$ clusters are drawn uniformly at random from $[0, b]^d$.  When $p > 0$, we first partition the clusters into groups as follows: Think of each cluster as a node. For each pair of clusters, with probability $p$ we add an edge between the two nodes.  Each connected component of the resulting graph forms a group.  Next, for each group of clusters, we pick a random cluster and choose its center $\mu$ uniformly at random in $[0, b]^d$. For each of the remaining clusters in the group, we choose its center uniformly at random in the neighborhood of radius $\rho$ centered at $\mu$.

We use the following set of parameters as the default setting in our synthetic datasets:
$n=10^6$, $K=100$, $\alpha=2.5$, $\sigma=0.3$, $b=5$, $d=10$ and $\rho=0.1$.

We also use the following two real-world datasets.
\begin{itemize}[leftmargin=20pt]
\item
  \shuttle~\footnote{\url{https://archive.ics.uci.edu/ml/datasets/Statlog+(Shuttle)}.}: it describes the radiator positions in a NASA space shuttle.
There are 58,000 points with 9 numerical attributes, and 7 clusters in total. 

\item
  \kdd~\footnote{\url{http://kdd.ics.uci.edu/databases/kddcup99/kddcup99.html}.}: this dataset is taken from the 1999 KDD Cup contest. It contains about 4.9M points classified into 23 clusters.  The original dataset has 41 attributes. We retain all numerical attributes except one that contains only zeros, resulting in 33 attributes in total. 
\end{itemize}
Each feature in the real world datasets is normalized to have zero mean and unit standard deviation.

\smallskip

\noindent{\bf Algorithms.} \  We compare three algorithms listed below. A cluster is said to be {\em heavy} when the number of (uniform) samples obtained from this cluster is more than a predetermined threshold, which is set to be $10$ in our experiments.
\begin{itemize}[leftmargin=15pt]
\item \uniform: This is uniform sampling.  That is, we keep getting random samples from the dataset one by one and identify their label by same-cluster queries. We recover a cluster when it becomes heavy. 

\item \baseline: This is based on our basic algorithm (Algorithm~\ref{alg:basic}).  We recover clusters one at a time when it becomes heavy.

\item \improved: This is a simplified version of the improved algorithm (Algorithm~\ref{alg:main}). Instead of partitioning clusters to bands and then recovering all clusters in the heavy bands, in each round we just keep sampling points until the fraction of points in the heavy clusters is more than a half, at which moment we try to recover all heavy clusters in a batch.  
\end{itemize}

Recall that in our basic and improved algorithms, we always ``ignore'' the samples belonging to the unrecovered clusters in the previous rounds, which will not affect our theoretical analysis.  But in practice it is reasonable to reuse these ``old'' samples in the succeeding rounds so that we are able to recover some clusters earlier.  
Because of such a sample-reuse procedure, the practical performance difference between the \baseline\ and \improved\ algorithms becomes less significant. Recall that in theory, the main improvement of \improved\ over \baseline\ is that we try to avoid wasting too many samples in each round by recovering possibly more than one cluster at a time.  But still, as we shall see shortly, \improved\ outperforms \baseline\ in all metrics, though sometimes the gaps may not seem significant.  


\smallskip

\noindent{\bf Measurements.} \
To measure the same-cluster query complexity, we introduce two assessments.  The first is {\em fixed-budget}, where given a fixed number of same-cluster queries, we compare the number of clusters recovered by different algorithms.  The second is {\em fixed-recovery}, where each algorithm needs to recover a predetermined number of clusters, and we compare the numbers of same-cluster queries the algorithms use.

We also report the error of the approximate centroid of each recovered cluster.  For a recovered cluster $X_i$, let $\mu_i$ be its centroid and $\hat\mu_i$ be the approximate center.  We define the centroid approximation error to be $\left(\Phi(X_i, \hat\mu_i) - \Phi(X_i, \mu_i)\right) / \Phi(X_i, \mu_i)$.

For \baseline\ and \improved, we further compare their running time and round usage.  We note that a small round usage is very useful 
if we want to efficiently parallelize the learning process. 

Finally, we would like to mention one subtlety when measuring the number of same-cluster queries.  Since all algorithms that we are going to test are sampling based, and for each sampled point we need to identify its label by same-cluster queries, which, in the worst case, takes $k$ queries, where $k$ is the number of the discovered clusters so far. In practice, however, a good query order/strategy can save us a lot of same-cluster queries.  We apply the following query strategy for all tested algorithms.  

\begin{heuristic}
\label{heu:classify}
For each discovered cluster, we maintain its approximate center (using the samples we have obtained).  When a new point is sampled, we query the centers of discovered clusters based on their distances to the new point in the {\em increasing} order as follows. Suppose that the new point is $x$. We calculate the distances between $x$ and all approximate centers $\hat\mu_i$, and sort them in increasing order, say, $d(x,\hat\mu_1) \leq d(x,\hat\mu_2)\leq \cdots$. Then for $i=1,2,\dots$ sequentially, we check whether $x$ belongs to cluster $i$ by querying whether $x$ and some sample point from cluster $i$ belong to the same cluster. If $x$ does not belong to any discovered cluster, a new cluster will be created.
\end{heuristic}

We will also use this heuristic to test the effectiveness of approximate centers for classifying new points. That is, for a newly inserted point $q$, we count the number of same-cluster queries needed to correctly classify $q$ using the approximate centers and Heuristic~\ref{heu:classify}.

\smallskip

\noindent{\bf Computation Environments.} \
All algorithms are implemented in C++, and all experiments were conducted on Dell PowerEdge T630 server with two Intel Xeon E5-2667 v4 3.2GHz CPU with eight cores each and 256GB memory. 

\subsection{Results}
\label{sec:result}
The three algorithms, \uniform, \baseline\ and \improved, are compared on the same-cluster query complexity, the quality of returned approximate centers, the running time and the number of rounds. 
All results are the average values of $100$ runs of experiments. 


\smallskip

\begin{figure}
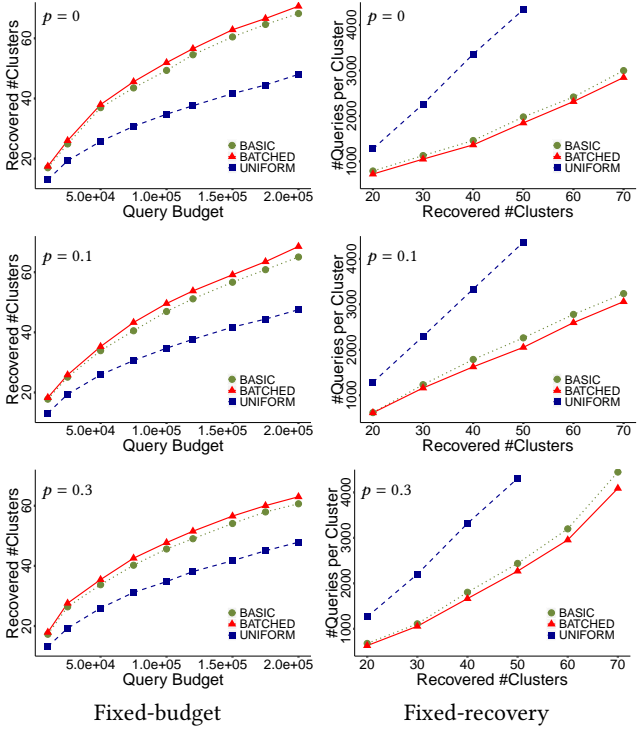

\begin{minipage}[d]{0.49\linewidth}
\begin{overpic}[width=\textwidth,percent]{./figs/a25}
	\put(13,65){\scriptsize $p=0$}
\end{overpic}
\end{minipage}
\hfill
\begin{minipage}[d]{0.49\linewidth}
\begin{overpic}[width=\textwidth,percent]{./figs/c_a25}
	\put(13,65){\scriptsize $p=0$}
\end{overpic}
\end{minipage}

\begin{minipage}[d]{0.49\linewidth}
\begin{overpic}[width=\textwidth,percent]{./figs/p1}
	\put(13,63){\scriptsize $p=0.1$}
\end{overpic}
\end{minipage}
\hfill
\begin{minipage}[d]{0.49\linewidth}
\begin{overpic}[width=\textwidth,percent]{./figs/c_p1}
	\put(13,63){\scriptsize $p=0.1$}
\end{overpic}
\end{minipage}

\begin{minipage}[d]{0.49\linewidth}
\begin{overpic}[width=\textwidth,percent]{./figs/p3}
	\put(13,63){\scriptsize $p=0.3$}
\end{overpic}
\centerline{Fixed-budget}
\end{minipage}
\hfill
\begin{minipage}[d]{0.49\linewidth}
\begin{overpic}[width=\textwidth,percent]{./figs/c_p3}
	\put(13,63){\scriptsize $p=0.3$}
\end{overpic}
\centerline{Fixed-recovery}
\end{minipage}
\vspace{-0.5em}
\caption{Performance comparison on synthetic datasets.}
\label{fig:synthetic}
\end{figure}

\begin{figure}
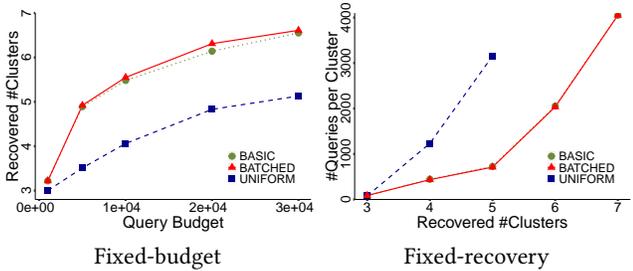

\centering
\begin{minipage}[d]{0.49\linewidth}
\includegraphics[width=\textwidth]{./figs/shuttle}
\centerline{Fixed-budget}
\end{minipage}
\hfill
\begin{minipage}[d]{0.49\linewidth}
\centering
\includegraphics[width=\textwidth]{./figs/c_shuttle}
\centerline{Fixed-recovery}
\end{minipage}
\vspace{-0.7em}
\caption{Performance comparison on \shuttle\ dataset. The curves for \baseline\ and \improved\ almost overlap for fixed-recovery.}
\label{fig:shuttle}
\end{figure}
\hfill
\begin{figure}
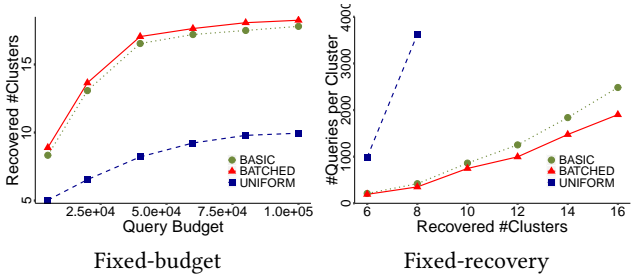

\begin{minipage}[d]{0.49\linewidth}
\includegraphics[width=\textwidth]{./figs/kdd}
\centerline{Fixed-budget}
\end{minipage}
\hfill
\begin{minipage}[d]{0.49\linewidth}
\includegraphics[width=\textwidth]{./figs/c_kdd}
\centerline{Fixed-recovery}
\end{minipage}
\vspace{-0.7em}
\caption{Performance comparison on \kdd\ dataset.}
\label{fig:kdd}
\end{figure}

\noindent{\bf Query Complexity.}  For each of the synthetic and real-world datasets, we measure the query complexities under both fixed-budget and fixed-recovery settings. 

The results for synthetic dataset, the \shuttle\ dataset and the \kdd\ dataset are plotted in Figure~\ref{fig:synthetic},  Figure~\ref{fig:shuttle} and Figure~\ref{fig:kdd}, respectively. In all figures, the left column corresponds to the fixed-budget case and the right column the fixed-recovery case. We note that some points for \uniform\ are missing since they are out of
the boundary. The exact values of all experiments can be found in Appendix~\ref{sec:query_complexity_tables} of the full version.

For all datasets, \baseline\ and \improved\ significantly outperform \uniform. Take \kdd\ dataset for example, we observe that with $10^5$ query budget \uniform\ recovers $10$ clusters while \baseline\ and \improved\ recover around $18$ clusters.  In order to recover $16$ clusters, \uniform\ requires $200,000$ queries per cluster, while \baseline\ and \improved\ only requires no more than $2,500$ queries per cluster.  Even on small datasets such as \shuttle, \uniform\ needs about twice amount of queries to recover all clusters compared with \baseline\ and \improved.  We also observe that \improved\ performs slightly better than \baseline.

On synthetic datasets, one can see that higher collision probability $p$ makes clustering more difficult for all algorithms.  Given a query budget of $2 \times 10^5$, \baseline\ and \improved\ can recover about $70$ clusters when $p = 0$ (no collision),  but only about $60$ clusters when $p = 0.3$. 

We also observe that \improved\ performs better than \baseline\ in the \kdd\ dataset in all settings, except the fixed-recovery setting for the \shuttle\ dataset, in which they have almost the same performance.

\smallskip

\noindent{\bf Center Approximation.} \ We compare the approximation error of each algorithm to show the quality of returned approximate centers. We run all three algorithms in the fixed-recovery setting because we can only measure the error of the recovered clusters. We examine the median error among all recovered clusters and observe that the centroid approximation errors are indeed similar on all datasets when the same number of clusters is recovered. Detailed results are listed in Tables~\ref{tab:sim-error} to \ref{tab:kdd-error}. We observe that all centroid approximation error rates are below 10\%: the error rates are about 9\% on the synthetic datasets and are about 5\% on the two real-world datasets. There is no significant difference between the error rates of different algorithms; the maximum difference is no more than 2\%. These consistent results indicate that the approximate centers computed by all algorithms are of good quality. 

\begin{table}
\setlength\tabcolsep{1.5pt}
\def\arraystretch{0.9}
\centering
\begin{tabular}{|l|l|r|r|r|r|r|r|}
\hline
\#Clusters   & \multicolumn{1}{c|}{Algorithms} & \multicolumn{1}{c|}{20} & \multicolumn{1}{c|}{30} & \multicolumn{1}{c|}{40} & \multicolumn{1}{c|}{50} & \multicolumn{1}{c|}{60} & \multicolumn{1}{c|}{70} \\ \hline
\multirow{3}{*}{$p=0$} & \baseline\ & 8.70\%  & 8.70\%  & 8.50\%  & 8.41\% & 8.43\% & 8.37\% \\
            & \improved\  & 8.74\% & 8.60\%  & 8.56\% & 8.64\% & 8.43\% & 8.51\% \\
            & \uniform\  & 9.90\%  & 9.43\% & 9.58\% & 9.22\% & 9.24\% & 9.18\% \\ \hline
\multirow{3}{*}{$p=0.1$} & \baseline\ & 8.86\% & 8.68\% & 8.72\% & 8.39\% & 8.51\% & 8.45\% \\
            & \improved\  & 8.76\% & 8.64\% & 8.45\% & 8.49\% & 8.50\%  & 8.42\% \\
            & \uniform\  & 9.59\% & 9.64\% & 9.39\% & 9.42\% & 9.30\%  & 9.18\% \\ \hline
\multirow{3}{*}{$p=0.3$}   & \baseline\ & 8.87\% & 8.68\% & 8.56\% & 8.40\%  & 8.34\% & 8.38\% \\
            & \improved\  & 8.95\% & 8.64\% & 8.57\% & 8.37\% & 8.35\% & 8.39\% \\
            & \uniform\  & 9.75\% & 9.44\% & 9.45\% & 9.35\% & 9.25\% & 9.28\% \\ \hline
\end{tabular}
\caption{Error rates on synthetic datasets}
\label{tab:sim-error}
%
\begin{tabular}{|l|r|r|r|r|r|r|}
\hline
\#Clusters   & \multicolumn{1}{c|}{2} & \multicolumn{1}{c|}{3} & \multicolumn{1}{c|}{4} & \multicolumn{1}{c|}{5} & \multicolumn{1}{c|}{6} & \multicolumn{1}{c|}{7} \\ \hline
  \baseline\ &9.72\% & 4.99\% & 6.28\% & 5.40\%  & 5.85\% & 5.91\% \\
  \improved\ &8.94\% & 4.78\% & 6.33\% & 5.44\% & 5.89\% & 5.66\% \\ 
  \uniform\ &9.32\% & 4.92\% & 5.79\% & 5.67\% & 5.65\% & 5.50\% \\ \hline
\end{tabular}
\caption{Error rates on \shuttle\ dataset}
\label{tab:shuttle-error}
%
\begin{tabular}{|l|r|r|r|r|r|r|}
\hline
\#Clusters   & \multicolumn{1}{c|}{6} & \multicolumn{1}{c|}{8} & \multicolumn{1}{c|}{10} & \multicolumn{1}{c|}{12} & \multicolumn{1}{c|}{14} & \multicolumn{1}{c|}{16} \\ \hline
\baseline\ & 3.70\%  & 3.83\% & 4.87\% & 5.75\% & 5.95\% & 5.68\% \\
\improved\ & 3.57\% & 3.68\% & 4.49\% & 4.39\% & 5.04\% & 5.53\% \\
\uniform\ & 3.96\% & 5.21\% & 4.92\% & 4.83\% & 4.57\% & 5.43\% \\ \hline
\end{tabular}
\caption{Error rates on \kdd\ dataset}
\label{tab:kdd-error}
\vspace{-3mm}
\end{table}

\smallskip


\noindent{\bf Time and Rounds.} \ We compare the running time and the number of
rounds between \baseline\ and \improved, with results presented in
Figure~\ref{fig:sim-round} to Figure~\ref{fig:kdd-time}.
Here we present the running time in the sampling stage, showing that the reduced
round usage in \improved\ also leads to a reduced usage of sampling time in \improved.
In terms of the time saving in querying stage, we have already discussed it in the
query complexity part.
The only exception is the \shuttle\ dataset, where the performance of \baseline\ and \improved\ are almost the same; this may be because the number of clusters in \shuttle\ is too small ($7$ in total) for the algorithms to make any difference. We remark that fewer rounds implies a shorter waiting time for synchronization in the parallel learning setting and a smaller communication cost in the distributed learning setting.

\begin{figure}
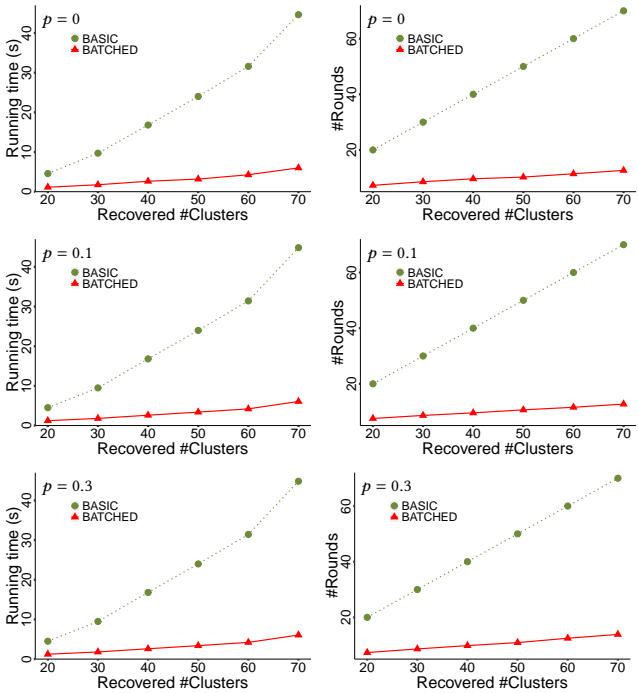

\begin{minipage}[d]{0.49\linewidth}
\begin{overpic}[width=\textwidth,percent]{./figs/t_a25}
	\put(13,65){\scriptsize $p=0$}
\end{overpic}
\end{minipage}
\hfill
\begin{minipage}[d]{0.49\linewidth}
\begin{overpic}[width=\textwidth,percent]{./figs/r_a25}
	\put(13,65){\scriptsize $p=0$}
\end{overpic}
\end{minipage}

\begin{minipage}[d]{0.49\linewidth}
\begin{overpic}[width=\textwidth,percent]{./figs/t_p1}
	\put(13,65){\scriptsize $p=0.1$}
\end{overpic}
\end{minipage}
\hfill
\begin{minipage}[d]{0.49\linewidth}
\begin{overpic}[width=\textwidth,percent]{./figs/r_p1}
	\put(13,65){\scriptsize $p=0.1$}
\end{overpic}
\end{minipage}

\begin{minipage}[d]{0.49\linewidth}
\begin{overpic}[width=\textwidth,percent]{./figs/t_p1}
	\put(13,65){\scriptsize $p=0.3$}
\end{overpic}
\end{minipage}
\hfill
\begin{minipage}[d]{0.49\linewidth}
\begin{overpic}[width=\textwidth,percent]{./figs/r_p3}
	\put(13,65){\scriptsize $p=0.3$}
\end{overpic}
\end{minipage}
\vspace{-0.7em}
\caption{Running time and round usage on synthetic datasets}
\label{fig:sim-round}
\end{figure}

\begin{figure}
\centering
\begin{minipage}[d]{0.49\linewidth}
\centering
\includegraphics[width=\textwidth]{./figs/t_shuttle}
\end{minipage}
\hfill
\begin{minipage}[d]{0.49\linewidth}
\centering
\includegraphics[width=\textwidth]{./figs/r_shuttle}
\end{minipage}
\vspace{-0.7em}
\caption{Running time and round usage on \shuttle\ dataset}
\label{fig:shuttle-time}
\end{figure}

\begin{figure}


\centering
\begin{minipage}[d]{0.49\linewidth}
\centering
\includegraphics[width=\textwidth]{./figs/t_kdd}
\end{minipage}
\begin{minipage}[d]{0.49\linewidth}
\centering
\includegraphics[width=\textwidth]{./figs/r_kdd}
\end{minipage}
\vspace{-0.7em}
\caption{Running time and round usage on \kdd\ dataset}
\label{fig:kdd-time}
\end{figure}

\smallskip

\noindent{\bf Classification Using Approximate Centers.} \
We now measure the quality of the approximate centers outputted by our algorithms for point classification using Heuristic~\ref{heu:classify}.  That is, after the algorithm terminates and outputs the set of approximate centers, we try to cluster new (unclassified) points using Heuristic~\ref{heu:classify} and count the number of same-cluster queries used for each new point.  Since there is only a very small number of points in the dataset that have been sampled/classified during the run of \baseline\ and \improved, we simply use the whole dataset as test points. This should not introduce much bias to our measurement.

Our results for the synthetic and the real-world datasets are plotted in Figures~\ref{fig:sim-querynum} and~\ref{fig:real-querynum}, respectively.  We observe that for all three tested algorithms, for the majority of the points in the dataset, only {\em one} same-cluster query is needed for classification; in other words, we only need to find their nearest approximate centers for determining their cluster labels.  Except for the \shuttle\ dataset for which about 10-15\% points need at least three same-cluster queries, in all other datasets the fraction of points which need at least three queries is negligible.  Overall, the performance of the three tested algorithms is comparable.

\begin{figure}
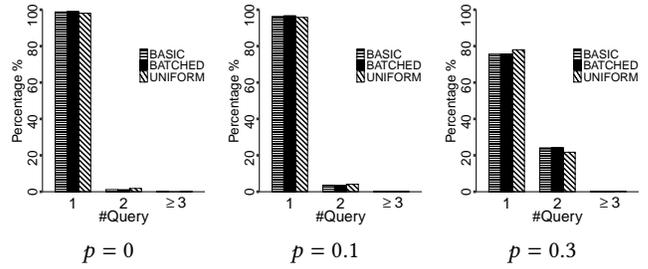

\begin{minipage}[d]{0.31\linewidth}
\includegraphics[width=\textwidth]{./figs/p0_numquery}
\centerline{$p=0$}
\end{minipage}
\hfill
\begin{minipage}[d]{0.31\linewidth}
\includegraphics[width=\textwidth]{./figs/p1_numquery}
\centerline{$p=0.1$}
\end{minipage}
\hfill
\begin{minipage}[d]{0.31\linewidth}
\includegraphics[width=\textwidth]{./figs/p3_numquery}
\centerline{$p=0.3$}
\end{minipage}
\vspace{-0.8em}
\caption{Number of same-cluster queries on synthetic datasets.}
\label{fig:sim-querynum}
\end{figure}

\begin{figure}
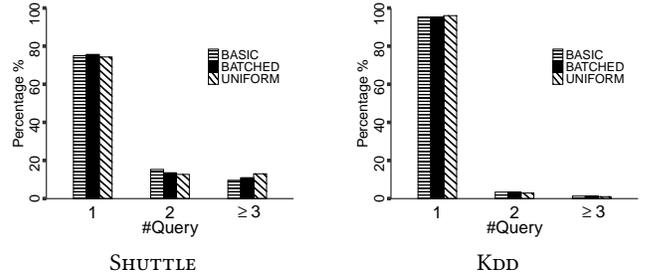

\begin{minipage}[d]{0.45\linewidth}
\includegraphics[width=\textwidth]{./figs/shuttle_numquery}
\centerline{\shuttle}
\end{minipage}
\hfill
\begin{minipage}[d]{0.45\linewidth}
\includegraphics[width=\textwidth]{./figs/kdd_numquery}
\centerline{\kdd}
\end{minipage}
\vspace{-0.7em}
\caption{Number of same-cluster queries on real datasets.}
\label{fig:real-querynum}
\end{figure}

\smallskip

\noindent{\bf Summary.} \ Briefly summarizing our experimental results, we observe that both \baseline\ and \improved\ significantly outperform \uniform\ in terms of query complexities.  All three algorithms have similar center approximation quality. \improved\ outperforms \baseline\ by a large margin in both running time and round usage on the synthetic and \kdd\ datasets.

\section*{Acknowledgments}
The authors would also like to thank the anonymous referees for their valuable comments and helpful suggestions. 
Yan Song and Qin Zhang are supported in part by NSF IIS-1633215 and CCF-1844234.



\bibliographystyle{ACM-Reference-Format}
\bibliography{paper}

\clearpage

\appendix
\section{Supplementary Preliminaries}

The following forms of Chernoff bounds will be used repeatedly in the analysis of our algorithms.

\begin{lemma}[Additive Chernoff]\label{lem:additive chernoff}
Let $X_1,\dots,X_N$ be i.i.d.\ Bernoulli random variables such that $\E X_i = p$. Let $\hat p = (\sum_{i=1}^N X_i)/N$. Then
$\Pr\left\{\hat p > p + t\right\} \leq e^{-Np}(ep/t)^{Nt}$.
\end{lemma}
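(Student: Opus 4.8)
The plan is to prove the bound by the standard exponential (Chernoff) moment method, with one nonstandard ingredient: the choice of the exponential parameter, which I pick so as to produce exactly the stated clean form rather than the tightest possible bound.

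First I would set $S=\sum_{i=1}^N X_i = N\hat p$, so that $\{\hat p > p+t\}\subseteq\{S\ge N(p+t)\}$, and apply Markov's inequality to $e^{\lambda S}$ for a parameter $\lambda>0$ to be chosen later. Using independence of the $X_i$,
\[
\Pr\{S\ge N(p+t)\}\le e^{-\lambda N(p+t)}\,\E[e^{\lambda S}] = e^{-\lambda N(p+t)}\bigl(1-p+pe^{\lambda}\bigr)^{N}.
\]
The per-variable moment generating function is $1-p+pe^{\lambda}=1+p(e^\lambda-1)$, which I will bound by $e^{t-p}$ after substituting the value of $\lambda$ (rather than via the generic $1+x\le e^x$ estimate, which would give the looser exponent $Np(e^\lambda-1)$).

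The key step is the (deliberately suboptimal) choice $\lambda=\ln(t/p)$, valid in the regime $t>p$, which is the only regime in which the asserted bound is nontrivially small. With this choice $pe^{\lambda}=t$, so $1-p+pe^{\lambda}=1+(t-p)\le e^{t-p}$ and $e^{-\lambda N(p+t)}=(p/t)^{N(p+t)}$. Plugging in,
\[
\Pr\{S\ge N(p+t)\}\le (p/t)^{N(p+t)}\,e^{N(t-p)} = (p/t)^{Np}(p/t)^{Nt}e^{Nt}e^{-Np}\le (p/t)^{Nt}e^{Nt}e^{-Np} = e^{-Np}(ep/t)^{Nt},
\]
where the middle inequality uses $(p/t)^{Np}\le 1$ since $p\le t$, and the last equality is just $(p/t)^{Nt}e^{Nt}=(ep/t)^{Nt}$.

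I do not expect a genuine obstacle: once the substitution is in hand, the whole argument is a two-line computation. The only point that needs care is that the moment-generating-function–optimal choice $\lambda=\ln(1+t/p)$ does \emph{not} reproduce the stated expression, so one must use $\lambda=\ln(t/p)$ instead; and one should note that for $t\le p$ this parameter is non-positive, which is harmless because in that regime the inequality is outside its intended use (and, if desired, can be recovered from the standard relative Chernoff bound). I would state and apply the lemma with this understanding.
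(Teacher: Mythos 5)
Your derivation is correct, and the paper itself supplies no proof of this lemma --- it is quoted as a known form of the Chernoff bound --- so there is no argument of the paper's to compare against. The exponential-moment computation with the deliberately suboptimal tilt $\lambda=\ln(t/p)$ does produce exactly $e^{-Np}(ep/t)^{Nt}$ when $t>p$: the bounds $1-p+pe^{\lambda}=1+(t-p)\le e^{t-p}$ and $(p/t)^{Np}\le 1$ are both valid in that regime, and both places where the paper invokes the lemma (the proofs of Lemmata~\ref{lem:each p_i basic} and~\ref{lem:each p_i}) indeed have $t>p$, so your proof covers every use.

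One correction to your closing remark, though: for $t\le p$ the stated inequality is not merely beyond the reach of your choice of $\lambda$ --- it is genuinely false, so it cannot be recovered from the standard relative Chernoff bound or any other argument. Take $N=100$, $p=1/2$, $t=1/100$: the right-hand side is $e^{-50}\cdot 50e\approx 2.6\times 10^{-20}$, while $\Pr\{\hat p>0.51\}=\Pr\{S\ge 52\}\approx 0.38$ for $S\sim\mathrm{Bin}(100,\tfrac12)$. (Consistently with this, the relative Chernoff bound gives $e^{-Np}\bigl(ep/(p+t)\bigr)^{N(p+t)}$, which exceeds the lemma's expression by a factor of roughly $e^{Np}$ as $t\to 0$.) So the hypothesis $t>p$ must be built into the statement rather than treated as an optional caveat; with that restriction made explicit, your proof is complete.
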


\begin{lemma}[Multiplicative Chernoff]\label{lem:multiplicative chernoff}
Let $X_1,\dots,X_N$ be i.i.d.\ Bernoulli random variables such that $\E X_i = p$. Let $\hat p = (\sum_{i=1}^N X_i)/N$. Then for $\delta\in (0,1)$, it holds that $\Pr\left\{\hat p \leq (1-\delta) p\right\} \leq \exp(-\frac12 \delta^2 Np)$.
\end{lemma}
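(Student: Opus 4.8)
The plan is to apply the standard exponential-moment (Chernoff) method to the lower tail of $S := \sum_{i=1}^N X_i$, noting that $\hat p = S/N$, so $\Pr\{\hat p \le (1-\delta)p\} = \Pr\{S \le (1-\delta)Np\}$. For any $t > 0$, Markov's inequality applied to the nonnegative random variable $e^{-tS}$ gives
\[
\Pr\{S \le (1-\delta)Np\} = \Pr\{e^{-tS} \ge e^{-t(1-\delta)Np}\} \le e^{t(1-\delta)Np}\,\E e^{-tS} = e^{t(1-\delta)Np}\prod_{i=1}^N \E e^{-tX_i},
\]
where the last equality uses independence. Since each $X_i$ is Bernoulli with mean $p$, we have $\E e^{-tX_i} = 1 - p(1-e^{-t})$, and the elementary inequality $1+x \le e^x$ bounds this by $\exp(-p(1-e^{-t}))$. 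Multiplying over $i$ yields the bound $\exp\big(Np\,[\,t(1-\delta) - (1-e^{-t})\,]\big)$ for every $t>0$.

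Next I would optimize the exponent over $t$. Differentiating $t(1-\delta) - (1-e^{-t})$ in $t$ and setting the derivative to zero gives $e^{-t} = 1-\delta$, i.e.\ the admissible choice $t = \ln\frac{1}{1-\delta} > 0$ (valid because $\delta\in(0,1)$ makes $\frac{1}{1-\delta}>1$). Substituting this value collapses the exponent to $Np\big(-\delta - (1-\delta)\ln(1-\delta)\big)$, so that
\[
\Pr\{\hat p \le (1-\delta)p\} \le \exp\!\big(Np\,(-\delta - (1-\delta)\ln(1-\delta))\big) = \left(\frac{e^{-\delta}}{(1-\delta)^{1-\delta}}\right)^{Np}.
\]

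It remains to show $-\delta - (1-\delta)\ln(1-\delta) \le -\tfrac12\delta^2$ for $\delta\in(0,1)$, which is the only analytic point requiring care. I would handle it by expanding $\ln(1-\delta) = -\sum_{k\ge 1}\delta^k/k$ and regrouping: a short computation gives $(1-\delta)\ln(1-\delta) = -\delta + \sum_{k\ge 2}\frac{\delta^k}{k(k-1)} \ge -\delta + \frac{\delta^2}{2}$, since every term of the series is nonnegative for $\delta\in(0,1)$. Hence $-\delta - (1-\delta)\ln(1-\delta) = -\frac{\delta^2}{2} - \sum_{k\ge 3}\frac{\delta^k}{k(k-1)} \le -\frac{\delta^2}{2}$, and plugging this into the displayed bound finishes the proof. (An alternative to the series manipulation is to set $f(\delta) = (1-\delta)\ln(1-\delta) + \delta - \delta^2/2$ and verify $f(0)=f'(0)=0$ together with $f''(\delta) = \delta/(1-\delta) \ge 0$, so $f\ge 0$ on $[0,1)$.) The main obstacle is essentially bookkeeping: choosing the optimal $t$ correctly and then controlling the transcendental quantity $-\delta - (1-\delta)\ln(1-\delta)$ by the clean quadratic $-\delta^2/2$; there is no conceptual difficulty beyond the standard Chernoff recipe.
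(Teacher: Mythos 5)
Your derivation is correct: the exponential-moment bound, the optimal choice $t=\ln\frac{1}{1-\delta}$, and the inequality $-\delta-(1-\delta)\ln(1-\delta)\le-\delta^2/2$ (via the series for $\ln(1-\delta)$ or the second-derivative check) are all sound. The paper states this lemma as a standard preliminary without proof, so there is nothing to compare against; what you give is the textbook Chernoff argument and it fully establishes the claim.
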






\section{Omitted Proofs in Section~\ref{sec:basic}}\label{sec:basic proofs}

We need two lemmata from~\cite{ABJK18}.

\begin{lemma}[{\cite{ABJK18}}]\label{lem:reference_bound}
It holds that 
\[
\frac{\Phi(\{x\}, C)}{\Phi(X_j, C)}\geq \frac{\eps}{64}\cdot\frac{1}{\abs{X_j}}.
\]
\end{lemma}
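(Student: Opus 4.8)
I read the inequality with the hypotheses that hold at the point in the analysis where the reference point is selected: $x\in X_j$, the set $C=\{\tilde\mu_i\}_{i\in I}$ is the collection of approximate centers recovered so far, and cluster $j$ is \emph{not} $\eps$-reducible with respect to $I$ (which is exactly why it is still being processed). The approach I would take is to reduce, by a short triangle-inequality computation, to a bound saying that covering $X_j$ by the single point $x$ is only marginally cheaper than covering it by $C$. Write $d(y,C)$ for the distance from $y$ to its nearest point of $C$, so that $\Phi(\{y\},C)=d(y,C)^2$, and put $\rho=d(x,C)$. For every $y\in X_j$ the triangle inequality gives $d(y,C)\le d(y,x)+\rho$; squaring, summing over $y\in X_j$, and bounding the cross term by Cauchy--Schwarz ($\sum_{y\in X_j}d(y,x)\le\sqrt{|X_j|\,\Phi(X_j,x)}$) yields
\[
 \Phi(X_j,C)\;\le\;\bigl(\sqrt{\Phi(X_j,x)}+\rho\sqrt{|X_j|}\,\bigr)^{2},
\]
so $\rho\sqrt{|X_j|}\ge\sqrt{\Phi(X_j,C)}-\sqrt{\Phi(X_j,x)}$. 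Consequently, if I can show $\Phi(X_j,x)\le(1-\tfrac{\sqrt\eps}{8})^{2}\,\Phi(X_j,C)$, then squaring the last inequality gives $\Phi(\{x\},C)=\rho^{2}\ge\frac{\eps}{64\,|X_j|}\,\Phi(X_j,C)$, and the restriction $\eps\le1$ keeps the constant exactly as stated.

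The remaining step is to bound the single-point covering cost $\Phi(X_j,x)$ from above by (nearly) $\Phi(X_j,C)$; this is the only place the geometry of the instance enters, and it is where $\eps$-irreducibility is used, following and adapting the argument of~\cite{ABJK18}. Using the standard identity $\Phi(X_j,c)=\Phi(X_j,\mu_j)+|X_j|\,\|c-\mu_j\|^{2}$, the idea is that if $\Phi(X_j,x)$ failed to be within a small factor of $\Phi(X_j,C)$, then cluster $j$ would turn out to be cheaply coverable by the recovered centroids alone — i.e.\ the $\eps$-reducibility condition of Definition~\ref{def:reducibility} would hold for cluster $j$ — once the $(1+\eps)$ slack between $\tilde\mu_i$ and $\mu_i$ afforded by the loop invariant~\eqref{eqn:good_approx_centroid} is accounted for, which contradicts the standing assumption. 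Tracking the slack through this contradiction is what pins down the absolute constant $64$.

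I expect this second step to be the main obstacle. The delicate point is that an arbitrary point of $X_j$ may lie unusually close to a center of $C$ while being far from $\mu_j$, so $\Phi(X_j,x)$ cannot be controlled by the intrinsic scatter $\Phi(X_j,\mu_j)$ of the cluster alone; the bound has to genuinely invoke the failure of $\eps$-reducibility rather than the raw geometry of $X_j$. Once $\Phi(X_j,x)\le(1-\tfrac{\sqrt\eps}{8})^{2}\,\Phi(X_j,C)$ is available, the first step finishes the proof, with only routine constant bookkeeping (including passing from the $\tilde\mu_i$ back to the $\mu_i$) left to check.
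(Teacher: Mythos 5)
The paper does not prove this lemma at all: it is stated verbatim, constant $64$ and all, as a citation to~\cite{ABJK18}, and Appendix~\ref{sec:basic proofs} supplies proofs only for Lemmata~\ref{lem:undiscovered_is_heavy} through~\ref{lem:rejection sampling basic}. So there is no in-paper argument to compare yours against, and the only thing to assess is whether your proposal is a valid proof in its own right.

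It is not, and the gap is in the place you flag. Your first step (triangle inequality plus Cauchy--Schwarz, giving $\rho\sqrt{|X_j|}\ge\sqrt{\Phi(X_j,C)}-\sqrt{\Phi(X_j,x)}$) is fine, but it reduces the lemma to a sufficient condition that is false under the very hypotheses you list. Concretely: let $X_j=\{0,1,2,3\}$, let the recovered clusters be the singletons $X_1=\{-1/2\}$ and $X_2=\{7/2\}$ with $\tilde\mu_1=\mu_1=-1/2$, $\tilde\mu_2=\mu_2=7/2$, so $C=\{-1/2,7/2\}$, and take $x=1\in X_j$. Here $\mu_j=3/2$ and one checks $\Phi(X_j,C)=5=\Phi(X_j,\mu_j)$, while by the bias--variance identity $\Phi(X_j,x)=\Phi(X_j,\mu_j)+|X_j|\,\|x-\mu_j\|^2=6>\Phi(X_j,C)$, so $\Phi(X_j,x)/\Phi(X_j,C)>1$ and no factor $(1-\sqrt{\eps}/8)^2<1$ can dominate it. Yet cluster $j$ is not $\eps$-reducible ($\Phi(X_1,\mu_1)=\Phi(X_2,\mu_2)=0$ while $\Phi(X_j,\{\mu_1,\mu_2\})=5>0$), $x$ is even a legitimate reference point (indeed $x\in Y_j$, since $\Phi(\{x\},C)/\Phi(X_j,C)=0.45\le 2/|X_j|$), and the lemma's conclusion holds comfortably ($0.45\ge\eps/256$ for $\eps\le 1$). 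The structural problem is that as soon as $C$ contains centers straddling $X_j$, one can have $\Phi(X_j,C)\le\Phi(X_j,\mu_j)\le\Phi(X_j,x)$ for \emph{every} $x\in X_j$, and then step one yields a vacuous bound $\rho\ge 0$; your reduction compares the aggregate quantities $\Phi(X_j,x)$ and $\Phi(X_j,C)$, whereas the lemma's conclusion is a statement about the single-point cost $\Phi(\{x\},C)$ against the per-point average $\Phi(X_j,C)/|X_j|$, which the triangle-inequality route simply cannot see. A correct argument in the style of~\cite{ABJK18} instead works with $D^2$-sampling mass: the set $Z=\{y\in X_j:\Phi(\{y\},C)<\tfrac{\eps}{64|X_j|}\Phi(X_j,C)\}$ satisfies $\Phi(Z,C)<\tfrac{\eps}{64}\Phi(X_j,C)$ by definition, hence carries a small fraction of the conditional $D^2$ mass of $X_j$, and the reference point is chosen/argued to avoid $Z$. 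You would need to restart from that viewpoint; the present approach is not salvageable with better constants.
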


\begin{lemma}[{\cite{ABJK18}}]\label{lem:itcs_Y_j}
It holds that
\[
\frac{\Phi(Y_j, C)}{\Phi(X_j, C)}\geq \frac{\eps}{128} ,
\]
and
\[
\frac{\Phi(\{y\}, C)}{\Phi(\{x\}, C)}\leq \frac{\eps}{128}.
\]
\end{lemma}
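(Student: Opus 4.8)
The plan is to obtain both displayed inequalities directly from Lemma~\ref{lem:reference_bound} together with one application of Markov's inequality to the condition defining $Y_j$. No sampling or probabilistic reasoning is involved: for a fixed set $C=\{\tilde\mu_i\}_{i\in I}$ of recovered centres, both assertions are deterministic facts about the point sets $X_j$, $Y_j=\{y\in X_j:\Phi(\{y\},C)\le (2/\abs{X_j})\,\Phi(X_j,C)\}$ and $C$. I would first isolate the two elementary bounds doing all the work: (i) by Lemma~\ref{lem:reference_bound}, every point $x\in X_j$ satisfies $\Phi(\{x\},C)\ge \tfrac{\eps}{64\abs{X_j}}\,\Phi(X_j,C)$ (no point of $X_j$ is too cheaply covered by $C$); and (ii) by the definition of $Y_j$, every $y\in Y_j$ satisfies $\Phi(\{y\},C)\le \tfrac{2}{\abs{X_j}}\,\Phi(X_j,C)$ (at most twice the average per-point cost).

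For the first inequality I would bound $\abs{Y_j}$ from below. Its complement $X_j\setminus Y_j$ consists exactly of the points whose $D^2$-cost exceeds twice the average cost $\Phi(X_j,C)/\abs{X_j}$; since the costs of all points of $X_j$ sum to $\Phi(X_j,C)$, Markov's inequality gives $\abs{X_j\setminus Y_j}<\abs{X_j}/2$, hence $\abs{Y_j}>\abs{X_j}/2$. Summing the lower bound (i) over $y\in Y_j$,
\[
\Phi(Y_j,C)=\sum_{y\in Y_j}\Phi(\{y\},C)\ \ge\ \abs{Y_j}\cdot\frac{\eps}{64\abs{X_j}}\,\Phi(X_j,C)\ >\ \frac{\abs{X_j}}{2}\cdot\frac{\eps}{64\abs{X_j}}\,\Phi(X_j,C)\ =\ \frac{\eps}{128}\,\Phi(X_j,C),
\]
which is the first claim. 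This $\Omega(\eps)$ share of the $D^2$-mass is what guarantees that a $D^2$-sample, conditioned on landing in $X_j$, lands in $Y_j$ with probability at least $\eps/128$, so that $\tilde{O}(1/\eps)$ samples from $X_j$ make the pivot $x_j^\ast$ (the cheapest sampled point of $X_j$) lie in $Y_j$, as needed for Lemma~\ref{lem:Y_j basic}.

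For the second inequality I would take any $y\in Y_j$ (in particular $y=x_j^\ast$ once Lemma~\ref{lem:Y_j basic} places the pivot in $Y_j$) and any $x\in X_j$, and divide (ii) by the instance of (i) for that $x$ (the common factor $\Phi(X_j,C)$ cancels):
\[
\frac{\Phi(\{y\},C)}{\Phi(\{x\},C)}\ \le\ \frac{2/\abs{X_j}}{\eps/(64\abs{X_j})}\ =\ \frac{128}{\eps}.
\]
Thus the $D^2$-cost of any point of $Y_j$ is within a factor $128/\eps$ of that of any point of $X_j$; taking $y=x_j^\ast$, this is precisely the statement that the acceptance probability $\tfrac{\eps}{128}\cdot\Phi(\{x_j^\ast\},C)/\Phi(\{x\},C)$ used inside \textsc{RejSamp} never exceeds $1$ for any point $x$ of cluster $j$, which is the downstream use of this bound. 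In both inequalities the constant $128=2\cdot 64$ arises by combining the factor-of-$2$ threshold in the definition of $Y_j$ with the $\eps/64$ of Lemma~\ref{lem:reference_bound}.

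There is no genuine obstacle in this lemma; the only points needing a little care are that Lemma~\ref{lem:reference_bound} must be used in its per-point form, valid for every point of $X_j$ (so that it may be summed over $Y_j$ in the first part and applied to the accepted point $x$ in the second) rather than for a single designated point, and that Markov's inequality must be applied to the correct tail (the bad set is where the cost is \emph{above} twice the average). All the actual difficulty is concentrated in Lemma~\ref{lem:reference_bound} itself, which is imported from \cite{ABJK18} and ultimately rests on the irreducibility of $X_j$ with respect to $C$.
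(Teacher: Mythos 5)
The paper does not prove this lemma at all --- it is imported verbatim (and somewhat loosely) from \cite{ABJK18}, so there is no in-paper argument to compare against. Your reconstruction is the standard one and is internally correct: Markov's inequality applied to the per-point costs gives $\abs{Y_j}>\abs{X_j}/2$, and combining the factor-of-$2$ threshold in the definition \eqref{eqn:Y_j_def} with the $\eps/(64\abs{X_j})$ lower bound of Lemma~\ref{lem:reference_bound} yields both constants $128=2\cdot 64$ exactly as you describe. You are also right to read the second inequality as $\Phi(\{y\},C)/\Phi(\{x\},C)\leq 128/\eps$ rather than the $\eps/128$ printed in the statement; the printed direction is a typo, since the only downstream use (the proof of Lemma~\ref{lem:rejection sampling basic}) invokes this part precisely to certify that the acceptance probability $\frac{\eps}{128}\cdot\Phi(\{x_j^\ast\},C)/\Phi(\{x\},C)$ in \textsc{RejSamp} is at most $1$.

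The one caveat worth flagging is the quantifier on Lemma~\ref{lem:reference_bound}, whose variable $x$ is unbound as stated. Your argument needs it in the universal form ``for every $x\in X_j$'' --- summed over $Y_j$ in part one and applied to an arbitrary sampled point in part two --- but that universal form cannot hold unconditionally (a point of $X_j$ lying on a recovered center has $\Phi(\{x\},C)=0$). In \cite{ABJK18} this is handled by restricting to a core of $X_j$ near its centroid and arguing that the resulting sample is only approximately uniform. Since the paper's own use of Lemma~\ref{lem:itcs_Y_j} tacitly relies on the same universal reading, this is a gap you inherit from the cited statement rather than one you introduce; your derivation of the present lemma from Lemma~\ref{lem:reference_bound} is otherwise complete.
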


\subsection{Proof of Lemma~\ref{lem:undiscovered_is_heavy}} \label{sec:proof_undiscovered_heavy}

\begin{proof}
We adapt the proof of Lemma 15 in \cite{ABJK18} to our irreducibility assumption. Observe that 
\begin{align*}
\sum_{i}\Phi(X_i, \tilde C) &= \sum_{i\in I}\Phi(X_i, \tilde C) + \sum_{i\not\in I}\Phi(X_i, \tilde C) \\
&\leq \sum_{i\in I}\Phi(X_i, \tilde C) + \frac{\eps/4}{1-\eps/4}\sum_{i\in I}\Phi(X_i, \tilde C)\\
&\leq \left(1 + \frac{\eps}{2}\right)\sum_{i\in I}\Phi(X_i, \tilde C)
\end{align*}
Splitting the sum on the left-hand side into $i\in I$ and $i\not\in I$,
\[
\sum_{i\in I} \Phi(X_i, \tilde C) + \sum_{i\not\in I}\Phi(X_i, \tilde C) \leq \left(1+\frac{\eps}{2}\right) \sum_{i\in I}\Phi(X_i, \tilde C),
\]
and so
\[
\sum_{i\not\in I}\Phi(X_i, \tilde C) \leq \frac{\eps}{2} \sum_{i\in I}\Phi(X_i, \tilde C)
\]
Comparing with the true centers, we have
\[
\sum_{i\not\in I}\Phi(X_i, C) \leq \frac{\eps}{2}(1+\eps)^2 \sum_{i\in I}\Phi(X_i, C)\leq \eps\sum_{i\in I}\Phi(X_i, C).
\]
By the $\eps$-irreducibility assumption, there exists an $\ell\not\in I$ such that
\[
\sum_{i\not\in I}\Phi(X_i, C) >  \Phi(X_\ell,C) > \eps \sum_{i\in I} \Phi(X_i,\mu_i),
\]
and so we have a contradiction. The claimed result in the lemma therefore holds.
\end{proof}

\subsection{Proof of Lemma~\ref{lem:correctness basic}}
\begin{proof}
By Lemma~\ref{lem:undiscovered_is_heavy}, the probability that no undiscovered clusters is seen among $T_1$ samples (Line~\ref{alg:basic T_1}) with probability
\[
\left(1-\frac{\eps}{4}\right)^{T_1} \leq e^{-2\ln(10(k+1))} = \frac{1}{100(k+1)^2}.
\]

The maintenance of the loop invariant \eqref{eqn:good_approx_centroid} in each round is a corollary of Lemma~\ref{lem:centroid}. With $T_3 = 20\eps^{-1}r\ln^2(10r)$ (see Line~\ref{alg:basic T_3}), the centroid estimate $\tilde \mu_j$ satisfies~\eqref{eqn:good_approx_centroid} with probability at least $1-1/(20r\ln^2(10r))$. 

Taking a union bound over all rounds, the total failure probability is at most
\[
\sum_{i=1}^\infty \frac{1}{100i^2} + \sum_{r=1}^\infty \frac{1}{20r\ln^2(10r)} \leq 0.05.\qedhere
\]
\end{proof}

\subsection{Proof of Lemma~\ref{lem:each p_i basic}}
\begin{proof}
Upon the termination of the while loop in Phase 2, let $s'$ denote the number of samples belonging to the clusters in $Q$. We claim that $s'\geq 6q\ln(10(k+q))$ with probability at least $1-1/(100(k+q)^2)$. Observe that $s \geq 96q\ln(10(k+q))/\eps$ at this point, then it follows from a Chernoff bound (Lemma~\ref{lem:multiplicative chernoff}) that
\[
\Pr\{s' < 6\ln(10(k+q))\} \leq \exp\left(-\frac{1}{2}\cdot \frac{1}{4}\cdot \frac{s\eps}{4}\right) \leq \frac{1}{100(k+q)^2}.
\]

We choose $j$ to be the newly discovered cluster of most samples and so $\hat p_j \geq 1/q$.
If $p_j < 1/(3q)$, then by a Chernoff bound (Lemma~\ref{lem:additive chernoff}),
\[
\Pr\left\{ \hat p_j \geq \frac{1}{q} \right\} \leq \left(\frac{\frac{1}{3}e}{1-\frac{1}{3}}\right)^{s' \frac{2}{3q}} \leq \frac{1}{100(k+q)^2}.\qedhere
\]
\end{proof}

\subsection{Proof of Lemma~\ref{lem:Y_j basic}}
\begin{proof}
It suffices to show that each cluster $j\in W$ sees a sample in $Y_j$. The probability that a sample lies in $Y_j$ is
\[
\frac{\Phi(Y_j,C)}{\Phi(X_j,C)}\cdot \frac{\Phi(X_j,C)}{\Phi(X,C)} \geq \frac{\eps}{128}\cdot \left(\frac{\eps}{4}\cdot\frac{1}{3q}\right) = \frac{\eps^2}{2^{9}\cdot 3q},
\]
where we used Lemmata~\ref{lem:undiscovered_is_heavy} and~\ref{lem:reference_bound}.

Therefore if we sample $T_2 = 2^{10}\cdot 3q\ln(10(k+q))/\eps^2$ points in total (Line~\ref{alg:basic T_2}), cluster $j$ does not see a point in $Y_j$ with probability at most
\[
\left(1 - \frac{\eps^2}{2^{9}\cdot 3q}\right)^{T_2} \leq e^{-2\ln(10(k+q))} \leq \frac{1}{100(k+q)^2}.\qedhere
\]
\end{proof}

\subsection{Proof of Lemma~\ref{lem:rejection sampling basic}}
\begin{proof}
Let $j\in W$. The rejection sampling has a valid probability threshold (at most $1$) by the second part of Lemma~\ref{lem:itcs_Y_j}. The rejection sampling includes a sampled point in $X_j$ with probability 
\begin{align*}
&\quad\ \sum_{x\in X_j} \frac{\Phi(\{x\},C)}{\Phi(X,C)} \cdot \left(\frac{\eps}{128}\cdot \frac{\Phi(\{x_j^\ast\},C)}{\Phi(\{x\},C)}\right) \\
&= \abs{X_j}\cdot \frac{\eps}{128}\frac{\Phi(\{x_j^\ast\},C)}{\Phi(X,C)}\\
&= \abs{X_j}\cdot \frac{\eps}{128}\cdot \frac{\Phi(X_j,C)}{\Phi(X,C)}\cdot \frac{\Phi(\{x_j^\ast\},C)}{\Phi(X_j,C)}\cdot \\
&\geq \abs{X_j} \cdot \frac{\eps}{128}\cdot \frac{\eps}{4}\cdot \frac{1}{3q}\cdot \frac{\eps}{64|X_j|} \geq \frac{\eps^3}{2^{15}\cdot 3q},
\end{align*}
where the first inequality uses Lemma~\ref{lem:reference_bound} to lower bound the last factor.

Hence, by sampling $s = 2^{23} \eps^{-4}q r\ln^2(10r)$ points in total, we expect to see at least $s\eps^3/(2^{15}\cdot 3q) > 2 T_3$ points returned by the rejection sampling procedure. Using the multiplicative Chernoff bound (Lemma~\ref{lem:multiplicative chernoff}), the rejection sampling returns at most $T_3 = 20\eps^{-1}r\ln^2(10r)$ points with probability at most
\[
\exp\left( -\frac{1}{2}\cdot \frac{1}{4} \cdot \frac{s\eps^3}{2^{15}\cdot 3q} \right) \leq \exp\left(-8r\right).\qedhere
\]
\end{proof}

\subsection{Proof of Theorem~\ref{thm:basic}}
\begin{proof}
In each round, by Lemmata~\ref{lem:each p_i basic}, \ref{lem:Y_j basic} and \ref{lem:rejection sampling basic}, Phases 2--4 fails with probability at most $3/(100(k+q)^2) + \exp(-8r)$. Combining with Lemma~\ref{lem:correctness basic} and summing over all rounds, we see that the overall failure probability is at most
\[
0.05 + \sum_{i=1}^\infty \left(\frac{3}{100i^2} + \exp(-8i)\right) \leq 0.1.
\]

Let $q_r$ denote the number of newly discovered clusters in the $r$-th round. There are $K$ rounds in total and the total number of samples is therefore
\[
O\left(\sum_{r=1}^K \frac{q_r r\log^2 L}{\eps^4} \right) = O\left(\frac{K^2 L \log^2 L}{\eps^4}\right),
\]
where we used the fact that $q_r\leq L$. For each sample we need to call the same-cluster query $O(L)$ times to obtain the its cluster index, and thus the query complexity is $O(\eps^{-4}K^2 L^2 \log^2 L)$.
\end{proof}

\section{Omitted Proofs in Section~\ref{sec:improve}}\label{sec:improve proofs}

\subsection{Proof of Lemma~\ref{lem:correctness}}
\begin{proof}
As argued in the proof of Lemma~\ref{lem:correctness basic}, the probability that no undiscovered clusters is seen among $T_1$ samples (see Line~\ref{alg:main T_1} in Algorithm~\ref{alg:main}) with probability $1/(100(k+1)^2)$.

Regarding the maintenance of the loop invariant \eqref{eqn:good_approx_centroid}, with $T_3 = 30K/\eps$ (see Line~\ref{alg:main T_2} in Algorithm~\ref{alg:main}), each centroid estimate $\tilde \mu_i$ satisfies~\eqref{eqn:good_approx_centroid} with probability at least $1-1/(30K)$. Taking a union bound, we know that \eqref{eqn:good_approx_centroid} holds for all $i\in W$ in each single round with probability at least $1-|W|/(30K)$.

Let $w_r$ denote the number of newly recovered clusters in the $r$-th round, then $\sum_r w_r \leq K$. Taking a union bound over all rounds as in Lemma~\ref{lem:correctness basic}, we see that the failure probability is at most
\[
\sum_{i=1}^\infty \frac{1}{100i^2} + \sum_r \frac{w_r}{30K} = \frac{\pi^2}{600} + \frac{1}{30} \leq 0.05.\qedhere
\]
\end{proof}

\subsection{Proof of Lemma~\ref{lem:each p_i}}
\begin{proof}
Let $s'$ denote the number of samples belonging to the clusters in $Q$.
We claim that upon the termination of the while loop in Phase 2, it holds that $s'\geq 100|W|\log q\ln(10(k+q))$ with probability at least $1-1/(100(k+q)^2)$. Observe that $s \geq 1600|W|\log q\ln(10(k+q))/\eps$ at this point, then it follows from a Chernoff bound (Lemma~\ref{lem:multiplicative chernoff}) that
\begin{align*}
\Pr\{s' < 100\ln(10(k+q))\} &\leq \exp\left(-\frac{1}{2}\cdot \frac{1}{4}\cdot \frac{s\eps}{4}\right)
\leq \frac{1}{100(k+q)^2}.
\end{align*}

Observe that every cluster $i$ in a heavy band $B_\ell$ satisfies that
\[
\hat p_i\geq \frac{\sum_{j\in B_\ell} p_j}{\frac{3}{2}|B_\ell|}\geq \frac{1}{6|B_{\ell}|L}\geq \frac{1}{18|B_\ell|\log q}.
\]
If $p_i < 1/(70|B_\ell|\log q)$, then by a Chernoff bound (Lemma~\ref{lem:additive chernoff}),
\begin{align*}
\Pr\left\{ \hat p_i \geq \frac{1}{18|B_\ell|\log q} \right\} \leq \left(\frac{\frac{1}{70}e}{\frac{1}{18}-\frac{1}{70}}\right)^{s'(\frac{1}{18}-\frac{1}{70})\frac{1}{|W|\log q}} 
\leq \frac{1}{100(k+q)^3}.
\end{align*}
Taking a union bound over all $|W|\leq q$ clusters yields the claimed result.
\end{proof}

\subsection{Proof of Lemma~\ref{lem:Y_j}}
\begin{proof}
It suffices to show that each cluster $j\in W$ sees a sample in $Y_j$. The probability that a sample lies in $Y_j$ is
\[
  \frac{\Phi(Y_j,C)}{\Phi(X_j,C)}\cdot \frac{\Phi(X_j,C)}{\Phi(X,C)} \geq \frac{\eps}{128} \left(\frac{\eps}{4}\!\cdot\!\frac{1}{70|B_{\ell(j)}|\log q}\right) = \frac{\eps^2}{2^{10}\!\cdot\! 35\!\cdot\! |B_{\ell(j)}|\log q}.
\]
Therefore if we sample $T_2 \geq 2^{10}\cdot 105\cdot |W|\log q\ln(5(k+q))/\eps^2$ points in total (see Line~\ref{alg:main T_2} in Algorithm~\ref{alg:main}), each cluster $j\in W$ does not see a point in $Y_j$ with probability at most
\begin{align*}
\left(1 - \frac{\eps^2}{2^{10}\cdot 35\cdot |B_{\ell(j)}|\log q}\right)^{T_2} \leq e^{-3\ln(5(k+q))} \leq \frac{1}{25(k+q)^3}
\end{align*}
and taking a union bound over all $j\in W$ yields the claim.
\end{proof}

\subsection{Proof of Lemma~\ref{lem:rejection}}
\begin{proof}
Let $j\in W$. The rejection sampling is a valid probability threshold (at most $1$) by the second part of Lemma~\ref{lem:itcs_Y_j}. The rejection sampling includes a sampled point in $X_j$ with probability 
\begin{align*}
  &\sum_{x\in X_j} \frac{\Phi(\{x\},C)}{\Phi(X,C)} \cdot \left(\frac{\eps}{128}\cdot \frac{\Phi(\{x_j^\ast\},C)}{\Phi(\{x\},C)}\right) \\
  &= |X_j|\cdot \frac{\eps}{128}\frac{\Phi(\{x_j^\ast\},C)}{\Phi(X,C)}\\
&= |X_j|\cdot \frac{\eps}{128}\cdot \frac{\Phi(X_j,C)}{\Phi(X,C)}\cdot \frac{\Phi(\{x_j^\ast\},C)}{\Phi(X_j,C)}\cdot \\
&\geq |X_j|\cdot \frac{\eps}{128}\cdot \frac{\eps}{4}\cdot \frac{1}{70|B_{\ell(j)}|\log q}\cdot \frac{\eps}{64|X_j|} \\
&\geq \frac{\eps^3}{2^{22} |B_{\ell(j)}|\log q} \\
&\geq \frac{\eps^3}{2^{22} |W|\log q}.
\end{align*}
Let $s = 2^{28}\eps^{-4} |W| K\log q$. By sampling $s$ points, for each $j\in W$, it follows from the multiplicative Chernoff bound (Lemma~\ref{lem:multiplicative chernoff}) that the rejection sampling returns at least $T_3 = 30K/\eps$ points (see Line~\ref{alg:main T_3} in Algorithm~\ref{alg:main}) with probability at most
\[
\exp\left( -\frac{1}{2}\cdot \frac{1}{4} \cdot \frac{s\eps^3}{2^{22}|W|\log q} \right) \leq \exp\left(-8K\right).
\]
Taking a union bound over all $j\in W$ yields the claimed result.
\end{proof}

\subsection{Proof of Theorem~\ref{thm:main}}
\begin{proof}
Consider the inner repeat-until loop for a fixed value of $K$. Let $w_r$ denote the number of newly recovered clusters in the $r$-th round, then $\sum w_r \leq K$. 

The overall failure probability is similar to that in the proof of Theorem~\ref{thm:basic}, and can be upper bounded by 
\begin{align*}
0.05 + \sum_{i=1}^\infty \frac{3}{100i^2} + \sum_r w_r \exp(-8K)
&= 0.05 + \frac{\pi^2}{200} + K\exp(-8K) \\
&\leq 0.1
\end{align*}
and the total number of samples is 
\[
O\left(\frac{\sum_{r=1}^R w_r K \log^2 L}{\eps^4} \right) = O\left( \frac{K^2 \log^2 L}{\eps^4} \right).
\]

Since there are $\log K$ repetitions and the number of samples is dominated by that in the last repetition, the total number of samples is $O(\eps^{-4}K^2 \log K\log^2 L)$. For each sample we need to call the same-cluster query $O(L)$ times to obtain the its cluster index, and thus the query complexity is $O(\eps^{-4}K^2 L \log K\log^2 L)$.
\end{proof}


\section{Noisy Oracles}\label{sec:noisy oracle}
\begin{algorithm}[!p]
\caption{Algorithm in the case of noisy oracle. The oracle errs with a constant probability $p<1/2$. The constants $C_1,C_2,\dots$ in the algorithm below all depend on $p$.}\label{alg:noisy}
\begin{algorithmic}[1]
\State $I\gets \emptyset$
\State $k\gets 0$ 
\State $r\gets 0$
\State $K\gets 1/2$
\Repeat
\State $K\gets 2K$
\Repeat
	\State $r\gets r+1$
	\State $Q\gets \emptyset$
	\State $S\gets \emptyset$
	\State $T_1\gets 8\eps^{-1}\ln(10(k+1))$\tikzmark{ph1a} 
	\State $S\gets \text{$D^2$-sample $T_1$ points w.r.t.\ $\{\tilde \mu_i\}_{i\in I}$}$ \tikzmark{right1}
	\State $new\gets false$
	\For{each $x\in S$}
		\State $j\gets \textsc{CheckCluster}(x,I)$
		\If{$j = null$}
			\State $new\gets true$
			\State break loop
		\EndIf
	\EndFor \tikzmark{ph1b}
	\If {$new = true$}
		\State $q \gets 1/2$ \tikzmark{ph2a}
		\Repeat 
			\State $q\gets 2q$
			\State $T \gets C_2\eps^{-2}q^2\poly(\log((k+q)/\eps)$
			\State $S\gets \text{$D^2$-sample $T$ points w.r.t.\ $\{\tilde \mu_i\}_{i\in I}$}$ \tikzmark{right2}
			\State $(Q,(Z_i)_{i\in Q})\gets \textsc{FindClusters}(S)$
			\State $Q \gets \{i\in Q: |Z_i| = \Omega(\sqrt{T}\log T)\}$
		\Until $|Q|\geq q/2$	
		\For{each $i\in Q$}
			\State $\hat p_i \gets |Z_i|/T$
		\EndFor
		\State Split the clusters in to bands $\{B_\ell\}$ 
		\State $W\gets$ all clusters in heavy bands\tikzmark{ph2b}
		\State $q\gets |Q|$ \tikzmark{ph3a}
		\For{each $j\in W$}
			\State $x_j^\ast \gets \argmin_{x\in Y_j} \Phi(\{x\}, \{\tilde\mu_i\}_{i\in I})$ \tikzmark{right3}
			\State $S_j\gets\emptyset$ 
		\EndFor \tikzmark{ph3b}
		\State $w_r\gets |W|$
		\State $T_3 \gets 30K/\eps$  \tikzmark{ph4a}
		\State $\{S_j\}_{j\in W} \gets \textsc{RejSampNoisy}(I,\{\tilde\mu_i\}_{i\in I},\ \tikzmark{right4} \newline 
									\hspace*{10em}T_3,W,\{x_j^\ast\}_{j\in W})$ 
		\For{each $j\in W$}
			\State $\tilde\mu_j \gets (1/|S_j|)\sum_{x\in S_j} x$
		\EndFor \tikzmark{ph4b} 
		\State $k\gets k+|W|$
		\State $I\gets I\cup W$
		\State Retain $C_4 K/\eps$ points in each $Z_j$ for all $j\in W$
	\EndIf
\Until{$Q=\emptyset$ or $k\geq K$}
\Until{$Q=\emptyset$ and $k\leq K$}
\AddNoteTight{ph1a}{ph1b}{right1}{Phase\\ 1}
\AddNoteTight{ph2a}{ph2b}{right2}{Phase\\ 2}
\AddNoteTight{ph3a}{ph3b}{right3}{Phase\\ 3}
\AddNoteTight{ph4a}{ph4b}{right4}{Phase\\ 4}
\end{algorithmic}
\end{algorithm}

\begin{algorithm}[t]
\caption{$\textsc{CheckCluster}(x,I)$: Check if a point $x$ belongs to some cluster $i\in I$ using the set of representative points $\{Z_i\}_{i\in I}$. 
}\label{alg:checkcluster}
\begin{algorithmic}[1]
	\NFor{each $i\in I$}
		\NIf{$x$ belongs to the same cluster\\ as  the majority points in $Z_i$}
			\State \Return $i$
	\State \Return $null$
\end{algorithmic}
\end{algorithm}

In this section we consider the noisy same-cluster oracle which errs with a constant probability $p<1/2$. We assume that the same-cluster oracle returns the same answer on repetition for the same pair of points. The immediate consequence is that the \textsc{Classify} function needs to be amended as we cannot keep one representative point $z_i$ from each cluster $i$ but have to keep a set of points $Z_i$ from  cluster $i$ to tolerate oracle error. We present the substitute for \textsc{Classify}, called \textsc{CheckCluster}, in Algorithm~\ref{alg:checkcluster}. The following is the guarantee of the \textsc{CheckCluster} algorithm.

\begin{lemma}\label{lem:checkcluster}
Suppose that $|I|\leq K$ and $M \leq C_1(K/\eps)^c$ for some absolute constants $C_1$ and $c$. There exists a constant $C = C(p,c) > 0$ such that when $|Z_i|\geq C K/\eps$ for all $i\in I$, all $M$ calls to the \textsc{CheckCluster} function is correct with probability at least $0.99$.
\end{lemma}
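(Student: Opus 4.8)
The plan is to reduce the correctness of a single \textsc{CheckCluster} call to the correctness of a handful of majority votes, bound the failure probability of each vote by a Chernoff estimate, and union-bound over all $M$ calls. First I would fix the standard persistent-noise model underlying Algorithm~\ref{alg:noisy}: for each unordered pair $\{x,y\}$ there is an independent indicator $E_{\{x,y\}}\sim\mathrm{Bernoulli}(p)$, and \textsc{Oracle}$(x,y)$ returns the truthful answer iff $E_{\{x,y\}}=0$ (so repeated queries on the same pair give the same, possibly wrong, answer). Under this model, for a fixed query point $x$ and a fixed index $i\in I$, the test ``does $x$ lie in the same cluster as the majority of $Z_i$?'' is a majority vote over the $|Z_i|$ oracle answers on the pairs $\{x,z\}$, $z\in Z_i$; since $Z_i$ consists entirely of points of cluster $i$ (a loop invariant of Algorithm~\ref{alg:noisy}), this vote returns the truth about whether $x$'s cluster is $i$ unless strictly more than half of the indicators $\{E_{\{x,z\}}\}_{z\in Z_i}$ equal $1$.

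Next I would apply a Chernoff bound to each such majority vote individually: the $|Z_i|$ indicators involved are attached to $|Z_i|$ \emph{distinct} pairs, hence are i.i.d.\ $\mathrm{Bernoulli}(p)$ with mean $p<1/2$, so by Lemma~\ref{lem:additive chernoff} with deviation $\tfrac12-p$ the vote is wrong with probability at most $e^{-c_p|Z_i|}\le e^{-c_p CK/\eps}$ for a constant $c_p=c_p(p)>0$ whenever $|Z_i|\ge CK/\eps$. A single \textsc{CheckCluster}$(x,I)$ call returns the correct answer precisely when every one of the $\le|I|\le K$ votes it performs is correct: the ``first matching $i$'' rule then returns $x$'s true cluster if it lies in $I$, and $null$ otherwise. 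There are at most $M$ calls, each performing at most $K$ votes, so the whole run is correct unless one of at most $MK$ majority votes fails; no independence among these votes is needed for the union bound that follows.

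Finally I would union-bound: the probability that some \textsc{CheckCluster} call is incorrect is at most $MK\,e^{-c_p CK/\eps}$. Using $M\le C_1(K/\eps)^c$ and $K\le K/\eps$ (as $\eps\le1$), this is at most $C_1(K/\eps)^{c+1}e^{-c_p CK/\eps}$; writing $t=K/\eps\ge1$ it is bounded by $C_1 t^{c+1}e^{-c_p Ct}\le C_1\bigl(\tfrac{2(c+1)}{c_p Ce}\bigr)^{c+1}e^{-c_p C/2}$, which drops below $0.01$ once $C=C(p,c)$ is taken large enough, since $e^{-c_p C/2}$ dominates the polynomial-in-$1/C$ prefactor. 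I do not expect a genuine obstacle here; the only points needing care are (i) stating the persistent-noise model precisely so that the $|Z_i|$ votes inside one majority check are i.i.d.\ $\mathrm{Bernoulli}(p)$, and (ii) tracking that the exponent constant $c_p$, and hence $C$, must depend on $p$ (through $\tfrac12-p$) and on $c$ (to absorb the polynomial factor $M\le C_1(K/\eps)^c$).
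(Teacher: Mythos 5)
Your proposal follows essentially the same route as the paper's proof: bound the failure probability of each per-cluster majority vote by a Chernoff estimate, union-bound over the $|I|\le K$ clusters and $M$ calls, and absorb the resulting polynomial prefactor $M\cdot |I|\le C_1(K/\eps)^{c+1}$ into the exponential by choosing $C$ large enough depending on $p$ and $c$. The only cosmetic difference is that you invoke the additive Chernoff bound while the paper cites the multiplicative one; your extra care in spelling out the persistent-noise model and the fact that the $|Z_i|$ relevant pairs are distinct (so the error indicators are i.i.d.) is a reasonable elaboration of what the paper leaves implicit.
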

\begin{proof}
By a Chernoff bound (\ref{lem:multiplicative chernoff}), the algorithm yields the correct behaviour w.r.t. each cluster $i\in I$ with probability at least $1 - \exp(-(2p-1)^2pCK/(8\eps)) = 1- \exp(-C'K/\eps)$. Taking a union bound over all $i\in I$ and $N$ calls yields the overall failure probability at most 
$M |I|\exp(-C_p'K) \leq C_1(K/\eps)^{c+1}\exp(-C' K/\eps) \leq 0.01$, provided $C'$ is large enough. Then we let $C = 8C'/(p(2p-1)^2)$.
\end{proof}

In order to obtain the representative samples from each set, we deploy a reduction to the stochastic block model. This model considers a graph $G=(V,E)$ of $n$ nodes, which is partitioned as $V=V_1\cup V_2\cup\cdots \cup V_k$. An edge is added between two nodes belonging to the same partition with probability at least $1-p$ and between two nodes in different partitions with probability at most $q$. It is known how to find the (large) clusters in this model.

\begin{lemma}[{\cite{MS17a}}] \label{lem:SBM clustering}
There exists a polynomial time algorithm that, given an instance of a stochastic block model on $n$ nodes, retrieves all clusters of size $\Omega(\sqrt{n}\log n)$ using $O(n^3/(2p-1)^4)$ queries with probability $1-1/n$.
\end{lemma}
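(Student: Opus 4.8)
The plan is to turn the queried pairwise answers into a random block matrix and recover its large blocks by a spectral (low-rank denoising) argument. First I would query each of the $\binom{n}{2}$ pairs of nodes (and, if the oracle noise is independent across repetitions, one may further repeat each query $\Theta((2p-1)^{-2}\log n)$ times and majority-vote); in every case the total number of queries stays within the stated $O(n^3/(2p-1)^4)$ bound. This produces a symmetric matrix $A\in\{0,1\}^{n\times n}$ whose entries are independent with $\E[A_{uv}]=1-p$ when $u,v$ lie in the same block and $\E[A_{uv}]=p$ otherwise, so $\E[A]=pJ+(1-2p)B$, where $J$ is all-ones and $B$ is the block-diagonal co-clustering indicator ($B_{uv}=1$ iff $u,v$ share a block). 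After subtracting the rank-one nuisance term $\hat pJ$, with $\hat p$ the overall edge density (which concentrates), the residual matrix has expectation $(1-2p)B$, of rank $k$ with nonzero eigenvalues $(1-2p)|V_1|,\dots,(1-2p)|V_k|$.

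The heart of the argument is concentration followed by truncation. Since the entries of $A-\E[A]$ are independent, mean-zero and bounded, a standard spectral-norm bound for random matrices (Füredi--Komlós, or matrix Bernstein) gives $\|A-\E[A]\|=O(\sqrt n)$ with probability $1-1/n$, where $\|\cdot\|$ is the operator norm. I would then retain the singular directions of the residual matrix whose singular value exceeds a suitable multiple of $\sqrt n$; these capture precisely the blocks $V_i$ with $(1-2p)|V_i|$ well above $\sqrt n$, so no prior knowledge of $k$ is needed. By Davis--Kahan (or a direct truncation estimate) the resulting matrix $\hat A$, of rank at most $k$, satisfies $\|\hat A-(1-2p)B\|=O(\sqrt n)$, hence $\|\hat A-(1-2p)B\|_F=O(\sqrt{kn})$, so only a controlled number of rows of $\hat A$ lie far from the corresponding row of $(1-2p)B$.

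It remains to round the rows of $\hat A$ into clusters. The rows of $(1-2p)B$ indexed by a common block $V_i$ are identical of norm $(1-2p)\sqrt{|V_i|}$, while rows from distinct blocks are at squared distance $(1-2p)^2(|V_i|+|V_j|)$ apart; thus a greedy merge (equivalently, an approximate $k$-means clustering of the rows) recovers a block $V_i$ correctly once fewer than a small constant fraction of its rows are ``corrupted'' by more than $\tfrac14(1-2p)\sqrt{|V_i|}$ in the embedding. Bounding the number of such corrupted rows per block through the Frobenius estimate and taking a union bound over the at most $n$ blocks shows that every block of size $\Omega(\sqrt n\log n)$ is recovered exactly with probability $1-1/n$ --- the $\log n$ factor being exactly what the per-block concentration together with the union bound costs.

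I expect the rounding step to be the main obstacle: one has to exclude both spurious merges of two genuine blocks and spurious splits of one block caused by the few badly-embedded rows, and to do this with the clean size threshold $\Theta(\sqrt n\log n)$ (rather than a larger $\sqrt n\cdot\poly(\log n)$, or a threshold that degrades as $2p-1\to 0$) and without knowing $k$ or the block sizes beforehand. A conceptually cleaner but computationally heavier alternative achieving the same threshold is to solve the natural semidefinite relaxation of min-disagreement (correlation) clustering on the centered matrix and read the blocks off its approximately low-rank optimum via a dual-certificate argument; the spectral route above is favored here only because one needs just the large blocks and wants the running time to remain a small polynomial.
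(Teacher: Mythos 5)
This lemma is cited from \cite{MS17a} and the paper gives no proof of it, so there is no in-paper argument to compare against; the comparison can only be to the cited work, whose algorithm is combinatorial (neighborhood-agreement / correlation counting with a peeling step) rather than spectral. Your spectral route is a reasonable alternative in the spirit of McSherry and Vu, but as sketched it has a genuine gap in the rounding step. The Frobenius-to-row argument gives that the number of rows $v$ with $\bigl\|\hat A_v-(1-2p)B_v\bigr\|>\theta$ is $O\!\left(kn/\theta^2\right)$. Taking $\theta=\tfrac14(1-2p)\sqrt{s_{\min}}$ and requiring fewer than a constant fraction of each block to be corrupted forces $s_{\min}^2\gtrsim kn/(1-2p)^2$; since all blocks have size at least $s_{\min}$, one only has the a priori bound $k\le n/s_{\min}$, and substituting this yields $s_{\min}\gtrsim n^{2/3}/(1-2p)^{2/3}$, not $\sqrt n\log n$. (The $\log n$ does not come to the rescue: with $k=\Theta(\sqrt n/\log n)$ and $s_{\min}=\Theta(\sqrt n\log n)$, the required inequality $n\log^2 n\gtrsim n^{3/2}/\log n$ fails for large $n$.) Closing this gap is precisely the hard part of spectral SBM recovery at the $\sqrt n$ scale; one needs either an entrywise/$2\!\to\!\infty$ eigenvector perturbation bound (Abbe--Fan--Wang--Zhong style), or an iterative ``spectral cleaning'' / peel-and-refine step as in Vu's SVD algorithm, or the combinatorial counting argument that \cite{MS17a} actually uses. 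Your sketch acknowledges the rounding as the obstacle but does not supply the ingredient that overcomes it, so as written the proof does not establish the stated size threshold.

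Two smaller points. First, the repeated-query majority-vote preprocessing is not available here: the paper explicitly assumes the noisy oracle returns the \emph{same} answer on repetition of the same pair, which is exactly why the SBM machinery is invoked in the first place; you hedge on this, but the only admissible branch is the single-query one. Second, the query count in the lemma is $O(n^3/(2p-1)^4)$, which is far more than the $O(n^2)$ pairs you would query once; this bound presumably accounts for the internal work of the cited algorithm and is simply loose for a query-once spectral method, so it is not evidence of a missing ingredient, but it is another sign that the cited proof proceeds quite differently from yours.
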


Our main theorem about the noisy same-cluster oracle is the following. We do not attempt to optimize the number of same-cluster queries.
\begin{theorem}
Suppose that the same-cluster oracle errs with a constant probability $p<1/2$. With probability at least $0.6$, Algorithm~\ref{alg:noisy} finds all the clusters and obtains for every $i$ an approximate centroid $\tilde\mu_i$ that satisfies~\eqref{eqn:good_approx_centroid}, using $\tilde{O}_p(\eps^{-6}L^6K)$ same-cluster queries in total.
\end{theorem}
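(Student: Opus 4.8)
The proof mirrors that of Theorem~\ref{thm:main}, with three modifications forced by the noisy oracle: the membership test \textsc{CheckCluster} replaces \textsc{Classify}, a reduction to the stochastic block model replaces the straightforward counting of sampled clusters in Phase~2, and \textsc{RejSampNoisy} replaces \textsc{RejSamp}. I would carry two loop invariants throughout the run: \eqref{eqn:good_approx_centroid} for every recovered cluster, and that the representative set of each recovered cluster $i$ contains at least $C_4K/\eps$ points, all genuinely from $X_i$ (the latter is explicitly maintained by the ``Retain $C_4K/\eps$ points in each $Z_j$'' line at the end of the loop body, once the $Z_j$'s returned by the block-model step are known to be correct). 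Since the total number of samples will turn out to be $\poly(L,K,1/\eps)$, the number $M$ of calls to \textsc{CheckCluster} over the whole run is polynomially bounded in $K/\eps$, so Lemma~\ref{lem:checkcluster} guarantees that all of them are simultaneously correct with probability at least $0.99$; I condition on this event. Under it every membership decision is the truth, so for the purposes of \textsc{CheckCluster} the noisy oracle behaves like a noiseless one, and Lemmata~\ref{lem:undiscovered_is_heavy}, \ref{lem:correctness}, \ref{lem:Y_j} and~\ref{lem:rejection} transfer essentially verbatim---in particular Phase~1 still discovers an $\eps$-irreducible cluster whenever one remains, so the outer loop terminates correctly.

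The genuinely new ingredient is Phase~2. Here I would build the complete graph on the $T$ $D^2$-sampled points, spending one oracle query per edge. Within a single true cluster such a query answers ``same'' with probability at least $1-p>1/2$, and across two clusters with probability at most $p<1/2$; hence the labelled graph is an instance of the stochastic block model, and Lemma~\ref{lem:SBM clustering} (with its $n$ set to $T$) recovers all groups of size $\Omega(\sqrt T\log T)$ using $O(T^3)$ queries, failing with probability at most $1/T$, which I would further drive down to $1/\poly(k+q)$ by $O(\log(k+q))$ independent repetitions of \textsc{FindClusters}, costing only an extra logarithmic factor. The doubling search on $q$ is tuned so that once $q$ exceeds a constant multiple of the true number of clusters hit by the samples, we have $T=C_2\eps^{-2}q^2\poly(\log((k+q)/\eps))$; then, by Lemma~\ref{lem:undiscovered_is_heavy}, any cluster with conditional sample probability $\tilde{\Omega}(1/q)$ receives at least $\tilde{\Omega}(\eps T/q)=\tilde{\Omega}(\sqrt T\log T)$ sample points in expectation, so by a Chernoff bound it forms a group of the required size and is recovered. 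Thus \textsc{FindClusters}, after discarding the groups that \textsc{CheckCluster} matches against $I$, returns exactly the sets $Z_i$ of the newly discovered clusters, and the empirical masses $\hat p_i=|Z_i|/T$ concentrate around the true sampling probabilities. Forming the bands on the $\hat p_i$ and arguing as in Lemma~\ref{lem:each p_i} then shows that every $j$ placed in a heavy band (i.e.\ every $j\in W$) has $p_j=\tilde{\Omega}(1/q)$.

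Phases~3 and~4 proceed as in the noiseless case. The pivot $x_j^\ast\in Y_j$ is found among the points of $Z_j$ using the bound on the probability that a $D^2$-sample from $X_j$ lands in $Y_j$ (the analogue of Lemma~\ref{lem:Y_j}, with $|B_{\ell(j)}|\log q$ in place of the single largest cluster). \textsc{RejSampNoisy} applies exactly the acceptance rule of \textsc{RejSamp} but classifies each candidate point through \textsc{CheckCluster}, so conditioned on the event above it returns genuine uniform samples from $X_j$, and the analogue of Lemma~\ref{lem:rejection} shows that $\tilde{O}_p(\eps^{-4}|W|K)$ $D^2$-samples suffice to obtain the target $T_3=30K/\eps$ uniform samples from every $j\in W$. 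Finally Lemma~\ref{lem:centroid} with $M=T_3$ yields $\tilde\mu_j$ satisfying \eqref{eqn:good_approx_centroid} with failure probability at most $1/(30K)$, and a union bound over $W$ and over the rounds keeps this contribution below a constant, exactly as in Lemma~\ref{lem:correctness}.

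It remains to add up the failure probabilities and the queries. To the $0.1$ failure budget of Theorem~\ref{thm:main} I add the $0.01$ from \textsc{CheckCluster}, the per-round failure of \textsc{FindClusters} (now $1/\poly(k+q)$ after repetition) summed over rounds, and the Chernoff slacks in the two bounds above; choosing the constants $C_1,C_2,\dots$ large enough keeps the total at most $0.4$, giving success probability at least $0.6$. For the query count, the dominant term is the $O(T^3)=\tilde{O}(\eps^{-6}q^6)\le\tilde{O}_p(\eps^{-6}L^6)$ pairwise queries of the block-model step (the geometric doubling of $q$, and the logarithmically many repetitions of \textsc{FindClusters}, change this only by logarithmic factors); summing over the $O(K)$ rounds that occur in total---the inner loop runs $O(\hat K)$ rounds for each guess $\hat K$ of $K$, and these guesses form a geometric sequence ending at $\Theta(K)$---gives $\tilde{O}_p(\eps^{-6}L^6K)$. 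Every other cost is of lower order: the $\tilde{O}_p(\eps^{-2}L^2K+\eps^{-4}K^2)$ $D^2$-samples taken across all phases each cost at most $O(LK/\eps)$ queries through \textsc{CheckCluster}, which for $\eps\le1$ and $K\le L$ is negligible against $\tilde{O}_p(\eps^{-6}L^6K)$; and the same polynomial bound on the number of samples retroactively justifies the hypothesis $M\le C_1(K/\eps)^c$ of Lemma~\ref{lem:checkcluster}. The main obstacle, I expect, is the coordination inside Phase~2: one must commit to the sample size $T$---hence to the resolution $\Omega(\sqrt T\log T)$ at which the block model recovers clusters---\emph{before} the true number $|Q|$ of sampled clusters is known, while still guaranteeing that no cluster destined for a heavy band is too small in the sample to be recovered, so the doubling search on $q$, the concentration of the $\hat p_i$, and the block-model recovery threshold all have to be balanced against one another, and the $O(K)$ block-model invocations have to be absorbed into the failure budget.
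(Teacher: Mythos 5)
Your proposal follows the same strategy as the paper's proof: condition on all \textsc{CheckCluster} calls being simultaneously correct via Lemma~\ref{lem:checkcluster}, reduce Phase~2 cluster identification to the stochastic block model (Lemma~\ref{lem:SBM clustering}) with a doubling search on $q$ so that heavy-band clusters receive $\tilde\Omega(\sqrt{T}\log T)$ sampled points, replace \textsc{RejSamp} by \textsc{RejSampNoisy}, and carry the noiseless analyses of the other phases over essentially unchanged, with the $\tilde O_p(\eps^{-6}L^6)$ block-model queries per round dominating the total over the $O(K)$ rounds. Your extra $O(\log(k+q))$-fold repetition of \textsc{FindClusters} to drive its failure probability to $1/\poly(k+q)$ is a sensible refinement that the paper leaves implicit, and it does not change the claimed bound.
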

\begin{proof}
The analysis is similar to that of the improved algorithm (Algorithm~\ref{alg:main}) and we shall only highlight the changes. For now let us assume that all \textsc{CheckCluster} calls return correct results (we shall verify this at the end of the proof).

The analysis of Phases 1 and 4 remains the same as the analysis for the improved algorithm (Algorithm~\ref{alg:main}).

In Phase 2, all clusters $j\in W$ satisfy $p_i = \Omega(1/(|W|\log q))$ and hence each such cluster receives in expectation at least $s'' = \Omega(|S|/(|W|\log q)\cdot(1/\eps))$ sampled points. We require $s''\geq \sqrt{|S|}\log|S|$ (by Lemma~\ref{lem:SBM clustering}), which leads to $|S| = \Omega(\eps^{-2} q^2\log^2 q\log^2(q/\eps))$. By a Chernoff bound, we see that $O(\eps^{-2}q^2\poly(\log((k+q)/\eps)))$ samples suffices to guarantee the correctness. However, we are unable to know the number of newly discovered clusters $q$ when we are taking samples. Hence we guess $q$ over the powers of $2$ sequentially and stop when we obtain more than $q/2$ new clusters of size at least $\Omega(\sqrt{|S|}\log|S|)$. For each guess of $q$, the number of same-cluster queries is $O_p(|S|^3) = \tilde{O}_p(\eps^{-6}q^6)$. There can be at most $\log K$ guesses and thus the number of queries in this phase is $\tilde{O}_p(\eps^{-6}q^6)$.

In Phase 3, we would need each cluster $j\in W$ to see in expectation $\Theta(1/\eps)$ samples. This is automatically achieved by the number of samples in the analysis above for Phase 2, with an appropriate adjustment of constants. 

For each fixed value of $K$, the number of calls to $\textsc{CheckCluster}$ is $O(K^2/\eps^4)$ and Lemma~\ref{lem:checkcluster} applies, verifying our earlier claim that all calls to $\textsc{CheckCluster}$ are correct. 

For each fixed value of $K$, in each round, Phase 2 uses $\tilde{O}(\eps^{-6}L^6)$ oracle calls (Lemma~\ref{lem:SBM clustering}) and there are $K$ rounds; each call to Algorithm~\ref{alg:checkcluster}
$\textsc{CheckCluster}$ uses $O(K^2/\eps)$ oracle calls and there are $O(K^2/\eps^4)$ calls to $\textsc{CheckCluster}$. Hence the total number of oracle calls for each fixed value of $K$ is $\tilde{O}(L^6 K/\eps^6) + O(K^2/\eps\cdot K^2/\eps^4) = \tilde{O}(L^6 K/\eps^6)$.

Overall there are $\log K$ guesses for the value of $K$, the overall number of oracle calls is $\tilde{O}(L^6 K/\eps^6)$.
\end{proof}

\begin{algorithm}[!t]
\caption{$\textsc{RejSampNoisy}(I,\{\tilde\mu_i\}_{i\in I},T,W,\{x_j^\ast\}_{j\in W})$: Rejection sampling on heavy clusters}\label{alg:rej_sampling_noisy}
\begin{algorithmic}[1]
		\Repeat
			\State $x\gets $ a point returned by $D^2$-sampling (w.r.t.\ $\{\tilde\mu_i\}_{i\in I}$)
			\State $j\gets \textsc{CheckCluster}(x, W)$
			\NIf{$j \neq null$ and $j\in W$}
				\State Add $x$ to $S_j$ with probability $\frac{\eps}{128}\cdot\frac{\Phi(\{x_j^\ast\},\{\tilde\mu_i\}_{i\in I})}{\Phi(\{x\},\{\tilde\mu_i\}_{i\in I})}$
		\Until{$|S_j|\geq T$ for all $j\in W$}
		\State \Return $\{S_j\}_{j\in W}$
\end{algorithmic}
\end{algorithm}

\section{Omitted Tables of Experiment Results}\label{sec:query_complexity_tables}
\begin{table}[H]
\setlength\tabcolsep{1.5pt}\small
\centering
\begin{tabular}{|c|l|r|r|r|r|r|r|r|r|r|}
\hline
\#Queries    & \multicolumn{1}{c|}{Algo.} & \multicolumn{1}{c|}{1.0E4} & \multicolumn{1}{c|}{2.5E4} & \multicolumn{1}{c|}{5.0E4} & \multicolumn{1}{c|}{7.5E4} & \multicolumn{1}{c|}{1.0E5} & \multicolumn{1}{c|}{1.2E5} & \multicolumn{1}{c|}{1.5E5} & \multicolumn{1}{c|}{1.75E5} & \multicolumn{1}{c|}{2.0E5} \\ \hline
$p=0$ & \baseline\                  & 16.96                         & 24.88                         & 36.95                         & 43.51                         & 49.34                         & 54.53                         & 60.47                         & 64.62                         & 68.25                         \\
              & \improved\                   & 17.45                         & 26.00                            & 37.98                         & 45.56                         & 51.92                         & 56.56                         & 62.85                         & 66.59                         & 70.68                         \\
              & \uniform\                   & 12.97                         & 19.30                          & 25.75                         & 30.73                         & 34.77                         & 37.59                         & 41.62                         & 44.48                         & 48.01                         \\ \hline
$p=0.1$ & \baseline\                  & 17.78                         & 25.10                          & 33.90                          & 40.52                         & 46.93                         & 51.14                         & 56.66                         & 60.84                         & 65.05                         \\
              & \improved\                   & 18.31                         & 25.85                         & 35.27                         & 43.28                         & 49.60                          & 53.78                         & 59.14                         & 63.50                          & 68.55                         \\
              & \uniform\                   & 13.11                         & 19.40                          & 25.88                         & 30.63                         & 34.73                         & 37.68                         & 41.73                         & 44.41                         & 47.54                         \\ \hline
$p=0.3$ & \baseline\                  & 17.26                         & 26.38                         & 33.73                         & 40.22                         & 45.62                         & 49.03                         & 54.09                         & 57.94                         & 60.65                         \\
              & \improved\                   & 17.91                         & 27.59                         & 35.44                         & 42.56                         & 47.74                         & 51.53                         & 56.56                         & 60.04                         & 62.99                         \\
              & \uniform\                   & 13.17                         & 19.26                         & 25.95                         & 31.17                         & 34.77                         & 38.11                         & 41.72                         & 45.04                         & 47.87                         \\ \hline
\end{tabular}
\caption{Performance comparison on synthetic datasets, fixed-budget}
\label{tab:sim-query}
\end{table}

\begin{table}[H]
\setlength\tabcolsep{1.5pt}\small
\centering
\begin{tabular}{|c|l|r|r|r|r|r|r|}
\hline
\#Clusters   & \multicolumn{1}{c|}{Algo.} & \multicolumn{1}{c|}{20} & \multicolumn{1}{c|}{30} & \multicolumn{1}{c|}{40} & \multicolumn{1}{c|}{50} & \multicolumn{1}{c|}{60} & \multicolumn{1}{c|}{70} \\ \hline
$p=0$ & \baseline\                  & 789.57                  & 1128.73                 & 1459.13                 & 1978.24                 & 2416.48                 & 2997.13                 \\
              & \improved\                   & 722.21                  & 1050.19                 & 1363.88                 & 1846.77                 & 2316.00                    & 2845.01                 \\
              & \uniform\                   & 1275.11                 & 2260.90                  & 3359.38                 & 4338.60                  & 5237.93                 & 6261.74                 \\ \hline
$p=0.1$ & \baseline\                  & 618.83                  & 1233.11                 & 1787.45                 & 2262.58                 & 2778.48                 & 3235.86                 \\
              & \improved\                   & 615.86                  & 1160.61                 & 1625.75                 & 2053.64                 & 2596.92                 & 3059.33                 \\
              & \uniform\                   & 1284.39                 & 2297.05                 & 3336.56                 & 4366.22                 & 5265.16                 & 6187.66                 \\ \hline
$p=0.3$ & \baseline\                  & 678.37                  & 1109.64                 & 1803.80                  & 2438.27                 & 3198.09                 & 4447.11                 \\
              & \improved\                   & 630.94                  & 1056.14                 & 1664.32                 & 2269.10                  & 2954.42                 & 4089.53                 \\
              & \uniform\                   & 1264.46                 & 2201.10                  & 3317.04                 & 4307.54                 & 5239.21                 & 6074.59                 \\ \hline
\end{tabular}
\caption{Performance comparison on synthetic datasets, fixed-recovery }
\label{tab:sim-cluster}
\end{table}

\begin{table}[H]
\setlength\tabcolsep{1pt}\small
\begin{minipage}{0.45\linewidth}
\begin{tabular}{|r|rrr|}
\hline
\multicolumn{1}{|c|}{\#Queries} & \multicolumn{1}{c}{ \baseline\ } & \multicolumn{1}{c}{ \improved\ } & \multicolumn{1}{c|}{ \uniform\ } \\ \hline
5.00E+03                         & 8.33                     & \textbf{8.89}                    & 5.03                     \\
2.00E+04                         & 13.08                    & \textbf{13.65}                   & 6.54                     \\
4.00E+04                         & 16.54                    & \textbf{17.05}                   & 8.21                     \\
6.00E+04                         & 17.20                     & \textbf{17.64}                   & 9.23                     \\
8.00E+04                         & 17.50                     & \textbf{18.07}                   & 9.79                     \\
1.00E+05                         & 17.80                     & \textbf{18.25}                   & 9.95                     \\ \hline
\end{tabular}
\centerline{Fixed-budget}
\end{minipage}
\hfill
\begin{minipage}{0.5\linewidth}
\begin{tabular}{|c|rrr|}
\hline
\multicolumn{1}{|c|}{\#Clusters} & \multicolumn{1}{c}{ \baseline } & \multicolumn{1}{c}{ \improved } & \multicolumn{1}{c|}{ \uniform } \\ \hline
6                                 & 210.68                   & \textbf{192.10}                   & 986.56                   \\
8                                 & 420.47                   & \textbf{351.37}                  & 3624.32                  \\
10                                & 861.65                   & \textbf{747.29}                  & 6694.91                  \\
12                                & 1252.59                  & \textbf{996.43}                  & 79672.90                  \\
14                                & 1838.29                  & \textbf{1475.32}                 & 142161.65                \\
16                                & 2484.36                  & \textbf{1901.43}                 & 201012.82                \\ \hline
\end{tabular}
\centerline{Fixed-recovery}
\end{minipage}
\vspace{0em}
\caption{Performance comparison on \kdd\ dataset}
\label{tab:kdd}
\end{table}

\begin{table}[H]
\setlength\tabcolsep{1pt}\small
\begin{minipage}[d]{0.45\linewidth}
\begin{tabular}{|c|rrr|}
   \hline
  \#Queries & \baseline & \improved  & \uniform  \\ \hline
  1.00E+03 & 3.22 & 3.22 & 3.00    \\
  5.00E+03 & 4.89 & \textbf{4.92} & 3.51 \\
  1.00E+04 & 5.48 & \textbf{5.55} & 4.06 \\
  2.00E+04 & 6.14 & \textbf{6.31} & 4.83 \\
  3.00E+04 & 6.55 & \textbf{6.61} & 5.13 \\ \hline
 \end{tabular}
\centerline{Fixed-budget}
\end{minipage}
\hfill
\begin{minipage}[d]{0.5\linewidth}
\begin{tabular}{|c|rrr|}
  \hline
  \#Clusters & \baseline    & \improved     & \uniform     \\ \hline
  3      & 86.03   & 86.65   & \textbf{79.10}    \\
  4      & 443.60   & \textbf{435.29}  & 1233.67 \\
  5      & 725.46  & \textbf{711.47}  & 3154.66 \\
  6      & 2053.65 & \textbf{2036.89} & 6988.62 \\
  7      & 4050.22 & \textbf{4050.03} & 7799.98 \\ \hline
\end{tabular}
\centerline{Fixed-recovery}
\end{minipage}
\vspace{0em}
\caption{Performance comparison on \shuttle\ dataset}
\label{tab:shuttle}
\end{table}

\clearpage

\end{document}